\title{Stochasticity of Deterministic Gradient Descent: \\ \hspace{-47pt} \qquad Large Learning Rate for Multiscale Objective Function}
\newcommand{\tao}[1]{[{\color{cyan} Tao: #1}]}
\newcommand{\kong}[1]{[{\color{red} Kong: #1}]}
\theoremstyle{definition}
\newtheorem{definition}{Definition}
\newtheorem{remark}{Remark}
\theoremstyle{plain}
\newtheorem{theorem}{Theorem}
\newtheorem{proposition}[theorem]{Proposition}
\newtheorem{corollary}[theorem]{Corollary}
\newtheorem{lemma}[theorem]{Lemma}
\newtheorem{example}{Example}
\newtheorem{condition}{Condition}
\author{%
  Lingkai Kong\\
  School of Mathematics \\
  University of Science and Technology of China \\
  and Georgia Institute of Technology \\
  \And
  Molei Tao \\
  School of Mathematics \\
  Georgia Institute of Technology \\
  \url{mtao@gatech.edu}
}
\begin{document}

\maketitle

\begin{abstract}%
\noindent This article suggests that deterministic Gradient Descent, which does not use any stochastic gradient approximation, can still exhibit stochastic behaviors. In particular, it shows that if the objective function exhibit multiscale behaviors, then in a large learning rate regime which only resolves the macroscopic but not the microscopic details of the objective, the deterministic GD dynamics can become chaotic and convergent not to a local minimizer but to a statistical distribution. In this sense, deterministic GD resembles stochastic GD even though no stochasticity is injected. A sufficient condition is also established for approximating this long-time statistical limit by a rescaled Gibbs distribution, which for example allows escapes from local minima to be quantified. Both theoretical and numerical demonstrations are provided, and the theoretical part relies on the construction of a stochastic map that uses bounded noise (as opposed to Gaussian noise).
\end{abstract}

\section{Introduction}


Among first-order optimization methods which are a central ingredient of machine learning, arguably the most used is gradient descent method (GD), or rather one of its variants, stochastic gradient descent method (SGD). Designed for objective functions that sum a large amount of terms, which for instance can originate from big data, SGD introduces a randomization mechanism of gradient subsampling to improve the scalability of GD (e.g., \cite{zhang2004solving, moulines2011non, roux2012stochastic}). Consequently, the iteration of SGD, unlike GD, is not deterministic even when it is started at a fixed initial condition. In fact, if one fixes the learning rate (LR) in SGD, the iteration does not converge to a local minimizer like in the case of GD; instead, it converges to a statistical distribution with variance controlled by the LR (e.g., \cite{borkar1999strong,mandt2017stochastic,li2017stochastic}). Diminishing LR was thus proposed to ensure that SGD remains as an optimization algorithm (e.g., \cite{robbins1951stochastic}). On the other hand, more recent perspectives include that the noise in SGD may actually facilitate escapes from bad local minima and improve generalization (see Sec.\ref{sec_relatedWork} and references therein). In addition, non-diminishing LRs often correspond to faster computations, and therefore are of practical relevance\footnote{Optimizing LR is an important subarea but out of our scope; see e.g., \cite{smith2017cyclical} and references therein.}. Meanwhile, GD does not need the LR to be small in order to reduce the stochasticity, although in practices the LR is often chosen small enough to fully resolve the landscape of the objective, corresponding to a stability upper bound of $1/L$ under the common $L$-smooth assumption of the objective function.

We consider deterministic GD\footnote{Despite of the importance of SGD, there are still contexts in which deterministic GD is worth studying; e.g., for training with scarce data, for low-rank approximation (e.g., \cite{tu2015low}) and robust PCA (e.g., \cite{yi2016fast}), and for theoretical understandings of large neural networks (e.g, \cite{du2018algorithmic,du2019gradient}).} with fixed large LR, based on the conventional belief that it optimizes more efficiently than small LR. The goal is to understand if large LR works, and if yes, in what sense. We will show that in a specific and yet not too restrictive setup, if LR becomes large enough (but not arbitrarily large), GD no longer converges to a local minimum but instead a statistical distribution. This behavior bears significant similarities to SGD, including (under reasonable assumptions):
\begin{itemize}
    \item
        starting with an arbitrary initial condition, the empirical distribution of GD iterates (collected along discrete time) converges to a specific statistical distribution, which is not Dirac but almost a rescaled Gibbs distribution, just like SGD;
    \item
        starting an ensemble of arbitrary initial conditions and evolving each one according to GD, the ensemble, collected at the same number of iterations, again converges to the same almost Gibbs distribution as the number of iteration increases, also like SGD.
\end{itemize}
Their difference, albeit obvious, should also be emphasized:
\begin{itemize}
    \item 
        GD is deterministic, and the same constant initial condition will always lead to the same iterates. No filtration is involved, and unlike SGD the iteration is not a stochastic process.
\end{itemize}
In this sense, GD with large LR works in a statistical sense. One can obtain stochasticity without any algorithmic randomization! Whether this has implications on generalization is beyond the scope of this article, but large LR does provide a mechanism for escapes from local minima. We'll see that microscopic local minima can always be escaped, and sometimes macroscopic local minima too.

\subsection{Main Results}
\label{sec_mainResults}
How is stochasticity generated out of determinism? Here it is due to chaotic dynamics. To further explain, consider an objective function $f:\mathbb{R}^d \rightarrow \mathbb{R}$ that admits a macro-micro decomposition 
\begin{equation}
    f(x):=f_0(x)+f_{1,\epsilon}(x)
    \label{eq_multiscaleLoss}
\end{equation}
where $0 < \epsilon \ll 1$, $f_0, f_{1,\epsilon} \in \mathcal{C}^2(\mathbb{R}^d)$, and the microscopic $f_{1,\epsilon}$ satisfies the following conditions.
\begin{condition}
    There exists a bounded nonconstant random variable (r.v.) $\zeta$, with range in $\mathbb{R}^d$ and $\mathbb{E}\zeta=0$, such that: $\forall \epsilon>0$ and $\forall x\in\mathbb{R}^d$, there exists a positive measured set $\Gamma_{x,\epsilon}\subset B(0,\delta(\epsilon))$ with $\lim_{\epsilon\downarrow 0}\delta(\epsilon)= 0$, such that the r.v. uniformly distributed on $\Gamma_{x,\epsilon}$, denoted by $Y_{x,\epsilon}$, satisfies $\nabla f_{1,\epsilon}(x+Y_{x,\epsilon})\stackrel{w}{\longrightarrow}-\zeta$ uniformly with respect to $x$ as $\epsilon\rightarrow0$. Assume without loss of generality that $\mathbb{E}\zeta=0$ (nonzero mean can be absorbed into $f_0$).
	\label{cond_1}
\end{condition}
\paragraph{Notation:} Throughout this paper `$w$' means weak convergence: a sequence of random variables $\{X_n\}_{n=1}^\infty$ has a random variable $X$ as its weak limit, if and only if for any compactly supported test function $g\in\mathcal{C}^\infty(\mathbb{R}^d)$, $\mathbb{E}g(X_n)-\mathbb{E}g(X)\rightarrow0$ as $n\rightarrow\infty$.
\begin{condition} 
	\label{cond_2}$\epsilon\nabla^2f_{1,\epsilon}$ is uniformly bounded as $\epsilon\to 0$, and $\exists m\in\mathbb{R}$, s.t. for any bounded rectangle $\Gamma\subset\mathbb{R}^d$ whose area $|\Gamma|>0$, $\mathbb{E}\left[ \ln\|\epsilon\nabla^2f_{1,\epsilon}(U_\Gamma)\|_2 \right] \rightarrow m$, where $U_\Gamma$ is a uniform r.v. on $\Gamma$.
\end{condition} 

\begin{wrapfigure}{R}{0.3\textwidth}
\centering
    \vspace{-2pt}
    \includegraphics[width=0.3\textwidth]{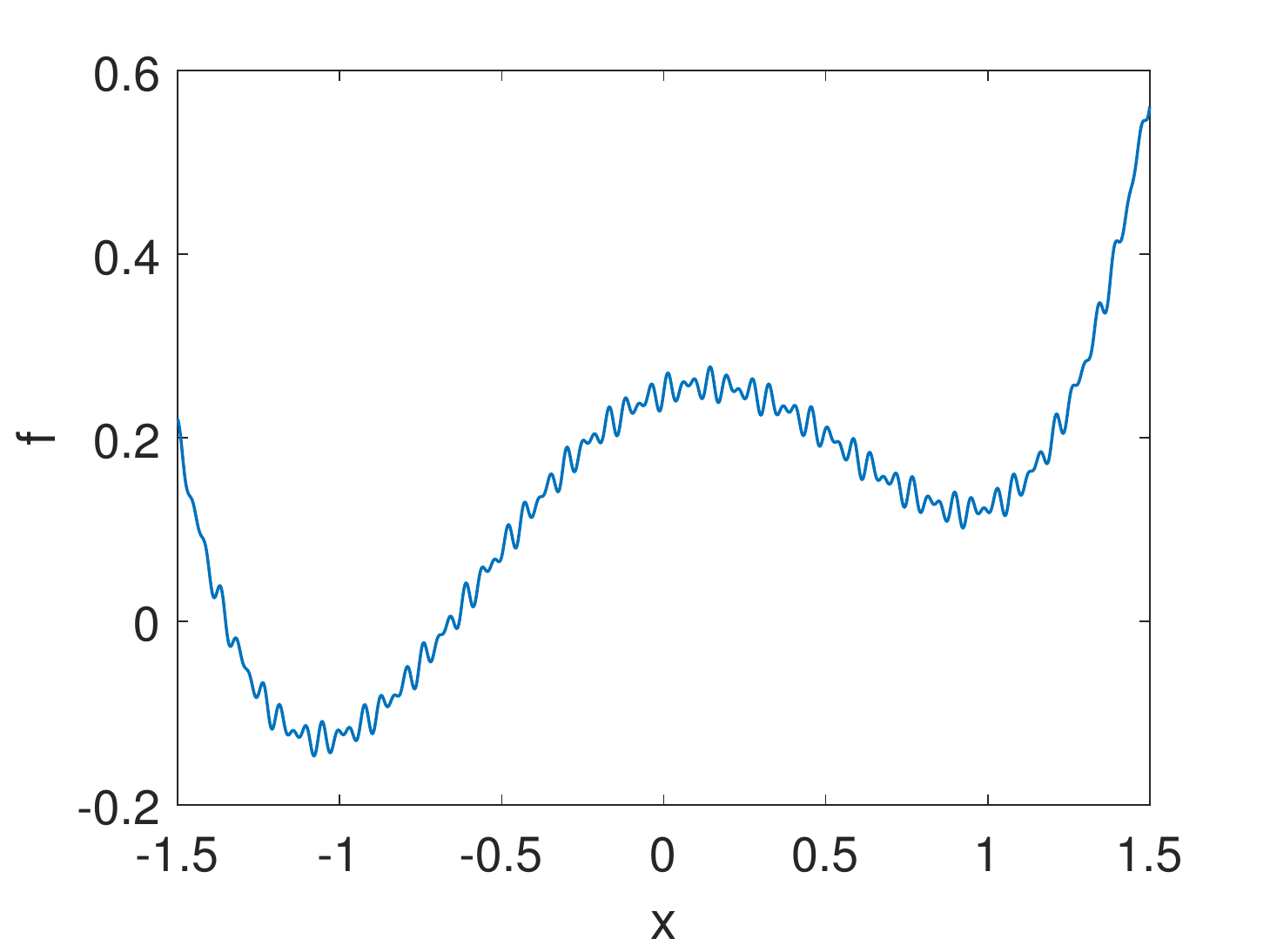}
    \vspace{-20pt}
    \caption{A multiscale function, $f(x)=(x^2-1)^2/4+x/8+\epsilon\left(\sin\left(x/\epsilon\right)+\sin\left(\sqrt{2}x/\epsilon\right)\right)$, $\epsilon=0.01$.}
    \label{fig_intro1}
    \vspace{-8pt}
\end{wrapfigure}

\begin{example}[periodic micro-scale]
\label{example_periodic}
    For intuition, consider a special case where  $f_{1,\epsilon}:=\epsilon f_1\left(\frac{x}{\epsilon}\right)$ for a periodic $f_1\in\mathcal{C}^2(\mathbb{R})$. It is easy to check that both conditions are satisfied.
\end{example}

\begin{example}[aperiodic micro-scale]
\label{example_aperiodic_new}
Given a $\mathcal{C}^2$ function $F(x_1,x_2,\cdots,x_N):\mathbb{R}^d \times \mathbb{R}^d \times \cdots \times \mathbb{R}^d \rightarrow \mathbb{R}$, that is periodic in each $x_i\in \mathbb{R}^d$, i.e., there exists constant vector $T\in\mathbb{R}^d$ such that $F(x_1,\cdots,x_i,\cdots,x_N)=F(x_1,\cdots,x_i+T,\cdots,x_N)$ for all $x_1,\cdots,x_N$ and $i=1,\cdots,N$. Then for any $\omega_1,\cdots,\omega_N \in \mathbb{R}$, $f_{1,\epsilon}(x):=\epsilon F(\frac{\omega_1 x}{\epsilon},\frac{\omega_2 x}{\epsilon},\cdots,\frac{\omega_N x}{\epsilon})$ satisfies Cond.\ref{cond_1} and \ref{cond_2}. If the $\omega$'s are nonresonant, meaning that the only solution to $z_1 \omega_1 + z_2 \omega_2 + \cdots + z_N \omega_N = 0$ for $z_i \in \mathbb{Z}$ is $z_1=z_2=\cdots=z_N=0$, then $f_{1,\epsilon}$ is not periodic. An example is $f_{1,\epsilon}=\epsilon(g_1(x/\epsilon)+g_2(\sqrt{2}x/\epsilon))$ for any 1-periodic $g_1$ and $g_2$.
\end{example}

\begin{remark}
Cond.\ref{cond_1} and \ref{cond_2} generalize and relax the periodic micro-scale requirement. Still required is, intuitively speaking, that every part of the small scale $f_{1,\epsilon}$ appears similar in a weak sense. In the special case of periodic micro-scale, it is easy to see $f_{1,\epsilon} = \mathcal{O}(\epsilon)$, $\nabla f_{1,\epsilon} = \mathcal{O}(1)$ and $\nabla^2 f_{1,\epsilon} = \mathcal{O}(\epsilon^{-1})$. However, after the relaxation of periodicity requirement, it may only be implied that $\nabla f_{1,\epsilon} = \mathcal{O}(1)$ (Cond.\ref{cond_1}) and $\nabla f_{1,\epsilon} = \mathcal{O}(\epsilon^{-1})$ (Cond.\ref{cond_2}).
Later on, Cond.\ref{cond_1} will help connect deterministic and stochastic maps, and Cond.\ref{cond_2} will help estimate the Lyapunov exponent so that the onset of chaos can be quantified.
\label{rmk_relaxedFromPeriodicity}
\end{remark}

Fig.\ref{fig_intro1} provides an example of $f$. This class of $f$ models objective landscapes that assume certain macroscopic shapes (described by $f_0$), but when zoomed-in exhibit additional small-in-$x \text{ and } f$ fluctuations (produced by $f_{1,\epsilon}$). Taking the loss function of a neural network as an example, our intuition is that if the training data is drawn from a distribution, the distribution itself produces the dominant macroscopic part of the landscape (i.e., $f_0$), and noises in the training data could lead to $f_{1,\epsilon}$ which corresponds to small and localized perturbations to the loss (see Appendix \ref{sec_MultiscaleLandscape} and also e.g., \cite{mei2018,jin2018local}).

Note although the length and height scales of $f_{1,\epsilon}$ can be both much smaller than those of $f_0$, $\nabla f_0$ and $\nabla f_{1,\epsilon}$ are nevertheless both $\mathcal{O}(1)$, creating nonconvexity and a large number of local minima even if $f_0$ is (strongly) convex.

What happens when gradient decent is applied to $f(x)$, following repeated applications of the map
\begin{equation*}
\varphi(x):=x-\eta\nabla f(x) = x-\eta \nabla f_0(x) - \eta \nabla f_{1,\epsilon}(x)?
\end{equation*}
($\eta$ will be called, interchangeably, learning rate (LR) or time step.)

When $\eta\ll\epsilon$, GD converges to a local minimum (or a saddle, or in general a stationary point where $\nabla f=0$). This is due to the well known convergence of GD when $\eta =o(1/L)$ for $L$-smooth $f$, and $L=\mathcal{O}(\epsilon^{-1})$ for our multiscale $f$'s (Rmk.\ref{rmk_relaxedFromPeriodicity}).

For  $\eta\gg1$, or more precisely when it exceeds $1/L_0$ for $L_0$-smooth $f_0$, the iteration generally blows up and does not converge. However, there is a regime in-between corresponding to $\epsilon \lesssim \eta\ll 1$, and this is what we call large LR, because here $\eta$ is too large to resolve the micro-scale (i.e., $f_{1,\epsilon}$, whose gradient has an $\mathcal{O}(\epsilon^{-1})$ Lipschitz constant).

\begin{figure}[h]
	\centering
	\includegraphics[width=0.8\linewidth]{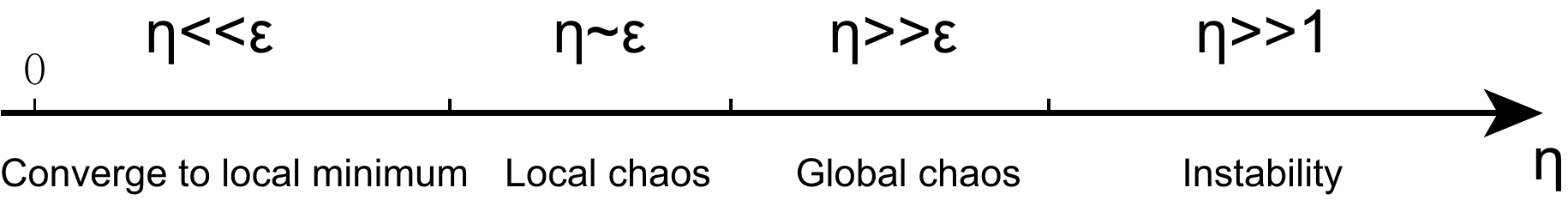}
	\caption{What happens as learning rate increases?}
	\label{etaChanges}
\end{figure}

Fig.\ref{etaChanges} previews what happens over the spectrum of $\eta$ values.
The difference between `local chaos' and `global chaos' will be detailed in Sec. \ref{sec_LYC} and \ref{sec_PD}.

In fact, for the multiscale function $f$, one may prefer to find a `macroscopic' local minimum created by $f_0$, instead of being trapped at one of the numerous local minima created by $f_{1,\epsilon}$, which could just be artifacts due to imperfection of training data. A small LR will not be able to do so, but we'll see below that large LR in some sense is better at this: it will lead GD to converge to a distribution peaked at $f_0$'s minimizer(s), despite that the iteration is based on the $\nabla f(x)=\nabla f_0(x)+\nabla f_{1,\epsilon}(x)$.

Our approach for demonstrating the `stochasticity' of $\varphi$ consists of three key ingredients: (i) construct another map $\hat{\varphi}$, which is a truly stochastic counterpart of $\varphi$, so that they share the same invariant distribution; (ii) find an approximation of the invariant distribution of $\hat{\varphi}$, namely rescaled Gibbs; (iii) establish conditions for $\varphi$ iterations to generate deterministic chaotic dynamics, which provides a route of convergence to a statistical distribution.

More specifically, we define the stochastic map $\hat\varphi$ as
\begin{equation*}
\hat\varphi: x \mapsto x-\eta\nabla f_0(x)+\eta\zeta,
\end{equation*}
where $\zeta$ is defined in Cond.\ref{cond_1}. Then we have (note many of these results persist in numerical experiments under relaxed conditions; see Sec.\ref{sec_numerics}).


\begin{theorem}[informal version of Thm.\ref{thm_sameLimitStats}]
	Fix $\eta$ and let $\epsilon\rightarrow0$. If $\varphi$ has a family of nondegenerate\footnote{By `nondegenerate', we require the distribution to be absolutely continuous w.r.t. Lebesgue measure. 
	Invariant distribution of $\varphi$ always exists; an example is a Dirac distribution concentrated at any stationary point of $f$. See Rmk.\ref{PurposeOfLip}.} invariant distributions for $\{\epsilon_i\}_{i=1}^\infty\rightarrow0$, which converges in the weak sense, then the weak limit is an invariant distribution of $\hat\varphi$.
    \label{thm_sameLimitStats_informal}
\end{theorem}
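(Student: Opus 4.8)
Write $\psi(x):=x-\eta\nabla f_0(x)$, so that $\varphi(x)=\psi(x)-\eta\nabla f_{1,\epsilon}(x)$ and $\hat\varphi(x)=\psi(x)+\eta\zeta$; note $\psi$ is $\mathcal{C}^1$ and independent of $\epsilon$. Let $\mu_{\epsilon_i}$ denote the given nondegenerate invariant distributions, with densities $\rho_{\epsilon_i}$, and suppose $\mu_{\epsilon_i}\stackrel{w}{\longrightarrow}\mu$. The target is to show that for every compactly supported $g\in\mathcal{C}^\infty(\mathbb{R}^d)$,
\begin{equation}
\int \mathbb{E}_\zeta\big[\,g(\psi(x)+\eta\zeta)\,\big]\,d\mu(x)=\int g\,d\mu ,
\label{eq_pp_goal}
\end{equation}
which is precisely $\hat\varphi$-invariance of $\mu$. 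Since $\mu_{\epsilon_i}$ is $\varphi$-invariant, $\int g(\varphi(x))\,d\mu_{\epsilon_i}(x)=\int g\,d\mu_{\epsilon_i}$, and the right-hand side converges to $\int g\,d\mu$ by the assumed weak convergence; hence \eqref{eq_pp_goal} follows once we establish
\begin{equation}
\int g\big(\psi(x)-\eta\nabla f_{1,\epsilon_i}(x)\big)\,d\mu_{\epsilon_i}(x)\ \longrightarrow\ \int \mathbb{E}_\zeta\big[\,g(\psi(x)+\eta\zeta)\,\big]\,d\mu(x).
\label{eq_pp_key}
\end{equation}

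To get \eqref{eq_pp_key} I would prove the stronger statement that, with $X_{\epsilon_i}\sim\mu_{\epsilon_i}$, the pair $\big(X_{\epsilon_i},\,-\eta\nabla f_{1,\epsilon_i}(X_{\epsilon_i})\big)$ converges weakly to $(X,\eta\zeta)$ where $X\sim\mu$ and $\zeta$ is \emph{independent} of $X$; applying the definition of weak convergence to the bounded continuous function $(x,v)\mapsto g(\psi(x)+v)$ on the product space then yields \eqref{eq_pp_key} immediately. The marginal $X_{\epsilon_i}\stackrel{w}{\longrightarrow}X$ is the hypothesis, so the real content is that the second coordinate has limiting law that of $\eta\zeta$ and decouples from the first. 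Testing against products of bounded continuous functions, this reduces to showing that for all bounded continuous $g,h$,
\begin{equation}
\int g(x)\,h\big(\nabla f_{1,\epsilon_i}(x)\big)\,d\mu_{\epsilon_i}(x)\ \longrightarrow\ \Big(\int g\,d\mu\Big)\,\mathbb{E}\big[h(-\zeta)\big].
\label{eq_pp_dec}
\end{equation}

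The mechanism behind \eqref{eq_pp_dec} is that $\nabla f_{1,\epsilon_i}$ oscillates on a spatial scale $\to 0$ while $g$ and $\mu$ do not, so over any mesoscopic region $\nabla f_{1,\epsilon_i}$ behaves like an independent sample of $-\zeta$; absolute continuity of $\mu_{\epsilon_i}$ is exactly what rules out this averaging being destroyed by microscopic concentration of $\mathrm{Law}(X_{\epsilon_i})$ (a Dirac mass at a local minimizer of $f$ is $\varphi$-invariant but degenerate, and would violate the theorem — cf. the footnote to the statement). Concretely, I would tile $\mathbb{R}^d$ by cubes $\{Q_j\}$ of side $r_i$ with $\delta(\epsilon_i)\ll r_i\to0$, truncate to a large ball carrying all but $o(1)$ of the mass (tightness of $\{\mu_{\epsilon_i}\}$), replace $g$ by its value at the center of $Q_j$ up to an oscillation error vanishing with $r_i$, and thereby reduce \eqref{eq_pp_dec} to the uniform local estimate
\begin{equation*}
\sup_{j}\ \Big|\,\mathbb{E}\big[\,h(\nabla f_{1,\epsilon_i}(X_{\epsilon_i}))\ \big|\ X_{\epsilon_i}\in Q_j\,\big]-\mathbb{E}\big[h(-\zeta)\big]\,\Big|\ \longrightarrow\ 0 ,
\end{equation*}
which, multiplied by $\mu_{\epsilon_i}(Q_j)$ and summed over $j$, gives \eqref{eq_pp_dec}. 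This local estimate is in turn extracted from Condition~\ref{cond_1}: the uniform-in-$x$ convergence $\nabla f_{1,\epsilon_i}(x+Y_{x,\epsilon_i})\stackrel{w}{\longrightarrow}-\zeta$ says exactly that the $\Gamma_{x,\epsilon_i}$-average of $h\circ\nabla f_{1,\epsilon_i}$ around $x$ equals $\mathbb{E}[h(-\zeta)]+o(1)$ uniformly in $x$; integrating this identity over a $\delta(\epsilon_i)$-net of $Q_j$ and comparing with the conditional law of $X_{\epsilon_i}$ on $Q_j$ — which, being absolutely continuous, is approximated in total variation by a law that is piecewise uniform on scale $\delta(\epsilon_i)$ — upgrades it to the displayed estimate.

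The main obstacle is precisely this last upgrade: passing from Condition~\ref{cond_1}, which only controls averages over the thin sets $\Gamma_{x,\epsilon_i}\subset B(0,\delta(\epsilon_i))$, to control of $\int_{Q_j}h(\nabla f_{1,\epsilon_i})\,d\mu_{\epsilon_i}$ when the density $\rho_{\epsilon_i}$ is merely integrable, with no uniform-in-$i$ bound and no modulus of continuity. One must couple the three scales $\epsilon_i$ (oscillation), $\delta(\epsilon_i)$ (averaging) and $r_i$ (mesh) so that $\rho_{\epsilon_i}$ is effectively constant against the $\Gamma_{x,\epsilon_i}$-averages, and here the uniformity in $x$ built into Condition~\ref{cond_1} is essential, since it is what forces the $O(\#\text{cubes})$ per-cube errors still to sum to $o(1)$. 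Everything else — the reductions \eqref{eq_pp_key} and \eqref{eq_pp_dec}, the continuous-mapping and oscillation estimates, and the ball truncation via tightness — is routine.
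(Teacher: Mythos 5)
Your overall reduction is sound and is in fact a different route from the paper's: you try to prove the decoupling limit \eqref{eq_pp_dec} directly by a two-scale tiling argument applied to the invariant measures, whereas the paper never works with the measures at this level. Its proof perturbs the random variable itself, setting $\tilde{X}=X+Y_{X,\epsilon}$ with $Y_{X,\epsilon}$ from Condition~\ref{cond_1}, shows $\varphi_\epsilon(\tilde{X})\stackrel{w}{\to}\hat{\varphi}(\tilde{X})$ (Lemma \ref{lemma_1}) and $\hat{\varphi}(\tilde{X})\stackrel{w}{\to}\hat{\varphi}(X)$ (Lemma \ref{lemma_2}), and then closes the triangle $\mathbb{E}[g(\varphi_\epsilon(X))-g(\varphi_\epsilon(\tilde X))]\to 0$ by \emph{assuming} the weak-continuity hypothesis (*) of the formal Theorem~\ref{thm_sameLimitStats}.

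The genuine gap is exactly the step you flag as the ``main obstacle,'' and it is not closable from absolute continuity alone. Condition~\ref{cond_1} only controls averages of $h\circ\nabla f_{1,\epsilon}$ against the uniform measure on the thin sets $\Gamma_{x,\epsilon}\subset B(0,\delta(\epsilon))$; to compare these with averages against the conditional law of $X_{\epsilon_i}$ on a cube $Q_j$ you need the density $\rho_{\epsilon_i}$ to be effectively constant on scale $\delta(\epsilon_i)$, i.e.\ an equicontinuity or uniform Lipschitz bound in $i$. Nondegeneracy gives nothing of the sort: an absolutely continuous $\varphi_{\epsilon_i}$-invariant measure may concentrate essentially all of its mass on sets of diameter $\ll\delta(\epsilon_i)$ (e.g.\ near microscopic stationary points of $f$), in which case the conditional law of $\nabla f_{1,\epsilon_i}(X_{\epsilon_i})$ on $Q_j$ need not resemble the law of $-\zeta$ at all, and your claimed total-variation approximation by a piecewise-uniform law on scale $\delta(\epsilon_i)$ fails. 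This is precisely why the paper's rigorous statement replaces ``nondegenerate'' by the stronger standing hypothesis (*) (which the authors acknowledge is strong and hard to verify, and which \emph{implies} nondegeneracy but is not implied by it, cf.\ Remark~\ref{PurposeOfLip}), and why Remark~4 separately invokes uniformly Lipschitz densities when compactness is needed. To repair your argument you would either have to import (*) (or a uniform modulus of continuity for the $\rho_{\epsilon_i}$) as an explicit hypothesis, at which point your tiling scheme does go through, or else prove such regularity of the invariant densities of $\varphi_{\epsilon_i}$ --- something neither you nor the paper does.
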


\begin{theorem}[informal version of Lem.\ref{GeoErgo}, Thm.\ref{thm_geomErgInW2} \& Thm.\ref{thm_rescaledGibbsIsApprx}] 
    Suppose $f_0\in \mathcal{C}^2$ is strongly convex and $L$-smooth, and $f_{1,\epsilon}\in\mathcal{C}^1$ satisfies condition \ref{cond_1}. Then for $\eta\leq C$ with some $C>0$ independent of $\epsilon$, $\hat\varphi$ has an unique invariant distribution, and its iteration converges exponentially fast to this distribution. Moreover, if the covariance matrix of $\zeta$ is isotropic, i.e., $\sigma^2 I_d$, then the rescaled Gibbs distribution $\frac{1}{Z}\exp\left(-\frac{2f_0(x)}{\eta\sigma^2}\right) dx$ is an $\mathcal{O}(\eta^2)$ approximation of it.
\end{theorem}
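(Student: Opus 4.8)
The statement splits into two parts, proved by different tools: geometric ergodicity of $\hat\varphi$ (yielding a unique invariant law $\pi$), and, when $\mathrm{Cov}(\zeta)=\sigma^2 I_d$, the $\mathcal{O}(\eta^2)$ identification of $\pi$ with the rescaled Gibbs measure $\pi_\star\propto\exp\!\big(-2f_0(x)/(\eta\sigma^2)\big)$. Note that $f_{1,\epsilon}$ enters only through $\zeta$ (via Cond.\ref{cond_1}), so throughout I use only that $\zeta$ is bounded, nonconstant, mean-zero, and (for the second part) isotropic.

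For geometric ergodicity, I would read one step of $\hat\varphi$ as the Markov operator $P$ of the chain $X_{k+1}=\varphi_0(X_k)+\eta\zeta_k$, with $\varphi_0(x):=x-\eta\nabla f_0(x)$ and $\zeta_k$ i.i.d.\ copies of $\zeta$, on the complete metric space $(\mathcal{P}_2(\mathbb{R}^d),W_2)$. Strong convexity and $L$-smoothness of $f_0$, through the usual co-coercivity inequality, make $\varphi_0$ a $(1-\eta\mu)$-contraction for every $\eta\le 2/(\mu+L)=:C$, and $C$ depends on $f_0$ alone, hence is independent of $\epsilon$. A synchronous coupling of two copies driven by the same noise gives $W_2(\mu P^k,\nu P^k)\le(1-\eta\mu)^kW_2(\mu,\nu)$, and boundedness of $\zeta$ makes $P$ preserve $\mathcal{P}_2$; Banach's fixed point theorem then supplies the unique invariant $\pi$ and $W_2(\mu P^k,\pi)\le(1-\eta\mu)^kW_2(\mu,\pi)$. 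Feeding $g(x)=|x-x_\star|^2$ into the invariance identity ($x_\star=\arg\min f_0$, so $\varphi_0(x_\star)=x_\star$) and using $\mathbb{E}\zeta=0$ yields $\mathbb{E}_\pi|X-x_\star|^2\le\eta\,\mathbb{E}|\zeta|^2/(2\mu-\eta\mu^2)=\mathcal{O}(\eta)$; combined with the forward-invariance of $\overline{B}(x_\star,\|\zeta\|_\infty/\mu)$ under $\hat\varphi$ (which also rules out invariant measures outside $\mathcal{P}_2$), this shows $\pi$ is compactly supported and concentrates at scale $\sqrt\eta$ about $x_\star$ — exactly the facts needed to control the remainders below.

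For the Gibbs approximation, Taylor-expand a single step to third order, take the expectation over $\zeta$, and use $\mathbb{E}\zeta=0$, $\mathrm{Cov}(\zeta)=\sigma^2I_d$: the linear term dies and the quadratic term produces $\tfrac{\eta^2}{2}\big(\langle\nabla f_0,\nabla^2 g\,\nabla f_0\rangle+\sigma^2\Delta g\big)$, so $\hat\varphi$ coincides, to this order, with the Euler--Maruyama scheme of the overdamped Langevin flow $dX=-\nabla f_0\,dt+\sqrt\eta\,\sigma\,dW$, whose generator $\mathcal{L}g=-\langle\nabla f_0,\nabla g\rangle+\tfrac{\eta\sigma^2}{2}\Delta g$ is reversible for $\pi_\star$. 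The discrepancy consists of the standard discretization/drift correction — which replaces $f_0$ by $f_0+\tfrac\eta4|\nabla f_0|^2$ in the effective potential — and of the non-Gaussian cumulants of $\zeta$. Since $\mathbb{E}_\pi|\nabla f_0|^2\le L^2\mathbb{E}_\pi|X-x_\star|^2=\mathcal{O}(\eta)$, the former, once weighted by $\pi_\star$, is only $\mathcal{O}(\eta^2)$, and the latter is of even higher order after rescaling; thus $\pi_\star$ is an $\mathcal{O}(\eta^2)$-approximate fixed point of $P$. One then transfers this to $|\mathbb{E}_\pi g-\mathbb{E}_{\pi_\star}g|=\mathcal{O}(\eta^2)$ for test functions with bounded Hessian by a Poisson-equation estimate for $\mathcal{L}$ (equivalently, by applying the $W_2$-contraction to the two near-invariant measures); this works with constants uniform in $\eta$ because $\pi_\star$ is $\tfrac{2\mu}{\eta\sigma^2}$-strongly log-concave, matching the $\tfrac{\eta\sigma^2}{2}$ diffusivity of $\mathcal{L}$.

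The first part is routine. The obstacle is keeping the second part sharp: several of the terms above are only $\mathcal{O}(\eta)$ or $\mathcal{O}(\eta^{3/2})$ before cancellations, and landing the final bound at $\mathcal{O}(\eta^2)$ requires (a) tracking how derivatives of the test functions and of the Poisson solutions behave under the change of variables $x=x_\star+\sqrt\eta\,y$, which turns the problem into an $\mathcal{O}(1)$ Ornstein--Uhlenbeck-type one, and (b) exploiting the (near-)parity of $\pi_\star$ so that the borderline terms — notably the one carrying $\langle\nabla f_0,\nabla^2 g\,\nabla f_0\rangle$ — integrate to one order higher, as happens explicitly in the exactly solvable quadratic model, where a one-line heat-semigroup interpolation in the temperature parameter already gives the $\mathcal{O}(\eta^2)$ estimate.
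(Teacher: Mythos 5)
Your proposal is correct and follows the same two-stage architecture as the paper: a contraction/coupling argument for geometric ergodicity, followed by a ``consistency plus stability'' argument for the Gibbs approximation. The first part coincides with the paper's Thm.~\ref{thm_geomErgInW2} (synchronous coupling, contraction factor $\max\{|1-\eta\mu|,|1-\eta\nu|\}$); the paper's Lem.~\ref{GeoErgo} additionally invokes the Hennion--Herv\'e machinery to get existence/uniqueness in the Prokhorov metric, but your Banach-fixed-point argument in $(\mathcal{P}_2,W_2)$ serves the same purpose. For the second part, the paper's consistency step is Prop.~\ref{prop_rescaledGibbsError}: a second-order Taylor expansion in which the $\mathcal{O}(\eta)$ drift term and the $\mathcal{O}(\eta^2)$ Laplacian term cancel exactly against the rescaled Gibbs density by Stokes' theorem --- your ``reversibility of $\pi_\star$ for the Langevin generator'' is exactly this cancellation --- while the leftover $\tfrac{\eta^2}{2}\langle\nabla f_0,\nabla^2 h\,\nabla f_0\rangle$ term is controlled by the moment bound of Lem.~\ref{lma_estimate_nablag} (your $\mathbb{E}\|\nabla f_0\|^2=\mathcal{O}(\eta)$ estimate plays the identical role). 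The stability step is where you genuinely diverge: the paper extracts a spectral gap $|1-\lambda|\ge\eta\mu$ for the transition operator (Cor.~\ref{cor_spectralGap}, obtained by combining the $W_2$ contraction with an imported total-variation-versus-$W_2$ comparison) and then divides the residual by the gap after an orthogonal decomposition of densities; your Poisson-equation transfer for the generator $\mathcal{L}$ performs the same division by the gap more directly and avoids both the TV-versus-$W_2$ lemma and the unspecified Hilbert-space decomposition, at the price of having to control $\nabla^2\phi$ for the Poisson solution uniformly in $\eta$ (your rescaling $x=x_\star+\sqrt{\eta}\,y$ is the right device for that).

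One bookkeeping point should be fixed: the one-step weak defect of $\pi_\star$ must come out as $\mathcal{O}(\eta^3)$, as in Prop.~\ref{prop_rescaledGibbsError}, not $\mathcal{O}(\eta^2)$ as you state, because the gap you subsequently divide by is itself only $\mathcal{O}(\eta)$; with a genuinely $\mathcal{O}(\eta^2)$ per-step defect your own contraction-transfer would yield only $\mathcal{O}(\eta)$. The count does close: the drift-correction term and the $\langle\nabla f_0,\nabla^2 h\,\nabla f_0\rangle$ term each carry a factor $\eta^2\cdot\mathbb{E}_{\pi_\star}\|\nabla f_0\|^2=\mathcal{O}(\eta^3)$, and the third-cumulant contribution of the bounded noise $\eta\zeta$ is $\mathcal{O}(\eta^3)$ outright. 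Consequently no parity or symmetry argument is needed for the claimed $\mathcal{O}(\eta^2)$ bound, and the paper uses none.
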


\begin{theorem}[informal version of Thm.\ref{thm_condition4LiYorke}]
    Suppose $f_0,f_{1,\epsilon} \in \mathcal{C}^1(\mathbb{R})$, $f_0$ is $L$-smooth, grows unboundedly at infinity, and $f_{1,\epsilon}$ satisfies Cond.\ref{cond_1}. If $f_0$ has a stationary point, then $\exists \eta_J>0$ such that for any fixed $0<\eta<\eta_J$, $\exists \epsilon_0>0$, s.t. when $\epsilon<\epsilon_0$, the $\varphi$ dynamics is chaotic.
\end{theorem}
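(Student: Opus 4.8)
The plan is to show that, under the stated hypotheses and for every fixed $\eta$ below a threshold $\eta_J$ that depends only on $L$ and the law of $\zeta$, the one-dimensional map $\varphi$ becomes \emph{turbulent} as soon as $\epsilon$ is small enough: there are two compact intervals $A,B$ with disjoint interiors and $A\cup B\subseteq\varphi(A)\cap\varphi(B)$. By the classical theory of interval maps, a turbulent map has positive topological entropy, periodic points of all periods, and an uncountable scrambled set, i.e. is chaotic in the Li--Yorke sense, which is the conclusion. I would additionally observe that coercivity of $f_0$ (``grows unboundedly'') together with $L$-smoothness and $\eta$ small gives a compact interval that absorbs all orbits, so the dynamics is globally and not merely locally chaotic; in any event the scrambled set is built inside the bounded set $A\cup B$.

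The heart of the matter is a deterministic corollary of Condition~\ref{cond_1}. Since $\zeta$ is bounded, nonconstant and mean-zero in $\mathbb{R}$, setting $b_+:=\operatorname{ess\,sup}(-\zeta)$ and $b_-:=\operatorname{ess\,inf}(-\zeta)$ we have $b_-<0<b_+$. I claim that for every $x$ in a fixed compact set and every $\epsilon$ small enough the image $\varphi\big(B(x,\delta(\epsilon))\big)$ is an interval containing $\big[\,x-\eta f_0'(x)-\eta b_++o(1),\ x-\eta f_0'(x)-\eta b_-+o(1)\,\big]$, where $o(1)\to0$ uniformly in $x$. Indeed, weak convergence in Condition~\ref{cond_1} with the portmanteau inequality for open sets forces $f_{1,\epsilon}'=\nabla f_{1,\epsilon}$ to take some value $\ge b_+-o(1)$ and some value $\le b_-+o(1)$ on $\Gamma_{x,\epsilon}\subset B(x,\delta(\epsilon))$ once $\epsilon$ is small; since $B(x,\delta(\epsilon))$ is a connected interval on which $f_{1,\epsilon}'$ is continuous, the intermediate value theorem yields every value of $f_{1,\epsilon}'$ in $[\,b_-+o(1),\,b_+-o(1)\,]$ there, while $L$-smoothness makes $f_0'$ vary by at most $L\delta(\epsilon)=o(1)$ on the same interval. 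Continuity of $\varphi$ and connectedness then give the stated image, with uniformity in $x$ inherited from the ``uniformly with respect to $x$'' clause of Condition~\ref{cond_1}.

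Applying this at the stationary point $x^\ast$ (so $f_0'(x^\ast)=0$) gives $\varphi\big(B(x^\ast,\delta(\epsilon))\big)\supseteq[\,x^\ast-\eta c_0,\,x^\ast+\eta c_0\,]$ for $c_0:=\tfrac12\min(b_+,|b_-|)>0$ and $\epsilon$ small. I then take $A:=B(x^\ast,\delta(\epsilon))$ and $B:=B\big(x^\ast+\tfrac12\eta c_0,\delta(\epsilon)\big)$; for $\epsilon$ small these have disjoint interiors and both lie in $[\,x^\ast-\eta c_0,x^\ast+\eta c_0\,]$, so $\varphi(A)\supseteq A\cup B$ is automatic. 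For $\varphi(B)$, the lemma at $x^\ast+\tfrac12\eta c_0$ together with $|f_0'(x^\ast+\tfrac12\eta c_0)|\le\tfrac12 L\eta c_0$ (from $L$-smoothness and $f_0'(x^\ast)=0$) shows $\varphi(B)$ contains $\big[\,x^\ast+\tfrac12\eta c_0-\eta b_++O(\eta^2)+o(1),\ x^\ast+\tfrac12\eta c_0+\eta|b_-|+O(\eta^2)+o(1)\,\big]$; because $\tfrac12 c_0\le\tfrac14 b_+$, its left endpoint is $\le x^\ast-\tfrac12\eta b_++O(\eta^2)+o(1)$ and its right endpoint is $\ge x^\ast+\tfrac12\eta c_0+\tfrac12\eta|b_-|+O(\eta^2)+o(1)$. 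Choosing $\eta$ small enough that the $O(\eta^2)$ terms are dominated by the terms linear in $\eta$, and then $\epsilon$ small enough that all $o(1)$ and $\delta(\epsilon)$ terms are negligible, we obtain $\varphi(B)\supseteq[\,x^\ast-\delta(\epsilon),\,x^\ast+\tfrac12\eta c_0+\delta(\epsilon)\,]\supseteq A\cup B$. This is turbulence; $\eta_J$ is the minimum of the finitely many $\eta$-thresholds invoked (each depending only on $L$ and the law of $\zeta$, none on $\epsilon$), and $\epsilon_0$ is then dictated by the chosen $\eta$.

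The step I expect to be the main obstacle is exactly this deterministic corollary: upgrading a purely distributional (weak) hypothesis to the topological statement that $\varphi$ maps a shrinking ball \emph{onto} a macroscopic interval with explicit endpoints, uniformly in the base point. One must be careful that weak convergence only controls open sets, so one gets values \emph{near} $b_\pm$ rather than exactly --- hence the strict slack in the covering inequalities --- and that the argument genuinely uses $d=1$ through connectedness of $B(x,\delta(\epsilon))$ and the intermediate value theorem; in higher dimension Condition~\ref{cond_1} would have to be supplemented to yield a topological horseshoe. A secondary point to verify carefully is that the absorbing interval indeed exists under just ``$f_0$ grows unboundedly'' plus $L$-smoothness and $\eta$ small (coercivity forces $f_0'$ to be eventually outward-pointing and $\eta$ small prevents overshoot), although the scrambled set lies in $A\cup B$ regardless.
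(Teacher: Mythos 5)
Your proposal is correct and proves the same statement, but by a genuinely different classical 1D-chaos criterion than the one used in the paper. Both arguments rest on the same mechanism: Condition~\ref{cond_1} plus the intermediate value theorem shows that, near the stationary point $x^\ast$ of $f_0$, the micro-gradient $\nabla f_{1,\epsilon}$ takes values close to both $\operatorname{ess\,sup}(-\zeta)$ and $\operatorname{ess\,inf}(-\zeta)$ on every $\delta(\epsilon)$-ball, so $\varphi$ stretches a shrinking ball onto a macroscopic interval of width $\approx \eta(b_+-b_-)$. The paper exploits this expansion to hunt, via IVT, for the four points $d\le a<b<c$ required by the Li--Yorke ``period 3'' criterion (Thm.~\ref{LYThm}); you instead package the same expansion as strict turbulence, $A\cup B\subseteq\varphi(A)\cap\varphi(B)$ for two disjoint compact intervals near $x^\ast$, and invoke the Block--Coppel theory that turbulence implies positive topological entropy and Li--Yorke chaos. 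Your route buys a cleaner and slightly stronger conclusion (topological entropy, distributional chaos, periodic points of all periods ``for free''), and your ``deterministic corollary'' makes explicit the one step the paper treats tersely, namely how the distributional hypothesis on $\zeta$ is upgraded to a deterministic statement about the range of $\nabla f_{1,\epsilon}$ on a small ball. The paper's route buys a more self-contained argument that only needs the single Li--Yorke theorem it already quotes. Both proofs agree on where the threshold $\eta_J$ comes from: the absorbing interval $J$ (from coercivity of $f_0$ and $L$-smoothness) and, in your version, the additional requirement that the $O(\eta^2)$ smoothness correction at $x^\ast+\tfrac12\eta c_0$ is dominated by the linear-in-$\eta$ terms; both thresholds depend only on $L$ and the law of $\zeta$ and not on $\epsilon$, as required. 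One small caveat worth stating explicitly: to obtain Li--Yorke chaos in the sense of Definition~\ref{def_LY} you need the self-map $\varphi|_J:J\to J$, so the absorbing interval is not optional even though the scrambled set and periodic orbits live inside $A\cup B$; the formal Thm.~\ref{thm_condition4LiYorke} supplies the derivative growth conditions ($f_0'\to\pm\infty$) that make this construction go through, whereas the informal hypothesis ``grows unboundedly'' is, strictly speaking, a shade weaker.
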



In addition, we will show the onset of local chaos as $\eta$ increases is via the common route of period doubling  \citep{alligood1997chaos}. We will also establish and estimate the positive Lyapunov exponent of $\varphi$ in the large LR regime, which is strongly correlated with chaotic dynamics \citep{lyapunov1992general}.

The reason that we investigate chaos is the following. Although general theories are not unified yet, it is widely accepted that chaotic systems are often ergodic (on ergodic foliations), meaning the temporal average of an observable along any orbit (starting from the same foliation) converges, as the time horizon goes to infinity, to the spatial average of that observable over an invariant distribution (e.g., \cite{eckmann1985ergodic, young1998statistical, ott2002chaos}). Moreover, many chaotic systems are also mixing (see e.g., \cite{ott2002chaos}), which implies that if one starts with an ensemble of initial conditions and evolves each one of them by the deterministic map, then the whole ensemble converges to the (ergodic) invariant distribution.

Therefore, our last step in establishing stochasticity of GD is to show the deterministic $\varphi$ map becomes chaotic for large $\eta$. This way, in most situations it is also ergodic and the assumption of Theorem \ref{thm_sameLimitStats_informal} is satisfied, allowing us to demonstrate and quantify the stochastic behavior of deterministic GD. Note that we also know that if $f_0$ has multiple minima and associated potential wells, then GD can have stochastic behaviors with non-unique statistics (see Remark \ref{rmk_nonconvex}, \ref{rmk_nonconvex_chaos} and Section \ref{sec_NT_nonvonvex}). Therefore, mixing is not provable unless additional conditions are imposed, and this paper only presents numerical evidence (see section \ref{NT_Ergodicity} and \ref{sec_Matyas}). Meanwhile, note (i) since mixing implies ergodicity and Li-Yorke chaos \citep{akin2003li,iwanik1991independence}, our necessary conditions are also necessary for mixing, and (ii) proving mixing of deterministic dynamics is difficult, and only several examples have been well understood; see e.g., \cite{sinai1970dynamical, ornstein1973geodesic}.

\begin{remark}
    For these reasons, we clarify that the theory in this paper does not quantify the speed of convergence of deterministic GD ($\varphi$) to its long time statistical limit. It is only shown that the stochastic map $\hat{\varphi}$ converges to its statistical limit exponentially fast for strongly-convex $f_0$, and the deterministic map $\varphi$ shares the same statistical limit with $\hat{\varphi}$.
\end{remark}

\paragraph{Relevance to machine learning practices:} see Sec.\ref{sec_ExampleNeuralNetwork} (empirical) \& \ref{sec_MultiscaleLandscape} (theoretical) for examples.

\subsection{Related work}
\label{sec_relatedWork}
(S)GD is one of the most popular optimizing algorithms for deep learning, not only because of its practical performances, but also due to extensive and profound theoretical observations that it both optimizes well (e.g., \cite{lee2016gradient,jin2017escape,du2019gradient,du2019gradient2,allen2019convergence}) and generalizes well (e.g., \cite{neyshabur2015norm,bartlett2017spectrally,golowich2018size,dziugaite2017computing,arora2018stronger,li2018learning,li2018tighter,wei2019regularization,allen2019learning,neyshabur2019towards,cao2020generalization,ma2020comparative}).

However, to the best of our knowledge, there are not yet many results  that systematically study the effects of large learning rates from a general optimization perspective. \cite{jastrzkebski2017three} argue that large LR makes GD more likely to avoid sharp minima (we also note whether sharp minima correspond to worse generalization is questionable, e.g., \cite{dinh2017sharp}). Another result is \citep{li2019towards}, which suggests that large LR resists noises from data. In addition, \cite{smith2019super} associate large LR with faster training of neural networks. To relate to our work, note it can be argued from one of our results (namely the rescaled Gibbs statistical limit) that LR smooths out shallow and narrow local minima, which are likely created by noisy data. Therefore, it is consistent with \citep{li2019towards} and complementary to \citep{jastrzkebski2017three} and \citep{smith2019super}. At the same time, one of our contributions is the demonstration that this smoothing effect can be derandomized and completely achieved by deterministic GD. We also note a very interesting recent heuristic observation  \citep{lewkowycz2020large} consistent with our theory (see Fig.\ref{etaChanges}).

Another related result is \citep{draxler2018essentially}, which suggests that few substantial barriers appear in the loss landscape of neural networks, and this type of landscape fits our model, in which most potential wells are microscopic (i.e., shallow and narrow).

In addition, since we demonstrate stochasticity purely created by large LR, the technique of Polyak-Ruppert averaging \citep{polyak1992acceleration}
for reducing the variance and accelerating the convergence of SGD is expected to remain effective, even when no stochastic gradient or minibatch approximation is used. A systematic study of this possibility, however, is beyond the scope of this article. Also, our result is consistent with the classical decreasing LR treatment for SGD (e.g., \cite{robbins1951stochastic}) in two senses: (i) in the large LR regime, reducing LR yields smaller variance (eqn.\ref{eq_rescaledGibbs}); (ii) once the LR drops below the chaos threshold, GD simply converges to a local minimum (no more variance).

Regarding multiscale decomposition \eqref{eq_multiscaleLoss}, note many celebrated multiscale theories assume periodic small scale, (e.g., periodic homogenization \citep{pavliotis2008multiscale}), periodic averaging \citep{SaVeMu10}, and KAM theory \citep{moser1973stable}). We relaxed this requirement. Moreover, even when Conditions \ref{cond_1},\ref{cond_2} fail, our claimed result (stochasticity) persists as numerically observed  (see Sec.\ref{sec_nonperiodic}).

Another important class of relevant work is on continuum limits and modified equations, which Appendix \ref{sec_AppendixModifiedEq} will discuss in details.

\section{Theory}
\emph{Proofs and additional remarks are provided in Appendix \ref{sec_appendixProofs}.}

\subsection{Connecting the deterministic map and the stochastic map}
\label{sec_SB}
Here we will 
connect the stochastic map $\hat{\varphi}$ and the deterministic map $\varphi$. The intuition is that as $\epsilon\rightarrow0$ they share the same long-time behavior. In the following discussion, we fix the learning rate $\eta$, and in order to show the dependence of $\varphi$ on $\epsilon$, we write it as $\varphi_\epsilon$ explicitly in this section.
\begin{theorem}[convergence of the deterministic map to the stochastic map]
\label{thm_sameLimitStats}
Suppose $f_0$ is a $L$-smooth function and $f_{1,\epsilon}$ satisfies Cond.\ref{cond_1}. In order to show the dependence of $\varphi$ on $\epsilon$, $\varphi$ is written as $\varphi_\epsilon$ explicitly. Let $\hat\varphi(X):=X-\eta\nabla f_0(X)+\eta\zeta$ where $\zeta$ is the r.v. in Cond.\ref{cond_1}, i.i.d. if $\hat\varphi$ is iterated. 

	Assume there exist a set of random variables whose range is in $\mathbb{R}^d$, denoted by $\mathcal{F}$, and a subset $\mathcal{E}\subset \mathbb{R}$ with $0\in\bar{\mathcal{E}}\backslash \mathcal{E}$, satisfying:
	\begin{itemize}
		\item $\varphi_\epsilon$ is continuous in $\mathcal{F}$ in the weak sence $\forall\epsilon\in\mathcal{E}$. Namely, for any r.v. $X\in\mathcal{F}$ and for any sequence of r.v.'s $Y_n:\Omega\rightarrow\mathbb{R}^d$ satisfying $\|Y_n\|_\infty:=\sup_{\omega\in\Omega}\|Y_n(\omega)\|_2\rightarrow 0$, we have $\varphi_\epsilon(X+Y_n)\stackrel{w}{\longrightarrow}\varphi_\epsilon(X)$. (*)
	\end{itemize}

	Let $\{\epsilon_i\}_{i=1}^\infty\subset \mathcal{E}$ be a sequence with 0 limit and for each $i$, $X_{\epsilon_i}$ is a fixed point of $\varphi_{\epsilon_i}$. If $X_{\epsilon_i}\stackrel{w}{\longrightarrow}X$, then $X$ is a fixed point of $\hat{\varphi}$, i.e., $\hat{\varphi}(X)\overset{w}{=}X$.
\end{theorem}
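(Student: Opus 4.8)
\section*{Proof proposal}

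The plan is to phrase everything in terms of laws and invariance. Write $\mu_{\epsilon_i}:=\mathrm{law}(X_{\epsilon_i})$, so that the hypothesis ``$X_{\epsilon_i}$ is a fixed point of $\varphi_{\epsilon_i}$'' reads $(\varphi_{\epsilon_i})_*\mu_{\epsilon_i}=\mu_{\epsilon_i}$, and $\mu_{\epsilon_i}\stackrel{w}{\to}\mu:=\mathrm{law}(X)$; the target is $(\hat\varphi)_*\mu=\mu$, i.e. $X-\eta\nabla f_0(X)+\eta\zeta$ has law $\mu$ when $\zeta$ is taken independent of $X$. The obstruction to any naive argument is that $\nabla f_{1,\epsilon}$ oscillates on scale $\epsilon$ and so cannot be controlled when evaluated at the single (random) point $X_\epsilon$; the device suggested by Cond.\ref{cond_1} is to first \emph{smear} $\mu_\epsilon$ at the scale $\delta(\epsilon)$ on which $\nabla f_{1,\epsilon}$ has a well-defined weak average.

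Concretely I would introduce $Z_\epsilon:=X_\epsilon+Y_{X_\epsilon,\epsilon}$, where, after enlarging the probability space by an auxiliary uniform variable independent of $X_\epsilon$, the perturbation $Y_{X_\epsilon,\epsilon}$ is, conditionally on $X_\epsilon$, uniform on $\Gamma_{X_\epsilon,\epsilon}$. Since $\|Z_\epsilon-X_\epsilon\|_\infty\le\delta(\epsilon)\to0$, testing against bounded Lipschitz functions gives $Z_{\epsilon_i}\stackrel{w}{\to}X$. The core step is then $\varphi_{\epsilon_i}(Z_{\epsilon_i})\stackrel{w}{\to}\hat\varphi(X)$: conditioning on $X_\epsilon=x$ and combining $\|Y_{x,\epsilon}\|_\infty\to0$ with the uniform-in-$x$ convergence $\nabla f_{1,\epsilon}(x+Y_{x,\epsilon})\stackrel{w}{\to}-\zeta$ from Cond.\ref{cond_1}, one gets, for bounded uniformly continuous $h$, $\mathbb{E}\big[h(Z_\epsilon,\nabla f_{1,\epsilon}(Z_\epsilon))\,\big|\,X_\epsilon=x\big]\to\int h(x,-z)\,d\nu(z)$ uniformly in $x$, with $\nu=\mathrm{law}(\zeta)$; integrating against $\mu_{\epsilon_i}\stackrel{w}{\to}\mu$ — here the uniformity in $x$ is exactly what lets the limit pass through the integral and, crucially, forces the limiting noise to be \emph{independent} of $X$ — yields $(Z_{\epsilon_i},\nabla f_{1,\epsilon_i}(Z_{\epsilon_i}))\stackrel{w}{\to}(X,-\zeta)$ with $X\perp\zeta$. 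Since $f_0$ is $L$-smooth, $\nabla f_0$ is continuous, and the continuous mapping theorem applied to $(z,w)\mapsto z-\eta\nabla f_0(z)-\eta w$ gives $\varphi_{\epsilon_i}(Z_{\epsilon_i})\stackrel{w}{\to}X-\eta\nabla f_0(X)+\eta\zeta=\hat\varphi(X)$.

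It remains to transfer this back to $X_{\epsilon_i}$ itself. By invariance $\varphi_{\epsilon_i}(X_{\epsilon_i})\overset{d}{=}X_{\epsilon_i}\stackrel{w}{\to}X$, so it suffices to show $\varphi_{\epsilon_i}(Z_{\epsilon_i})$ and $\varphi_{\epsilon_i}(X_{\epsilon_i})$ share the same weak limit, which then forces $\hat\varphi(X)\overset{w}{=}X$. This is where assumption (*) is used: $Z_{\epsilon_i}=X_{\epsilon_i}+Y_{X_{\epsilon_i},\epsilon_i}$ is a sup-norm-vanishing perturbation of $X_{\epsilon_i}\in\mathcal{F}$, so the weak continuity of $\varphi_\epsilon$ should make the two pushforwards asymptotically indistinguishable, closing the argument.

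This last transfer is the step I expect to be the genuine obstacle. One cannot avoid working distributionally, because $\varphi_\epsilon(Z_\epsilon)-\varphi_\epsilon(X_\epsilon)$ is \emph{not} small pointwise: $\nabla f_{1,\epsilon}$ has Lipschitz constant of order $\epsilon^{-1}$ while $\|Z_\epsilon-X_\epsilon\|$ is only of order $\delta(\epsilon)$, so this difference is of order $\delta(\epsilon)/\epsilon$, which does not vanish. Moreover the perturbation $Y_{X_{\epsilon_i},\epsilon_i}$ needed to invoke Cond.\ref{cond_1} has a \emph{fixed}, non-vanishing size for each $i$ and shrinks only as $i\to\infty$, whereas (*) is a continuity statement for each fixed $\epsilon$; reconciling the two requires a diagonal argument, or a modulus of weak continuity in (*) controlled along the sequence $\{\epsilon_i\}$, and it is here that nondegeneracy (absolute continuity) of the $\mu_{\epsilon_i}$ is essential — for a Dirac fixed point the transfer would fail, consistently with $\hat\varphi$ admitting no Dirac invariant law when $\zeta$ is nonconstant. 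A secondary but essential point is to carry the uniformity-in-$x$ of Cond.\ref{cond_1} through to the independence of $\zeta$ and $X$ in the core step; this is routine once the test-function bookkeeping is set up with bounded uniformly continuous $h$.
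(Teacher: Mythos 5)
Your proposal follows essentially the same route as the paper's proof: your smeared variable $Z_\epsilon=X_\epsilon+Y_{X_\epsilon,\epsilon}$ is exactly the paper's $\tilde X$, your core step (Cond.\ref{cond_1} for the noise limit, $L$-smoothness of $f_0$ and the test function to absorb the $\mathcal{O}(\delta(\epsilon))$ shift) reproduces the paper's Lemmas \ref{lemma_1}--\ref{lemma_2}, and your transfer back via (*) is the paper's Lemma \ref{lemma_3} followed by passing to the limit along the fixed points. The obstacle you flag --- that (*) is a continuity statement for each fixed $\epsilon$ while the perturbation $Y_{X_{\epsilon_i},\epsilon_i}$ only shrinks along the sequence, so one needs a modulus of weak continuity uniform in $\epsilon$ or a diagonal argument --- is genuine, and the paper's own proof glosses over it by simply invoking (*) for $\mathbb{E}\left[g(\varphi_\epsilon(X))-g(\varphi_\epsilon(\tilde X))\right]$ as $\epsilon\to 0$.
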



\begin{remark}
\label{PurposeOfLip}
In this paper, invariant distributions that are absolutely continuous w.r.t. Lebesgue measure are called to be nondegenerate. Condition (*) implies nondegeneracy. 
We ruled out degenerate invariant distributions, which correspond to (convex combinations of) Dirac distributions at stationary points of $f$. In fact, if one starts GD with initial condition that is any stationary point of $f$, GD won't exhibit any true stochasticity no matter how large the LR is. We avoid considering such a degenerate limiting distribution by excluding them from our random variable space. 
\end{remark}

\begin{remark}
    If we further assume that all random variables in $\mathcal{F}$ have uniformly Lipschitz densities, the conclusion can be strengthened due to the sequential compactness of $\bar{\mathcal{F}}$: denote the set of fixed points of $\hat\varphi$ by $\hat{\mathcal{P}}\subset\bar{\mathcal{F}}$. Then the set of weak limit points of $\{X_{\epsilon_i}\}_{i=1}^\infty$, denoted by $\mathcal{P}\subset\bar{\mathcal{F}}$, is non-empty, and $\mathcal{P}\subset\hat{\mathcal{P}}$. 
\end{remark}

\subsection{The stochastic map: quantitative ergodicity}
\label{sec_rescaledGibbs}
This section will show that, when $f_0$ is strongly convex, the stochastic map $\hat\varphi$ induces a Markov process that is geometric ergodic, meaning it converges exponentially fast to a unique invariant distribution. We will also show that when $\zeta$ is isotropic, the invariant distribution can be approximated by a rescaled Gibbs distribution. As an additional remark, we also believe that rescaled Gibbs approximates the invariant distribution when $f_0$ is not strongly convex, even though no proof but only numerical evidence is provided (Sec.\ref{NT_Ergodicity}); however, geometric ergodicity can be lost.

\begin{lemma}[geometric ergodicity]
	\label{GeoErgo}
	Consider $\hat\varphi(x)=x-\eta\nabla f_0(x)+\eta\zeta$, where $\zeta$ is a bounded random variable in $\mathbb{R}^d$ with 0 mean, i.i.d. if $\hat\varphi$ is iterated. If $f_0$ is strongly convex and $L$-smooth, then there exists $\eta_0\in \mathbb{R}^+$, such that when $\eta<\eta_0$, the map $X\mapsto\hat\varphi(X)$ has a unique invariant distribution and the iteration $\hat\varphi^{(n)}(X)$ converges (as $n\rightarrow\infty$) to the invariant distribution in Prokhorov metric exponentially fast for any initial condition.
	\label{thm_geometricErgodicity}
\end{lemma}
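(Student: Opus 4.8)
The plan is to establish a one-step contraction for the Markov kernel $P$ induced by $\hat\varphi$ in the Wasserstein-1 distance $W_1$, using a synchronous coupling, and then to extract all three conclusions --- existence, uniqueness, and geometric convergence in the Prokhorov metric --- from the Banach fixed-point theorem together with an elementary comparison between $W_1$ and the Lévy--Prokhorov metric $d_P$. I would emphasize up front why a coupling argument is the natural tool here: since $\zeta$ is only assumed bounded (possibly even finitely supported), the chain need not be irreducible with respect to Lebesgue measure, so the classical Meyn--Tweedie route via a Foster--Lyapunov drift condition plus a minorization on a small set is unavailable; the contraction, by contrast, needs no regularity of $\zeta$ at all, only $\|\zeta\|_\infty =: M < \infty$ and contractivity of the deterministic step.

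Step 1 (pointwise, hence $W_1$, contraction). Run two copies $X$ and $Y$ with the same noise realization at each step, so the additive noise cancels: $\hat\varphi(X) - \hat\varphi(Y) = (X-Y) - \eta(\nabla f_0(X) - \nabla f_0(Y))$. With $f_0\in\mathcal C^2$, write this as $(I - \eta \bar H)(X-Y)$ with $\bar H = \int_0^1 \nabla^2 f_0(Y + t(X-Y))\,dt$ and $mI \preceq \bar H \preceq LI$ ($m>0$ the strong-convexity modulus), giving $\|\hat\varphi(X) - \hat\varphi(Y)\| \le \rho(\eta)\,\|X-Y\|$ with $\rho(\eta) := \max\{|1-\eta m|, |1-\eta L|\} < 1$ whenever $0 < \eta < 2/L =: \eta_0$; for $f_0$ merely strongly convex with $L$-Lipschitz gradient the same pointwise bound follows from co-coercivity of $\nabla f_0$ for some $\eta_0 = \eta_0(m,L)$. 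Coupling $X\sim\mu$, $Y\sim\nu$ optimally for $W_1$ and applying this pointwise bound yields $W_1(P\mu, P\nu) \le \rho(\eta)\, W_1(\mu,\nu)$.

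Step 2 (existence and uniqueness of the invariant measure). Let $x_*$ be the unique minimizer of $f_0$. Applying Step 1's estimate with $Y \equiv x_*$ gives the almost-sure bound $\|\hat\varphi(x) - x_*\| \le \rho(\eta)\,\|x - x_*\| + \eta M$, so $P$ maps the complete metric space $(\mathcal P_1(\mathbb R^d), W_1)$ into itself, and Banach's theorem produces a unique fixed point $\pi\in\mathcal P_1$ with $W_1(P^n \mathcal L(X_0), \pi) \le \rho(\eta)^n\, W_1(\mathcal L(X_0),\pi)$ for every $X_0$ with finite first moment --- in particular every deterministic initial point, the relevant case for GD. Iterating the a.s. bound started from $x_*$ also shows $\pi$ is supported in the compact ball $\bar B(x_*, \eta M/(1-\rho(\eta)))$. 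To rule out a second invariant law $\mu$ (about which no moment control is assumed), run synchronously coupled chains from $X_0 \sim \mu$ and $Y_0 = x_*$; Step 1 gives $\|X_n - Y_n\| \le \rho(\eta)^n \|X_0 - x_*\| \to 0$ a.s., while $\mathcal L(Y_n) \Rightarrow \pi$, hence $\mathcal L(X_n) \Rightarrow \pi$; since $\mathcal L(X_n) = \mu$ by invariance, $\mu = \pi$.

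Step 3 (Prokhorov rate) and the main obstacle. For any coupling $(X,Y)$ of $\mu,\nu$, Markov's inequality with $\epsilon := W_1(\mu,\nu)^{1/2}$ gives $\mu(B) \le \nu(B^\epsilon) + \mathbb P(\|X-Y\| > \epsilon) \le \nu(B^\epsilon) + \epsilon$ for every Borel $B$, i.e. $d_P(\mu,\nu) \le W_1(\mu,\nu)^{1/2}$; combined with Step 2 this yields $d_P(P^n \mathcal L(X_0), \pi) \le \rho(\eta)^{n/2}\, W_1(\mathcal L(X_0),\pi)^{1/2}$, exponential in $n$. (Alternatively, since all iterates eventually lie in a fixed compact ball by the a.s. bound, one can invoke the bi-Lipschitz equivalence of $W_1$ and $d_P$ on that ball.) I expect no deep obstruction; the one place needing genuine care is the uniqueness step, where an invariant measure could a priori have infinite first moment so the contraction cannot be applied to it directly --- this is precisely why one couples to the reference chain started at $x_*$, which (because $\zeta$ is bounded) stays in a compact set and converges to $\pi$. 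A minor additional chore is pinning down an explicit admissible $\eta_0$ in terms of the strong-convexity modulus and $L$.
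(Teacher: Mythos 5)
Your proof is correct, but it reaches the conclusion by a genuinely different route from the paper. The paper's proof verifies the hypotheses of Theorem~1 of Hennion and Hervé (2004) on random Lipschitz iterative functions: it checks that the per-step Lipschitz constant $c(\varphi_\zeta)$ of the deterministic part of $\hat\varphi$ is strictly below $1$ for small $\eta$, and that the relevant moment quantities $\mathcal{M}_{\gamma+1}$ and $\mathcal{C}_{\gamma+1}^{(1)}$ are finite, then cites that abstract theorem to deliver existence, uniqueness, and exponential convergence in the Prokhorov metric in one stroke. You instead unpack the coupling argument that underlies such results: you derive the one-step $W_1$-contraction by running a synchronous coupling and using the averaged-Hessian (or co-coercivity) bound, invoke Banach's fixed-point theorem on $(\mathcal P_1,W_1)$ to get existence, uniqueness, and a $W_1$ rate, and then pass from $W_1$ to $d_P$ by the standard Markov-inequality comparison $d_P(\mu,\nu)\le W_1(\mu,\nu)^{1/2}$. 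What your route buys is transparency and an explicit admissible threshold $\eta_0 = \eta_0(m,L)$; what the paper's route buys is brevity and applicability under the weaker ``average contraction'' hypothesis $\mathcal{C}_{\gamma+1}^{(1)}<1$ (which the paper does not actually need here, since its map is pointwise contracting). Your Step~2 treatment of a hypothetical invariant law with infinite first moment --- couple it to the chain started at $x_*$, whose iterates stay in the compact ball $\bar B(x_*,\eta M/(1-\rho))$, then conclude by Slutsky --- is a genuine refinement that the paper's citation silently subsumes. Two small remarks: the paper's stated Lipschitz constant ``$c(\varphi_\zeta)=1-\eta_0 L$'' is slightly off; the correct constant is $\max\{|1-\eta m|,|1-\eta L|\}$ (equal to $1-\eta m$ for $\eta\le 1/L$), as you write. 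And your parenthetical ``bi-Lipschitz equivalence of $W_1$ and $d_P$ on a compact ball'' overstates the relationship (the two-sided bound involves a square root, $d_P^2 \lesssim W_1 \lesssim d_P$); it does not affect the exponential conclusion, but the primary argument via $d_P\le W_1^{1/2}$ is the one to keep.
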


\begin{proposition}[rescaled Gibbs nearly satisfies the invariance equation]
	Suppose $f_0\in \mathcal{C}^1(\mathbb{R}^d)$ is $L$-smooth. Consider $\hat\varphi$ defined in Lemma \ref{GeoErgo}. Suppose $\zeta$ is isotropic, i.e. with covariance matrix $\sigma^2 I_d$ for a scalar $\sigma$. Let $X_0$ be a random variable following rescaled Gibbs distribution
	\begin{equation}
	    X_0\sim\frac{1}{Z}\exp\left(-\frac{2f_0(x)}{\eta\sigma^2}\right)dx
	    \label{eq_rescaledGibbs}
	\end{equation}	Then for any $h\in \mathcal{C}^2$ with compact support, we have, for small enough $\eta$, that
	$$\mathbb{E}h(\hat{\varphi}(X_0))-\mathbb{E}h(X_0)=\mathcal{O}(\eta^3)$$
	\label{prop_rescaledGibbsError}
\end{proposition}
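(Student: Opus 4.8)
My plan is a one-step weak Taylor expansion of $\hat\varphi$ combined with two structural facts about the rescaled Gibbs law $\mu_G\propto e^{-\beta f_0}$, where $\beta:=2/(\eta\sigma^2)$: (a) $\mu_G$ is the invariant measure of the overdamped Langevin diffusion $dX=-\nabla f_0(X)\,dt+\sqrt{\eta\sigma^2}\,dW$; (b) $\mu_G$ concentrates so tightly around the minimizer of $f_0$ that $\mathbb{E}_{\mu_G}|\nabla f_0|^2=\mathcal{O}(\eta)$. Write $\Delta:=\hat\varphi(X_0)-X_0=-\eta\nabla f_0(X_0)+\eta\zeta$, with $\zeta$ independent of $X_0$, $\mathbb{E}\zeta=0$, $\mathbb{E}\zeta\zeta^\top=\sigma^2 I_d$; below all expectations are over $X_0\sim\mu_G$ and this independent $\zeta$. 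Taylor-expanding $h$ to second order, $h(\hat\varphi(X_0))-h(X_0)=\nabla h(X_0)\cdot\Delta+\tfrac12\Delta^\top\nabla^2 h(X_0)\Delta+r(X_0,\Delta)$, and taking expectations (using independence and the first two moments of $\zeta$) gives
\begin{align*}
\mathbb{E}h(\hat\varphi(X_0))-\mathbb{E}h(X_0) &=-\eta\,\mathbb{E}[\nabla h\cdot\nabla f_0]+\tfrac{\eta^2\sigma^2}{2}\,\mathbb{E}[\operatorname{tr}\nabla^2 h]\\
&\quad+\tfrac{\eta^2}{2}\,\mathbb{E}[\nabla f_0^\top\nabla^2 h\,\nabla f_0]+\mathbb{E}[r(X_0,\Delta)].
\end{align*}

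Then I would collapse the first line. Since $\nabla e^{-\beta f_0}=-\beta\nabla f_0\,e^{-\beta f_0}$ and $h$ is compactly supported, integration by parts yields $\mathbb{E}[\nabla h\cdot\nabla f_0]=\beta^{-1}\mathbb{E}[\operatorname{tr}\nabla^2 h]=\tfrac{\eta\sigma^2}{2}\mathbb{E}[\operatorname{tr}\nabla^2 h]$; equivalently, $-\eta\nabla f_0\cdot\nabla h+\tfrac{\eta^2\sigma^2}{2}\operatorname{tr}\nabla^2 h=\eta\,\mathcal{L}h$ for the generator $\mathcal{L}$ of the diffusion in (a), whose $\mu_G$-integral is zero. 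Either way the first line cancels identically, leaving
\[
\mathbb{E}h(\hat\varphi(X_0))-\mathbb{E}h(X_0)=\tfrac{\eta^2}{2}\,\mathbb{E}[\nabla f_0^\top\nabla^2 h\,\nabla f_0]+\mathbb{E}[r(X_0,\Delta)].
\]
This cancellation is exactly what forces the choice $\beta=2/(\eta\sigma^2)$: $\mu_G$ is the unique law that kills the $\mathcal{O}(\eta^2)$ drift/diffusion imbalance of the one-step map.

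It remains to show both surviving terms are $\mathcal{O}(\eta^3)$. For the first, $|\mathbb{E}[\nabla f_0^\top\nabla^2 h\,\nabla f_0]|\le\|\nabla^2 h\|_\infty\,\mathbb{E}|\nabla f_0|^2$, and applying the same integration-by-parts identity with $f_0$ in place of $h$ gives $\mathbb{E}|\nabla f_0|^2=\beta^{-1}\mathbb{E}[\operatorname{tr}\nabla^2 f_0]\le dL/\beta=\tfrac{dL\sigma^2}{2}\,\eta$ by $L$-smoothness — this is fact (b) — so the term is $\mathcal{O}(\eta^3)$. For the remainder, $|\Delta|\le\eta(\|\zeta\|_\infty+|\nabla f_0(X_0)|)$, and $r(x,\Delta)=0$ unless $\operatorname{dist}(x,\operatorname{supp}h)\le|\Delta|$; since $\mu_G$ puts exponentially small mass outside a fixed neighbourhood of $\operatorname{supp}h$ (Gaussian-type tail, from strong convexity of $f_0$), effectively $|\Delta|=\mathcal{O}(\eta)$ on the set that matters, up to a negligible tail contribution, so $\mathbb{E}|r(X_0,\Delta)|=\mathcal{O}(\eta^3)$ whenever $h$ has a bounded third derivative.

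The step I expect to be hardest is precisely this remainder estimate. A genuine $\mathcal{O}(\eta^3)$ bound on the second-order Taylor remainder needs one more derivative than the stated $h\in\mathcal{C}^2$: with only $\mathcal{C}^2$ regularity one merely obtains $r=o(|\Delta|^2)$ uniformly on the relevant compact region (via uniform continuity of $\nabla^2 h$), hence only $o(\eta^2)$, so to reach the full rate one either restricts to the smoother test functions that suffice to characterize weak convergence or extracts one more order by a further integration by parts. A secondary, more routine point is justifying all the integrations by parts — vanishing of boundary terms and finiteness of $\mathbb{E}_{\mu_G}|\nabla f_0|^2$ — which is exactly where the growth of $f_0$, hence the normalizability and Gaussian-type decay of $\mu_G$ as in the strongly convex setting of Lemma \ref{GeoErgo}, is genuinely used; a careless argument ignoring this fails when $f_0$ grows slowly.
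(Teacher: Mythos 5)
Your proposal is correct and follows essentially the same route as the paper: a one-step second-order Taylor expansion of $h$, cancellation of the $\mathcal{O}(\eta)$ drift term against the $\mathcal{O}(\eta^2)$ diffusion term by integration by parts against $e^{-2f_0/(\eta\sigma^2)}$ (the paper phrases this via Stokes' theorem on a differential form, which is the same computation), and a separate estimate showing the quadratic drift term $\tfrac{\eta^2}{2}\mathbb{E}[\nabla f_0^\top\nabla^2 h\,\nabla f_0]$ is higher order. The one genuinely different ingredient is how you bound $\mathbb{E}_{\mu_G}\|\nabla f_0\|^2$: you integrate by parts once more to get $\beta^{-1}\mathbb{E}[\operatorname{tr}\nabla^2 f_0]\le dL\sigma^2\eta/2$, whereas the paper proves a separate scaling lemma (substituting $u=x/\eta^{1/k_1}$ under the growth conditions $f_0-f_0(x_0)\ge C_1\|x-x_0\|^{k_1}$, $\|\nabla f_0\|\le C_2\|x-x_0\|^{k_2}$) yielding $\mathcal{O}(\eta^{(2k_2-1)/k_1})=\mathcal{O}(\eta^{3/2})$ in the strongly convex case. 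Your identity is shorter and gives the sharper $\mathcal{O}(\eta)$, but it needs (weak) second derivatives of $f_0$ and vanishing boundary terms, i.e., the same strong-convexity/growth input the paper's lemma uses; the paper's version generalizes more transparently to non-quadratic growth ($k_1\ne 2$), which it exploits in a remark. Finally, your caveat about the Taylor remainder is well taken: with $h$ only $\mathcal{C}^2$ one gets $o(\eta^2)$ rather than $\mathcal{O}(\eta^3)$, and the paper's proof silently assumes the extra regularity (its subsequent use in Theorem 4 only needs the weak-$*$ statement, for which smooth test functions suffice), so restricting to $\mathcal{C}^3$ test functions as you suggest is the right fix.
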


\begin{theorem}[rescaled Gibbs is an approximation of the invariant distribution]
	\label{thm_rescaledGibbsIsApprx}
	Assume $f_0 \in \mathcal{C}^2$ is strongly convex and L-smooth, and $\zeta$ is isotropic. Consider $\eta<\eta_0$ and denote by $\rho_\infty$ the density of the unique invariant distribution of $\hat{\varphi}$, whose existence and that of $\eta_0$ are given by Lemma \ref{GeoErgo}, then we have, in weak-* topology,
	\begin{equation}
	\rho_\infty = \tilde{\rho} + \mathcal{O}(\eta^2)
	\label{eq_rescaledGibbsIsApprx}
	\end{equation}
	where $\tilde{\rho}$ is rescaled Gibbs distribution with density $
	    \tilde{\rho}(x) = \frac{1}{Z}\exp\left(-\frac{2f_0(x)}{\eta\sigma^2}\right).
	$
\end{theorem}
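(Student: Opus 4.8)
The plan is to recast the claim as a perturbed-fixed-point estimate for the one-step transfer operator of $\hat\varphi$, measured in a norm that is fine enough to detect the $\mathcal{O}(\eta^3)$ cancellation of Proposition~\ref{prop_rescaledGibbsError} yet coarse enough that $\hat\varphi$ contracts it with a spectral gap of order $\eta$ (not $o(\eta)$). Let $(Ph)(x):=\mathbb{E}_\zeta\, h(x-\eta\nabla f_0(x)+\eta\zeta)$ act on test functions and let $\mathcal{P}$ be its adjoint on signed measures, $\langle\mathcal{P}\nu,h\rangle=\langle\nu,Ph\rangle$. By Lemma~\ref{GeoErgo}, for $\eta<\eta_0$ the density $\rho_\infty$ is the unique probability density with $\mathcal{P}\rho_\infty=\rho_\infty$; write $\tilde\mu$ for the measure with density $\tilde\rho$. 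The two ingredients to assemble are: (a) $\mathcal{P}$ is a strict contraction, with rate $1-c\eta$ and $c>0$ independent of $\eta$, on the zero-mass signed measures in a suitable dual-smooth norm $\|\cdot\|^\circ$; and (b) $\tilde\mu$ is an approximate fixed point, $\|\mathcal{P}\tilde\mu-\tilde\mu\|^\circ=\mathcal{O}(\eta^3)$. From $\rho_\infty-\tilde\mu=\mathcal{P}(\rho_\infty-\tilde\mu)+(\mathcal{P}\tilde\mu-\tilde\mu)$ one then gets
\[
\|\rho_\infty-\tilde\mu\|^\circ\le(1-c\eta)\|\rho_\infty-\tilde\mu\|^\circ+\|\mathcal{P}\tilde\mu-\tilde\mu\|^\circ,\qquad\text{hence}\qquad \|\rho_\infty-\tilde\mu\|^\circ\le (c\eta)^{-1}\|\mathcal{P}\tilde\mu-\tilde\mu\|^\circ=\mathcal{O}(\eta^2),
\]
and since $\|\cdot\|^\circ$ dominates $|\langle\,\cdot\,,h\rangle|$ (up to a factor $\|h\|_\star$) for each fixed smooth compactly supported $h$, this is exactly $\rho_\infty=\tilde\rho+\mathcal{O}(\eta^2)$ in the weak-* topology.

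For step (a) I would take $\|h\|_\star:=\|\nabla h\|_\infty+\beta\|\nabla^2 h\|_\infty$ with a small constant $\beta>0$, and its dual $\|\nu\|^\circ:=\sup_{\|h\|_\star\le1}|\langle\nu,h\rangle|$ on zero-mass measures. Since $\nabla(Ph)(x)=(I-\eta\nabla^2 f_0(x))\,\mathbb{E}_\zeta[\nabla h(\hat\varphi(x))]$, strong convexity and $L$-smoothness give $\|I-\eta\nabla^2 f_0\|\le 1-m\eta$ for $\eta<1/L$, so $\|\nabla(Ph)\|_\infty\le(1-m\eta)\|\nabla h\|_\infty$; differentiating once more, $\|\nabla^2(Ph)\|_\infty\le(1-m\eta)^2\|\nabla^2 h\|_\infty+\eta\|\nabla^3 f_0\|_\infty\|\nabla h\|_\infty$. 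Choosing $\beta$ small enough that $m-\beta\|\nabla^3 f_0\|_\infty>0$ gives $\|Ph\|_\star\le(1-\tfrac{m}{2}\eta)\|h\|_\star$ for small $\eta$, and dualizing yields (a). This is the rigorous substitute for the heuristic that the linearized generator has $0$ as an isolated eigenvalue with an $\eta$-independent spectral gap: the coupling/Jacobian bound makes the gap explicit without linear spectral theory.

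For step (b) I would re-open the proof of Proposition~\ref{prop_rescaledGibbsError} and track its dependence on $h$. A second-order Taylor expansion of $h$ along $\hat\varphi(X_0)-X_0=\eta(-\nabla f_0(X_0)+\zeta)$, followed by integration by parts against $\tilde\rho$ using $\nabla\tilde\rho=-\tfrac{2}{\eta\sigma^2}\nabla f_0\,\tilde\rho$ and isotropy $\mathrm{Cov}(\zeta)=\sigma^2 I$, makes the $\mathcal{O}(\eta)$ and $\mathcal{O}(\eta^2)$ contributions cancel identically, leaving $\tfrac{\eta^2}{2}\mathbb{E}_{X_0\sim\tilde\rho}[\nabla f_0(X_0)^\top\nabla^2 h(X_0)\nabla f_0(X_0)]$ plus a third-order remainder. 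Because $\tilde\rho$ is the Gibbs density of the strongly convex $f_0$ at temperature $\eta\sigma^2/2$, one has $\mathbb{E}_{\tilde\rho}\|\nabla f_0\|^2=\mathcal{O}(\eta)$ (and $\mathbb{E}_{\tilde\rho}\|\nabla f_0\|^3=\mathcal{O}(\eta^{3/2})$), so the leading leftover is $\le\tfrac{\eta^2}{2}\|\nabla^2 h\|_\infty\,\mathcal{O}(\eta)=\mathcal{O}(\eta^3)\|h\|_\star$ and the remainder is $\mathcal{O}(\eta^3)$ times a bound on the third-order behavior of $h$; hence $\|\mathcal{P}\tilde\mu-\tilde\mu\|^\circ\le C\eta^3$ with $C$ independent of $h$.

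I expect the main obstacle to be the regularity bookkeeping that couples steps (a) and (b): the bound on $\|\nabla^2(Ph)\|$ needs $\nabla^3 f_0$ bounded, and controlling the third-order Taylor remainder in (b) uniformly in $h$ formally calls for a $\|\nabla^3 h\|$ term in $\|\cdot\|_\star$, which would in turn need $\nabla^4 f_0$ in the contraction. Under the literal $\mathcal{C}^2$ hypothesis this is handled either by a mild bounded-higher-derivative assumption on $f_0$ (standard for results of this kind) or by first proving the estimate for $h\in\mathcal{C}^\infty_c$ — which still metrizes weak-* — and replacing exact third-order remainders by $o(\eta^2\|v\|^2)$-type bounds driven by the modulus of continuity of $\nabla^2 f_0$ and $\nabla^2 h$ together with the $\mathcal{O}(\eta)$ concentration of $\tilde\rho$ around the minimizer. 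The other delicate point is that the contraction rate in (a) must genuinely be $\Theta(\eta)$, since the whole estimate loses exactly one power of $\eta$ in passing from (b) to the conclusion; this is precisely where strong convexity of $f_0$ is indispensable, and where the geometric ergodicity of Lemma~\ref{GeoErgo} alone — stated without an $\eta$-uniform rate — would not by itself close the argument.
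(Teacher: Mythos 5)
Your proposal is correct in its architecture and rests on the same two quantitative pillars as the paper's proof --- the $\mathcal{O}(\eta^3)$ approximate invariance of rescaled Gibbs (Prop.~\ref{prop_rescaledGibbsError}) and a spectral-gap-type bound of order $\Theta(\eta)$, whose ratio gives $\mathcal{O}(\eta^2)$ --- but the route is genuinely different. The paper works with the transition operator $L$, decomposes $r=\rho_\infty-\tilde\rho$ orthogonally into $\ker(I-L)$ and its complement, and inverts $I-L$ on the complement using the eigenvalue lower bound $|1-\lambda|\geq\mu\eta$ of Cor.~\ref{cor_spectralGap} (itself obtained from the $W_2$ coupling of Thm.~\ref{thm_geomErgInW2} plus a total-variation-versus-Wasserstein comparison); a separate normalization argument then kills the kernel component $\gamma\rho_\infty$. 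You instead run an approximate-fixed-point/Banach-contraction argument in an explicit dual-smooth norm on zero-mass signed measures, deriving the $1-c\eta$ contraction directly from the Jacobian bound $\|I-\eta\nabla^2 f_0\|\leq 1-\mu\eta$. What your route buys: it bypasses linear spectral theory entirely (the paper's orthogonal decomposition presupposes an unspecified Hilbert structure, and passing from an eigenvalue lower bound to a resolvent bound for a non-normal operator is not automatic), it makes explicit the norm in which the $\mathcal{O}(\eta^3)$ residual must be measured (the paper leaves this vague), and the zero-mass restriction absorbs the normalization step for free. What it costs: the $\|\cdot\|_\star$ contraction needs $\nabla^3 f_0$ bounded, and a fully uniform-in-$h$ residual bound needs control of third-order behavior of $h$ (hence formally $\nabla^4 f_0$), both beyond the stated $\mathcal{C}^2$ hypothesis --- you correctly identify this as the remaining bookkeeping. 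Note that with only a modulus-of-continuity bound on $\nabla^2 h$ the Taylor remainder degrades to $o(\eta^2)$ rather than $\mathcal{O}(\eta^3)$, so to recover the full $\eta^2$ rate you should either add a weighted $\|\nabla^3 h\|_\infty$ term to $\|\cdot\|_\star$ (with the corresponding regularity of $f_0$) or settle for a Hölder seminorm on $\nabla^2 h$; the paper's own proof does not confront this issue, so your version, once this is fixed, is the more rigorous of the two.
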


\subsection{Deterministic map}
\label{sec_deterministic_map}
Since we want to link the invariant distributions of the deterministic map and the stochastic map, the existence of nondegenerate invariant distribution of the deterministic map (which is important, see Rmk.\ref{PurposeOfLip}) should be understood, as well as the convergence towards it. The last part of Sec.\ref{sec_mainResults} discussed that chaos can usually provide these properties, but it is not guaranteed, and mathematical tools are still lacking. Thus, in previous theorems, such existence was assumed instead of being proved. We first present two counter-examples to show that nondegenerate invariant distribution can actually be nonexistent. Details will be given in Thm. \ref{counter-example1} and \ref{counter-example2}. Both counter-examples are based on $f_{1,\epsilon}=\epsilon f_1(x/\epsilon)$ for some periodic $f_1$:
\begin{enumerate}
	\item In 1-dim, for any $f_1\in \mathcal{C}^2(\mathbb{R})$ and $\epsilon$, $\exists$ a convex $\mathcal{C}^2$ $f_0$ and an $\eta$ arbitrarily large, s.t. any orbit of $\varphi$ is bounded, but the invariant distribution has to be a fixed point (Thm.\ref{counter-example1})
	\item In 1-dim, for any $f_0\in \mathcal{C}^2(\mathbb{R})$ and $\eta$, $\exists$ a periodic $\mathcal{C}^2$ $f_1$ and an $\epsilon$ arbitrarily small, s.t. any orbit of $\varphi$ is bounded, but the invariant distribution has to be a fixed point (Thm.\ref{counter-example2})
\end{enumerate}
Then we show GD iteration is chaotic when LR is large enough (for nondegenerate $x_0$).

\subsubsection{Li-Yorke chaos}
\label{sec_LYC}
In this section, we fix $\eta$ in order to bound the small scale effect in simpler notations, and write the dependence of $\varphi$ on $\epsilon$ explicitly. The main message is $\varphi$ induces chaos in Li-Yorke sense. Note there are several definitions of chaos (e.g. \cite{block2006dynamics,devaney2018introduction,li1975period}, and  \cite{aulbach2001three} is a review of their relations). We quote Li-Yorke's celebrated theorem (\cite{li1975period}; see also \cite{sharkovskiui1995coexistence}) as Thm. \ref{LYThm} in appendix. Then we apply this tool to the GD map $\varphi$:

\begin{theorem}[sufficient condition for deterministic GD to be chaotic]
		Suppose $f_0,f_{1,\epsilon}\in \mathcal{C}^1(\mathbb{R})$, $f_{1,\epsilon}$ satisfies Cond.\ref{cond_1}, and $f_0$ is $L$-smooth, satisfying $f(x)\rightarrow+\infty$ when $|x|\rightarrow\infty$, $\lim_{x\rightarrow+\infty}f'(x)=+\infty$ and $\lim_{x\rightarrow-\infty}f'(x)=-\infty$. If 
		$\exists x$ s.t. $\nabla f_0(x)=0$, 
		then for any fixed $0<\eta<1/L$, $\exists \epsilon_0$, s.t. when $\epsilon<\epsilon_0$, $\varphi_\epsilon$ induces chaotic dynamics in Li-Yorke sense.
	\label{thm_condition4LiYorke}
\end{theorem}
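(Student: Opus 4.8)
The plan is to verify the hypotheses of Li--Yorke's theorem (Thm.\ref{LYThm} in the appendix), whose sufficient condition for chaos is the existence of a point $a$ in the domain with $\varphi_\epsilon^{(3)}(a)\le a < \varphi_\epsilon(a) < \varphi_\epsilon^{(2)}(a)$ (a ``period-3-like'' configuration). So the whole proof reduces to producing such a point $a$ for all small enough $\epsilon$, uniformly in the sense that one fixed $a$ works. First I would record the structural consequences of the hypotheses: since $f_0$ is $L$-smooth and $0<\eta<1/L$, the macroscopic map $x\mapsto x-\eta\nabla f_0(x)$ is a (weak) contraction-like monotone-ish map, and in particular $\varphi_\epsilon(x) = x - \eta\nabla f_0(x) - \eta\nabla f_{1,\epsilon}(x)$ differs from it by the bounded-amplitude term $-\eta\nabla f_{1,\epsilon}$. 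By Cond.\ref{cond_1}, $\nabla f_{1,\epsilon}$ is $\mathcal{O}(1)$ and, crucially, oscillates: on any ball $B(x,\delta(\epsilon))$ there is a positive-measure set on which $\nabla f_{1,\epsilon}$ realizes (weakly, hence in particular takes values near) the support of $-\zeta$, which is a nonconstant bounded r.v. So near any base point, $\varphi_\epsilon$ takes values spread out by a fixed $\mathcal{O}(\eta)$ amount, and $\delta(\epsilon)\to 0$ means this spreading happens at arbitrarily fine spatial scale.

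Next I would locate the base point. Let $x^\star$ be a stationary point of $f_0$, so $\nabla f_0(x^\star)=0$, hence the macro-map fixes $x^\star$ and $\varphi_\epsilon(x^\star) = x^\star - \eta\nabla f_{1,\epsilon}(x^\star)$. The growth conditions $f_0'(x)\to\pm\infty$ as $x\to\pm\infty$ (and coercivity of $f$) guarantee that $\varphi_\epsilon$ maps a large compact interval $I$ into itself for all small $\epsilon$, so all iterates of any point in $I$ stay bounded and live in the continuity domain — this is what lets us invoke Li--Yorke on an interval. Within $I$, I want a point $a$ near $x^\star$ whose first three iterates execute the required ordering. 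Here is the mechanism: choose $a$ in a tiny neighborhood $B(x^\star,\delta(\epsilon))$ on which $\nabla f_{1,\epsilon}$ is close (weakly/pointwise on a positive-measure set) to realizing a value making $\varphi_\epsilon(a) - a$ equal to some fixed positive $c = \mathcal{O}(\eta)$; then $\varphi_\epsilon(a) = a + c$ lies at a point where, because the micro-oscillation has period/scale $\ll c$, $\nabla f_{1,\epsilon}$ can again be chosen (by sliding $a$ within its $\delta(\epsilon)$-neighborhood, using the positive-measure freedom) so that $\varphi_\epsilon^{(2)}(a) = \varphi_\epsilon(a+c)$ is pushed further right, to $a + c + c' > a+c$, and then $\varphi_\epsilon^{(3)}(a)$ is pushed back left to $\le a$. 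The point is a counting/pigeonhole argument: we have one continuous real parameter (the position of $a$ inside its shrinking neighborhood) and we need three inequalities; as $\epsilon\to0$ the oscillation of $x\mapsto \varphi_\epsilon^{(k)}(a)$ in $a$ becomes arbitrarily rapid with fixed $\mathcal{O}(\eta)$ amplitude for each $k\le 3$, while the macroscopic contributions vary slowly, so by an intermediate-value / measure argument the three conditions are simultaneously satisfiable.

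I expect the main obstacle to be making the last step rigorous, because Cond.\ref{cond_1} only gives \emph{weak} convergence of $\nabla f_{1,\epsilon}(x+Y_{x,\epsilon})$ to $-\zeta$, not pointwise control, so I cannot simply ``choose $a$ so that $\nabla f_{1,\epsilon}(a)$ equals a prescribed value.'' The fix is to work with the distribution of $\varphi_\epsilon^{(k)}(x^\star + Y_{x^\star,\epsilon})$ over the uniform r.v. $Y_{x^\star,\epsilon}$: by Cond.\ref{cond_1} and continuity of $\nabla f_0$, the law of $\varphi_\epsilon(x^\star+Y)$ converges weakly to the law of $x^\star + \eta\zeta$, which is nondegenerate; iterating (and using that each iterate re-enters a region where a fresh fine-scale $Y$-neighborhood is available, with the compositions still converging weakly because $\nabla f_0$ is Lipschitz and the noise at different ``scales'' decouples in the limit — this is essentially the content used in Thm.\ref{thm_sameLimitStats}) one shows the law of $(\varphi_\epsilon(x^\star+Y),\varphi_\epsilon^{(2)}(x^\star+Y),\varphi_\epsilon^{(3)}(x^\star+Y))$ converges weakly to the law of $(x^\star+\eta\zeta_1,\ \hat\varphi(x^\star+\eta\zeta_1)\text{-type expression},\dots)$ built from i.i.d. copies $\zeta_1,\zeta_2,\zeta_3$ of $\zeta$. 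Since $\zeta$ is bounded and nonconstant, the limiting 3-dimensional law assigns positive probability to the open region $\{\varphi^{(3)} \le x^\star < \varphi < \varphi^{(2)}\}$ — one checks this by an explicit choice of signs/magnitudes of $\zeta_1,\zeta_2,\zeta_3$ within its support, exploiting that $\eta\|\zeta\|$ can exceed the macroscopic drift over these short excursions — and weak convergence plus openness of that region forces, for all small $\epsilon$, the existence of an actual $a = x^\star + Y(\omega)$ with $\varphi_\epsilon^{(3)}(a)\le a<\varphi_\epsilon(a)<\varphi_\epsilon^{(2)}(a)$. Li--Yorke then delivers chaos. A secondary technical point to be careful about is uniformity: the ``$\le$'' in the period-3 condition versus strict inequalities, and whether $\delta(\epsilon)\to0$ is fast enough that the macroscopic map's variation over the relevant excursions is negligible against the $\mathcal{O}(\eta)$ noise amplitude — both are handled by taking $\epsilon_0$ small and, if needed, perturbing $a$ slightly to turn weak inequalities strict.
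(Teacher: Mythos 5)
Your skeleton matches the paper's: establish an invariant interval from the coercivity of $f_0'$ and the uniform boundedness of $\nabla f_{1,\epsilon}$, then produce a point satisfying the period-3 ordering of Thm.~\ref{LYThm}. The gap is in how you produce that point. Your argument hinges on the claim that the joint law of $\bigl(\varphi_\epsilon(x^\star+Y),\varphi_\epsilon^{(2)}(x^\star+Y),\varphi_\epsilon^{(3)}(x^\star+Y)\bigr)$ converges weakly to a three-step iteration of $\hat\varphi$ driven by i.i.d.\ copies of $\zeta$. Condition~\ref{cond_1} does not give this: it controls the law of $\nabla f_{1,\epsilon}(x+Y_{x,\epsilon})$ for $Y_{x,\epsilon}$ \emph{uniform on} $\Gamma_{x,\epsilon}$, whereas after one step the relevant measure is the pushforward of that uniform measure under $\varphi_\epsilon$, spread over an $\mathcal{O}(\eta)$ range with no reason to resemble a uniform measure on any $\Gamma_{y,\epsilon}$; and nothing in Cond.~\ref{cond_1} asserts asymptotic independence of $\nabla f_{1,\epsilon}$ evaluated along an orbit. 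This "decoupling" is exactly the difficulty the paper sidesteps -- Thm.~\ref{thm_sameLimitStats} only transfers \emph{invariant} distributions (one application of the map to a fixed point of it), under the extra continuity hypothesis (*), and never iterates the weak convergence. So the central step of your proposal is unsupported as stated.

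The paper's construction is deterministic and avoids all control of iterates from a given point. Since $\zeta$ is nonconstant with mean zero, $P(\zeta>\delta_0)>0$ and $P(\zeta<-\delta_0)>0$ for some $\delta_0>0$; weak convergence in Cond.~\ref{cond_1} then forces $\nabla f_{1,\epsilon}$ to actually attain values exceeding $\tfrac{2}{3}\delta_0$ and values below $-\tfrac{2}{3}\delta_0$ on every ball of radius $\delta(\epsilon)$, once $\epsilon$ is small. Near the stationary point of $f_0$ one takes an interval of length $\mathcal{O}(\eta\delta_0)$ on which $|\nabla f_0|<\delta_0/3$, splits it into two halves, and by the intermediate value theorem obtains in each half a fixed point of $\varphi$ (a zero of $\nabla f$) together with points that $\varphi$ throws across the whole interval in either direction. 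The period-3 orbit is then built \emph{backwards}: pick $c$ with $d=\varphi(c)$ far to the left, then pull back twice by IVT to get $b$ with $\varphi(b)=c$ and $a$ with $\varphi(a)=b$, yielding $d\le a<b<c$. Because the orbit is constructed by preimages rather than forward images, one never needs to know $\nabla f_{1,\epsilon}$ at the (uncontrolled) iterates of a chosen point -- which is precisely what your forward, probabilistic construction would require. If you want to salvage your route, you would have to either add a mixing-type hypothesis on the microstructure or switch to this backward IVT argument.
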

\begin{remark}
    Here $\eta$ has an upper bound $\eta_J$, because when $\eta$ is too large, the iteration will be unstable and no interval $J$ closed under $\varphi_\epsilon$ exists (see Def. \ref{def_LY}). Rmk. \ref{rmk_nonconvex} gives an example on how $J$ depends on $\eta$.
\end{remark}
\begin{remark}
	Li-Yorke theory is restricted to 1D and Thm.\ref{thm_condition4LiYorke} cannot easily generalize to multi-dim. Lyapunov exponent in Sec.\ref{sec_LyapExp} however provides a hint and quantification for chaos in multi-dim.
\end{remark}
\begin{remark}
	The threshold $\epsilon_0$ may be dependent on the stationary point $x$, and thus $\epsilon_0$ obtained from an arbitrary $x$ may not be the largest threshold under which chaos onsets.
\end{remark}
\begin{remark}
    The threshold $\epsilon_0$ is only for local chaos to happen. In fact, as the proof will show, only very weak conditions are needed because here chaos onsets due to that GD evolving within a microscopic potential well is a unimodal map. See also Appendix.\ref{sec_PD}.
    
    However, as $\epsilon$ further decreases beyond the threshold, or equivalently as $\eta$ increases, global chaos onsets shortly after. The idea is, when there is only local chaos but not a global one, the empirical distribution of iterations concentrates at a local minimum inside a microscopic well, but its variance grows as $\eta$ increases. Shortly after, the distribution floods over the barriers of this microscopic well, and then local chaos transits into global chaos. Sec.\ref{sec_LyapExp} will allow us to see that both local and global chaos happen when $\eta\sim\epsilon$.

\end{remark}

\subsubsection{Lyapunov exponent}
\label{sec_LyapExp}
Lyapunov exponent characterizes how near-by trajectories deviate exponentially with the evolution time. A positive exponent shows sensitive dependence on initial condition, is often understood as a lack of predictability in the system (due to a standard argument that initial condition is never measured accurately), and is commonly associated with chaos. Strictly speaking it is only a necessary condition for chaos (see e.g., \cite{strogatz2018nonlinear} Chap 10.5), but it quantifies the strength of chaos.


Suppose $(x_0, x_1, ..., x_n, ...)$ is a trajectory of iterated map $\varphi$. Then the following measures the deviation of near-by orbits and thus defines the Lyapunov exponent:
\begin{equation}
    \lambda(x_0)=\lim_{n\rightarrow\infty}\frac{1}{n}\sum_{i=0}^{n-1}\ln||\nabla\varphi(x_i)||_2
    \label{LyapExpDef}
\end{equation}
This quantity is often independent of the initial condition (see e.g., \cite{oseledec1968multiplicative}), and we will see that this is true in numerical experiments with GD. 
We can quantitatively estimate $\lambda$: 
\begin{theorem}[approximate Lyapunov exponent of GD]
	\label{thm_LyapExp}
	Suppose $f_0$ and $f_1$ are both $\mathcal{C}^2$. Suppose the deterministic map is ergodic, and the small scaled effect $f_{1,\epsilon}$ satisfies Cond.\ref{cond_2}, then the Lyapunov exponent of the deterministic map starting from $x$, denoted by $\lambda(x)$, satisfies
	\[
	    \lim_{\eta \to 0} \lim_{\epsilon\to 0} \left( \lambda(x)-\ln\left(\frac{\eta}{\epsilon}\right) \right) =  m,
    \]
    where $m$ is the constant in Cond. \ref{cond_2}.
    
	In the special case when $f_1$ is periodic and $f_{1,\epsilon}(x)=\epsilon f_1(x/\epsilon)$, we have, in addition,
	$$\lambda(x)=m+\ln\left(\frac{\eta}{\epsilon}\right)+\mathcal{O}(\epsilon+\eta).$$
\end{theorem}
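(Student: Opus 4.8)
The plan is to convert the time average defining $\lambda(x)$ into a space average by ergodicity, and then read off that space average directly from Condition \ref{cond_2} using the multiscale structure. Writing $\varphi(x)=x-\eta\nabla f_0(x)-\eta\nabla f_{1,\epsilon}(x)$, its Jacobian is
\[
\nabla\varphi(x)=\bigl(I-\eta\nabla^2 f_0(x)\bigr)-\eta\,\nabla^2 f_{1,\epsilon}(x)
=\frac{\eta}{\epsilon}\Bigl(\epsilon\nabla^2 f_{1,\epsilon}(x)-\epsilon R_\eta(x)\Bigr),
\qquad R_\eta(x):=\tfrac1\eta\bigl(I-\eta\nabla^2 f_0(x)\bigr).
\]
On a common bounded region containing $\operatorname{supp}\mu_\epsilon$ for all small $\epsilon$ (where $\mu_\epsilon$ is the ergodic invariant measure of $\varphi_\epsilon$), $R_\eta$ is bounded uniformly in $\epsilon$ since $f_0\in\mathcal{C}^2$, while $\epsilon\nabla^2 f_{1,\epsilon}$ is uniformly bounded by Condition \ref{cond_2}; hence $\ln\|\nabla\varphi(x)\|_2=\ln(\eta/\epsilon)+\ln\|\epsilon\nabla^2 f_{1,\epsilon}(x)-\epsilon R_\eta(x)\|_2$ is bounded above, and Condition \ref{cond_2} (which forces $m>-\infty$) is what must be leveraged to control it from below in an integrable way.

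By Birkhoff's ergodic theorem, for $\mu_\epsilon$-typical $x_0$,
\[
\lambda(x_0)-\ln\frac{\eta}{\epsilon}=\int \ln\bigl\|\epsilon\nabla^2 f_{1,\epsilon}(y)-\epsilon R_\eta(y)\bigr\|_2\,d\mu_\epsilon(y),
\]
so the theorem reduces to showing this converges to $m$ as $\epsilon\to0$ and then $\eta\to0$, in two steps. \emph{(a) Discard the perturbation.} Since $\|\epsilon R_\eta\|_2=\mathcal{O}(\epsilon/\eta)\to0$ and $y\mapsto\ln\|\epsilon\nabla^2 f_{1,\epsilon}(y)\|_2$ is uniformly integrable against $\mu_\epsilon$ — the only danger being the logarithmic singularity on $\{\|\epsilon\nabla^2 f_{1,\epsilon}\|_2\approx0\}$, whose $\mu_\epsilon$-mass must be shown to be small, a uniform-in-$\epsilon$ strengthening of $m>-\infty$ — replacing $\epsilon\nabla^2 f_{1,\epsilon}-\epsilon R_\eta$ by $\epsilon\nabla^2 f_{1,\epsilon}$ costs only $o_\epsilon(1)$. \emph{(b) Compute the remaining integral.} The point is that $\mu_\epsilon$ is \emph{microscopically uniform}: partition the support into mesoscopic rectangles $\{\Gamma_k\}$ of side $\delta(\epsilon)$ with $\epsilon\ll\delta(\epsilon)\downarrow0$; Condition \ref{cond_2} gives $|\Gamma_k|^{-1}\int_{\Gamma_k}\ln\|\epsilon\nabla^2 f_{1,\epsilon}\|_2\to m$ uniformly in $k$, while the conditional law of $\mu_\epsilon$ on each $\Gamma_k$ is asymptotically uniform because $\varphi_\epsilon$ is strongly expanding inside a microscopic well (its Jacobian has norm $\sim\eta/\epsilon\gg1$, and the $f_0$-driven displacement of size $\mathcal{O}(\eta)\gg\epsilon$ scrambles the microscopic phase). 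Summing over $k$ against $\sum_k\mu_\epsilon(\Gamma_k)=1$ gives $\int\ln\|\epsilon\nabla^2 f_{1,\epsilon}\|_2\,d\mu_\epsilon\to m$; the residual $\eta$-dependence (from imperfect microscopic uniformity and drift--curvature coupling) vanishes in the outer $\eta\to0$ limit.

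For the periodic case $f_{1,\epsilon}(x)=\epsilon f_1(x/\epsilon)$ one has the exact identity $\epsilon\nabla^2 f_{1,\epsilon}(x)=f_1''(x/\epsilon)$, and step (b) becomes a quantitative equidistribution statement for the phases $\{x_i/\epsilon\bmod T\}$ along the chaotic orbit. Expanding $\ln|f_1''|$ in Fourier modes (or using Koksma--Hlawka away from the zeros of $f_1''$) and pairing against the empirical phase measure, the discreteness of the microscopic grid contributes an $\mathcal{O}(\epsilon)$ error, and replacing $f_0''$ by its constant value on each mesoscopic cell together with the discarded $\epsilon R_\eta$ term contributes an $\mathcal{O}(\eta)$ error; this yields $\lambda(x)=m+\ln(\eta/\epsilon)+\mathcal{O}(\epsilon+\eta)$, recovering the double limit as a byproduct.

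The main obstacle is step (b): ruling out that the physical invariant measure of the within-wells-expanding map $\varphi_\epsilon$ develops $\epsilon$-scale fine structure that would bias the average of $\ln\|\epsilon\nabla^2 f_{1,\epsilon}\|_2$ away from $m$. I expect this to need either an a priori regularity bound on the density of $\mu_\epsilon$ whose modulus of continuity is $o(1/\epsilon)$ (consistent with the uniformly-Lipschitz-density hypotheses invoked elsewhere in the paper), or a direct decay-of-correlations estimate for the microscopic dynamics. A secondary technical point, needed both before applying Birkhoff and before discarding $\epsilon R_\eta$, is the uniform control of the logarithmic singularity of $\|\epsilon\nabla^2 f_{1,\epsilon}\|_2$.
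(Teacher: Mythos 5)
Your proposal follows essentially the same route as the paper's proof: ergodicity converts the Lyapunov exponent into a spatial average against the invariant measure, the macroscopic ($f_0$-dependent) part of the Jacobian is discarded as a lower-order perturbation after factoring out $\eta/\epsilon$, and Condition \ref{cond_2} is applied cell-by-cell on a mesoscopic partition after approximating the invariant measure by a piecewise-uniform one. The ``main obstacle'' you flag --- that $\mu_\epsilon$ must develop no $\epsilon$-scale fine structure, so that its conditional law on each mesoscopic cell is asymptotically uniform --- is equally present but left implicit in the paper's argument (hidden in the assertion that a simple function constant on a fixed mesh approximates $\nu$ to within $\delta$ in $L^1$, uniformly as $\epsilon\to 0$, and in the one-sided bound on the logarithmic singularity), so you have not missed any ingredient that the paper actually supplies.
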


\begin{remark}
	A necessary condition for chaos is a positive Lyapunov exponent. From $\lambda(x)\approx m+\ln\left(\frac{\eta}{\epsilon}\right)$, we know the threshold for chaos satisfies $\eta>e^{-m}\epsilon$. This threshold 
	does not distinguish between local and global chaos, whose difference was hidden in the higher order term.
	\label{rmk_LyapunovExponent}
\end{remark}


\section{Numerical experiments}
\label{sec_numerics}

\emph{Additional results, such as verifications of statements about chaos (period doubling \& Lyapunov exponent estimation), nonconvex $f_0$, gradient descent with momentum, are in Appendix \ref{sec_moreNumerics}.}


\subsection{Stochasticity of deterministic GD: an example with periodic small scale}
\label{NT_Ergodicity}

Here we illustrate that GD dynamics is not only ergodic (on foliation) but also mixing, even when $f_0$ is not strongly convex but only convex (the strongly convex case was proved and will be illustrated in multi-dimension in Appendix \ref{sec_Matyas}). Recall ergodicity is the ability to follow an invariant distribution, and mixing ensures additional convergence to it. Fig.\ref{num_x4ensemble} shows that an arbitrary ensemble of initial conditions converges to approximately the rescaled Gibbs as the number of iteration increases. Fig.\ref{num_x4orbits} shows the empirical distribution of any orbit (i.e., $x_0,x_1,\cdots$ starting with an arbitrary $x_0$) also converges to the same limit. Fig.\ref{num_x4orbit1evolution} visualizes that any single orbit already appears `stochastic', even though the same initial condition would lead to exactly the same orbit.

\begin{figure}
	\centering
	\hfill
	\subfigure[\footnotesize{Evolution of an ensemble}\label{num_x4ensemble}]{\includegraphics[width=0.3\linewidth]{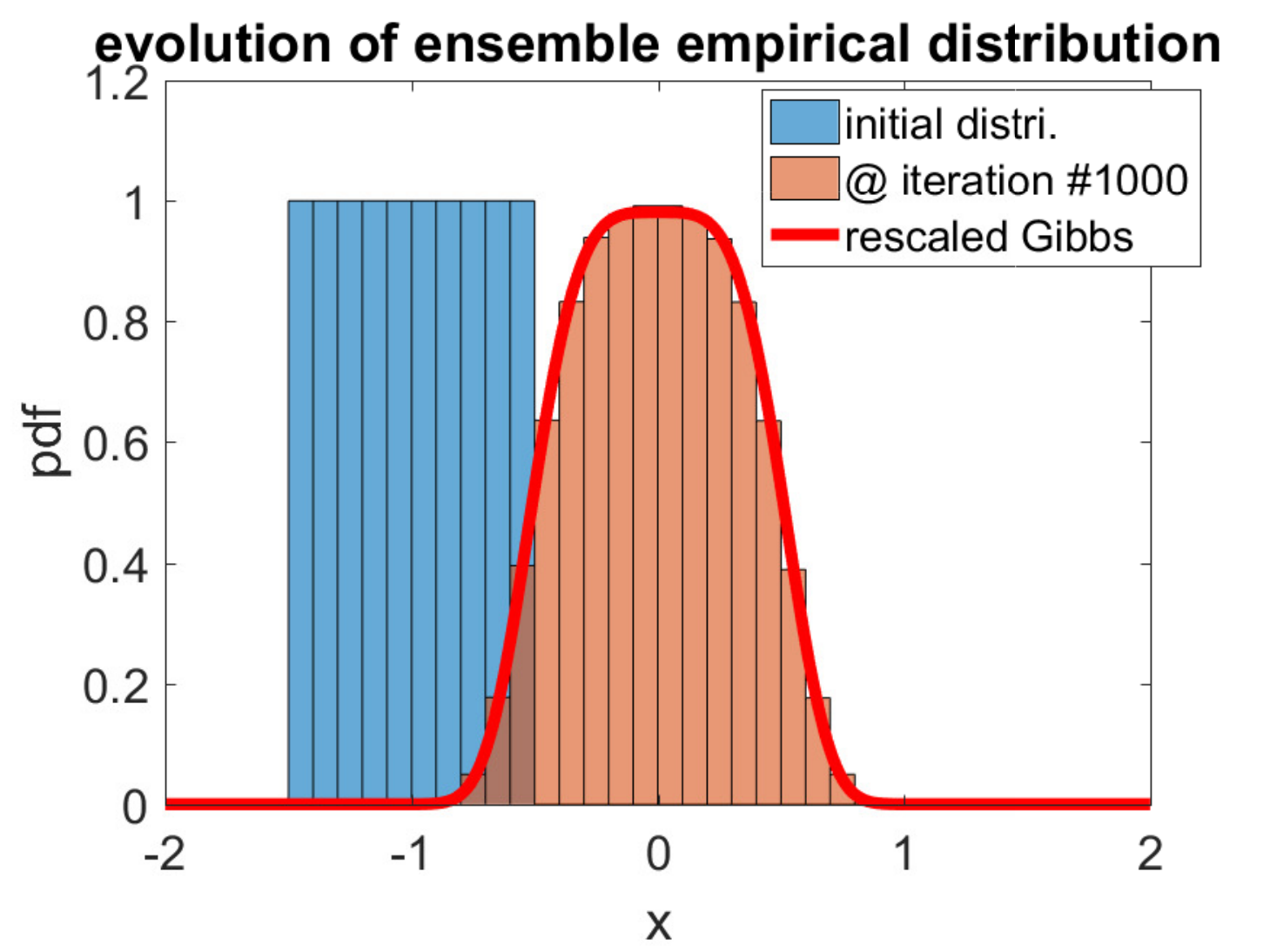}}
	\hfill
	\subfigure[\footnotesize{Empirical distrib. of 2 orbits}\label{num_x4orbits}]{\includegraphics[width=0.3\linewidth]{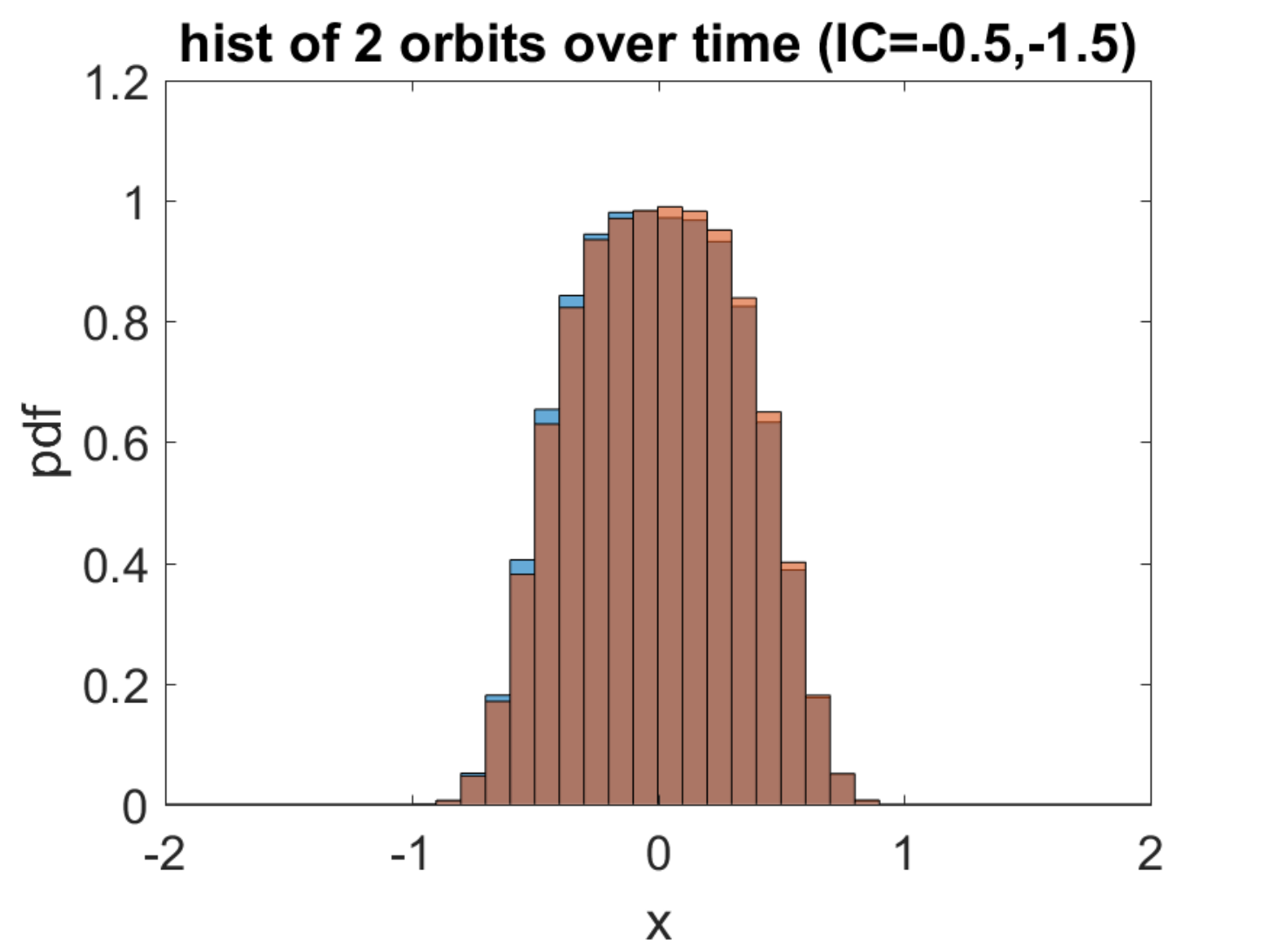}}
	\hfill
	\subfigure[\footnotesize{Stochasticity of an orbit}\label{num_x4orbit1evolution}]{\includegraphics[width=0.3\linewidth]{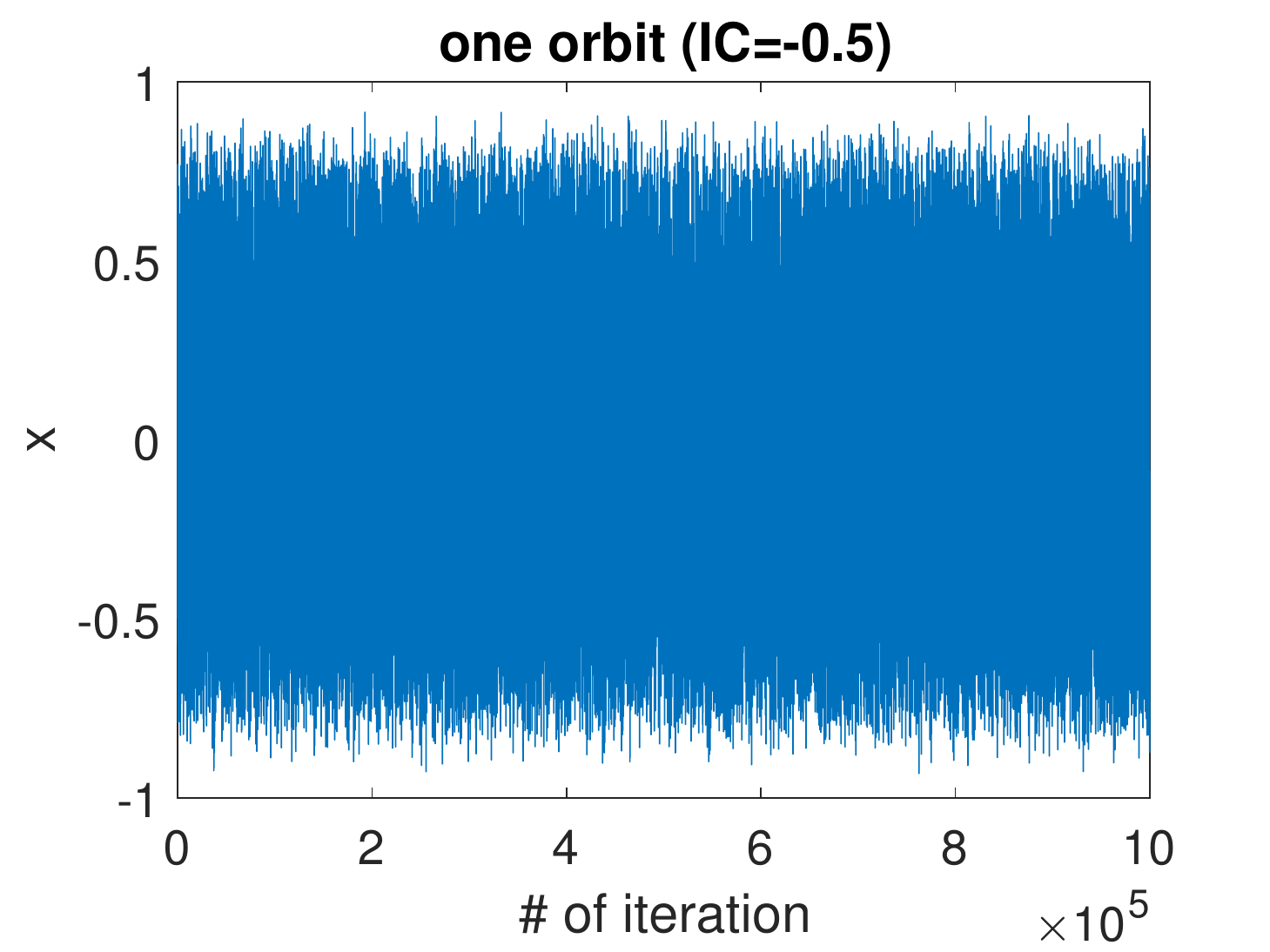}}
	\caption{Ergodicity and mixing of $\varphi$. $f_0=x^4/4$, $f_{1,\epsilon}(x)=\epsilon\sin(x/\epsilon)$ and $\eta=0.1$, $\epsilon=10^{-6}$.}
\end{figure}

\subsection{Stochasticity of deterministic GD: two examples with aperiodic small scales}
\label{sec_nonperiodic}

First consider an example whose small scale is not periodic, however satisfying Cond.\ref{cond_1} and \ref{cond_2}:
$f_0=x^4/4$, $f_{1,\epsilon}=\epsilon \sin(x/\epsilon)+\epsilon\sin(\sqrt{2}x/\epsilon)$. Fig. \ref{fig_quasiperiodic} shows that the system admits rescaled Gibbs as its invariant distribution (Thm. \ref{thm_rescaledGibbsIsApprx}) and is ergodic and mixing.
\begin{figure}
	\centering
	\hfill
	\subfigure[\footnotesize{Evolution of an ensemble}]{\includegraphics[width=0.3\linewidth]{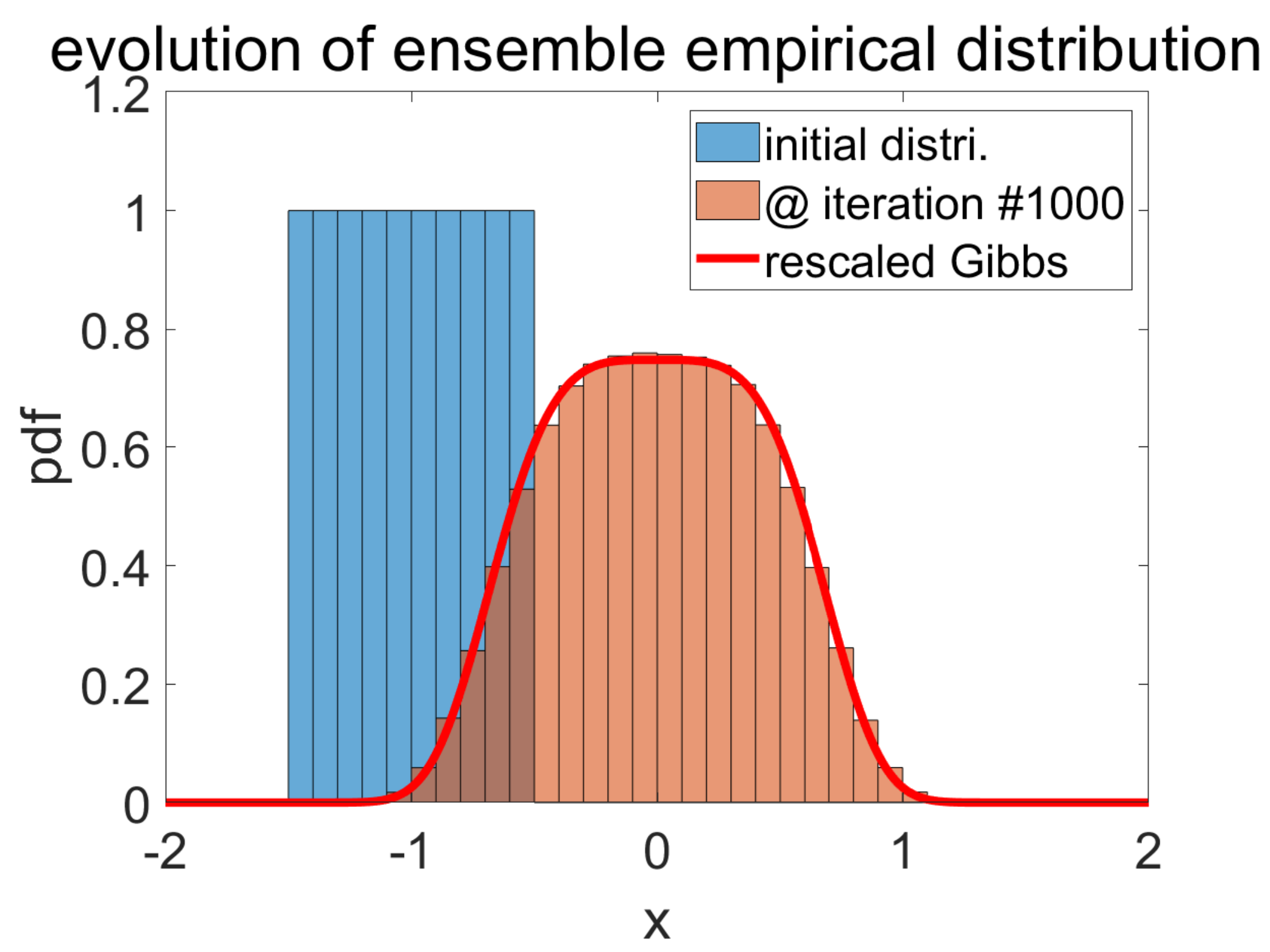}}
	\hfill
	\subfigure[\footnotesize{Empirical distrib. of an orbit}]{\includegraphics[width=0.3\linewidth]{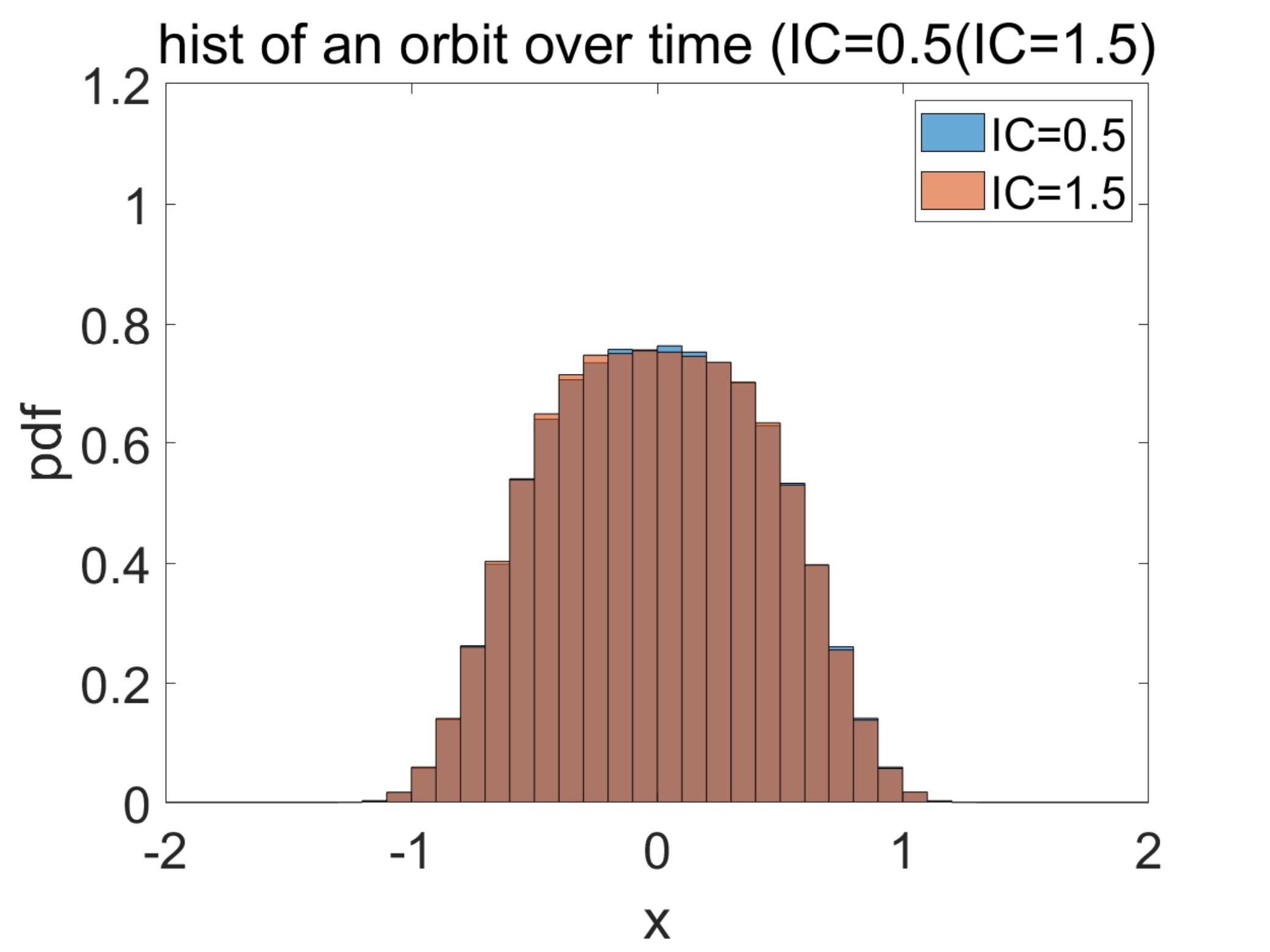}}
	\hfill
	\subfigure[\footnotesize{Iterations in an orbit}]{\includegraphics[width=0.3\linewidth]{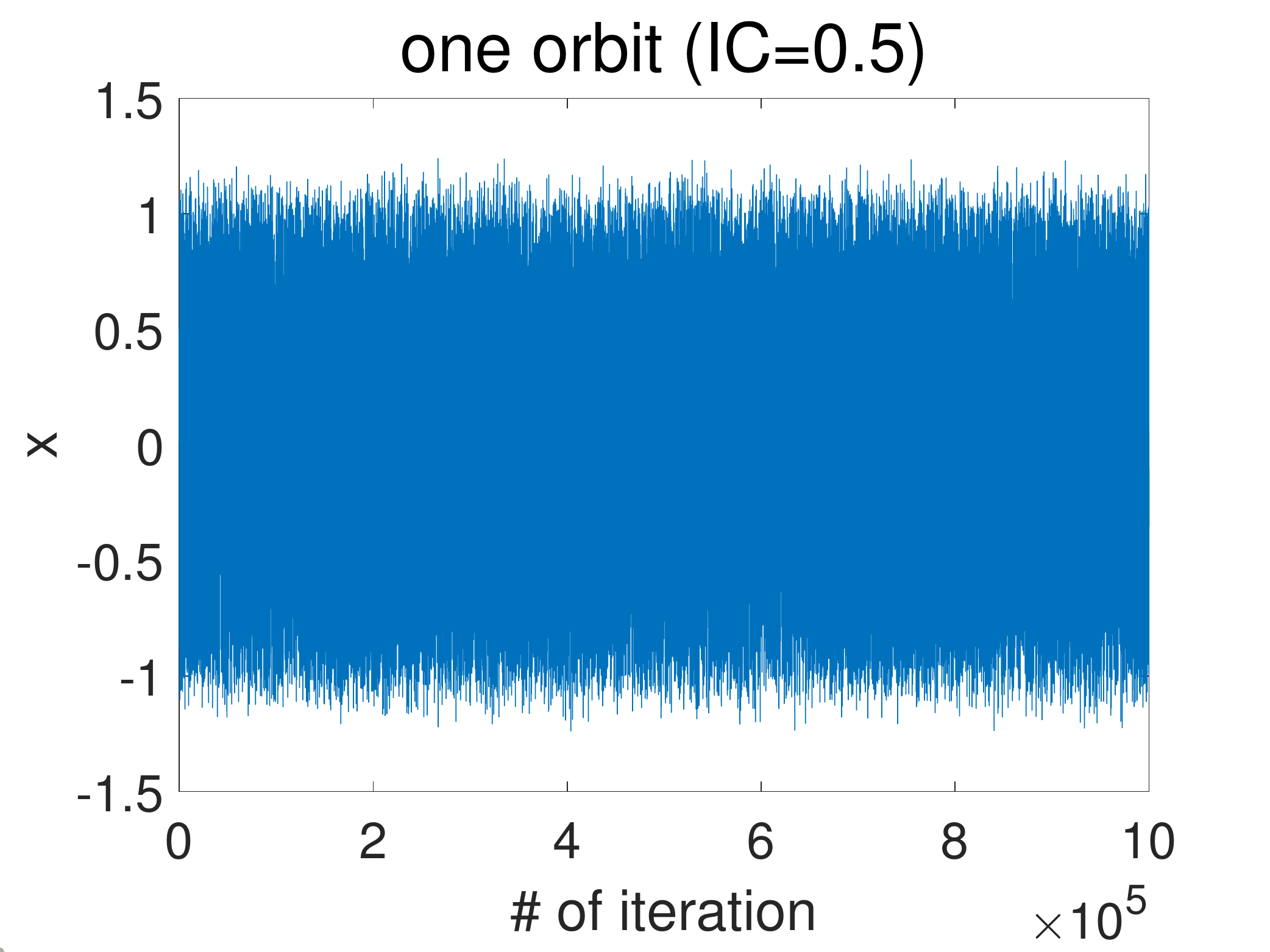}}
	\hfill
	\caption{Ergodicity and mixing of $\varphi$ for non-periodic $f_{1,\epsilon}$ given in Ex.\ref{example_aperiodic_new} with $\epsilon=10^{-6}$ and $\eta=0.1$.}
	\label{fig_quasiperiodic}
\end{figure}

Then we show, numerically, that stochastic behavior of large-LR-GD can persist even when Cond.\ref{cond_1} \& \ref{cond_2} fail. Here $f_0=x^2/2$ and $f_{1,\epsilon}(x)=\epsilon\cos(1+\cos(\frac{\sqrt{3}}{5}x)\frac{x}{\epsilon})$, the former the simplest, and the latter a made-up function that doesn't satisfy Cond.\ref{cond_1},\ref{cond_2} (due to that $\cos(\frac{\sqrt{3}}{5}x)/\epsilon$ can be 0). See Fig. \ref{fig_nonperiodic_generalization}. Note theoretically establishing local chaos (i.e., orbit filling a local potential well of $f_0+f_{1,\epsilon}$) is still possible, due to unimodal map's universality, e.g., \cite{strogatz2018nonlinear}; however, numerically observed is in fact global chaos, in which $f_1$ facilitates the exploration of the entire $f_0$ landscape.
\begin{figure}
	\centering
	\hfill
	\subfigure[\footnotesize{Evolution of an ensemble}]{\includegraphics[width=0.3\linewidth]{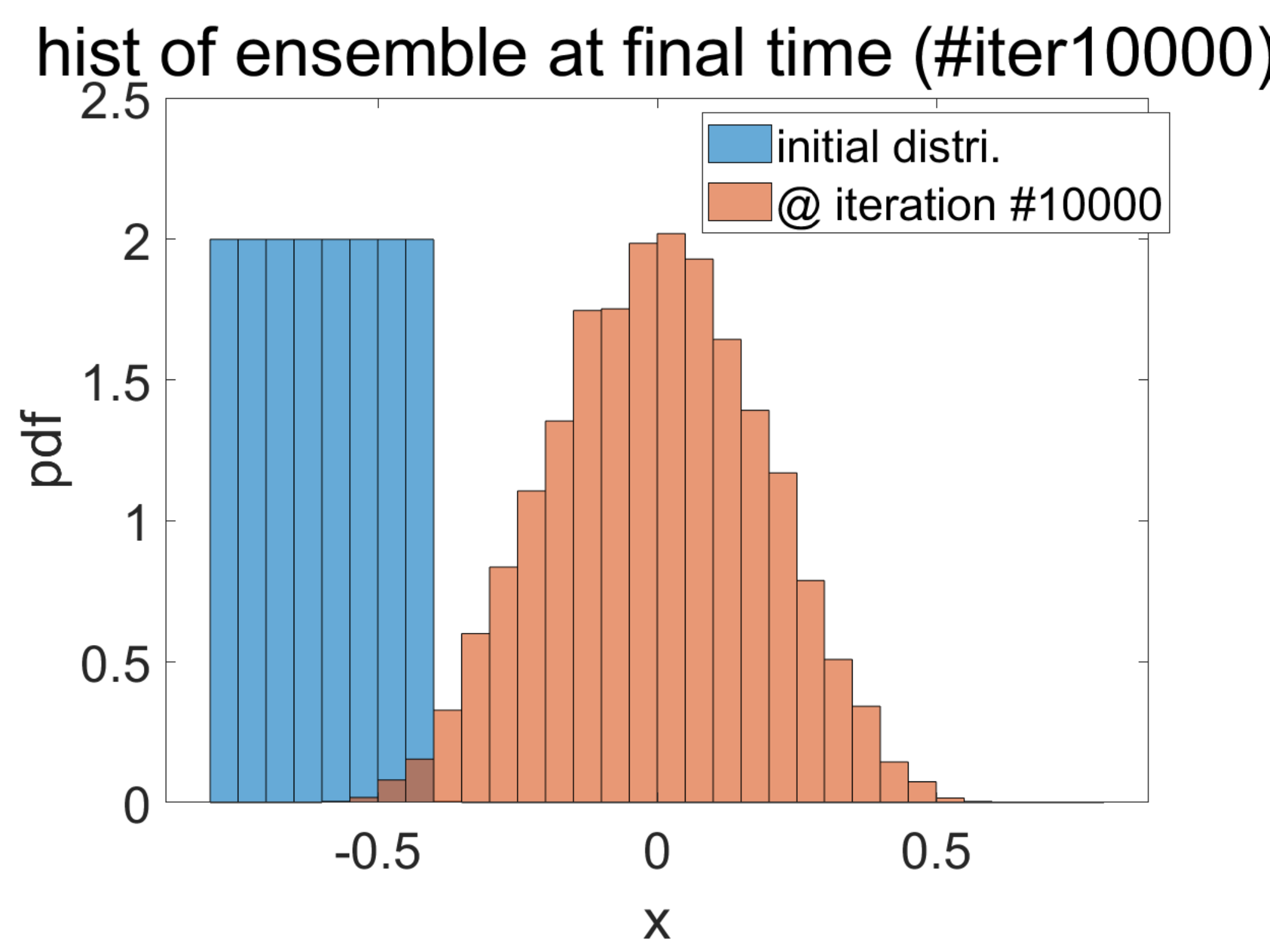}}
	\hfill
	\subfigure[\footnotesize{Empirical distrib. of an orbit}]{\includegraphics[width=0.3\linewidth]{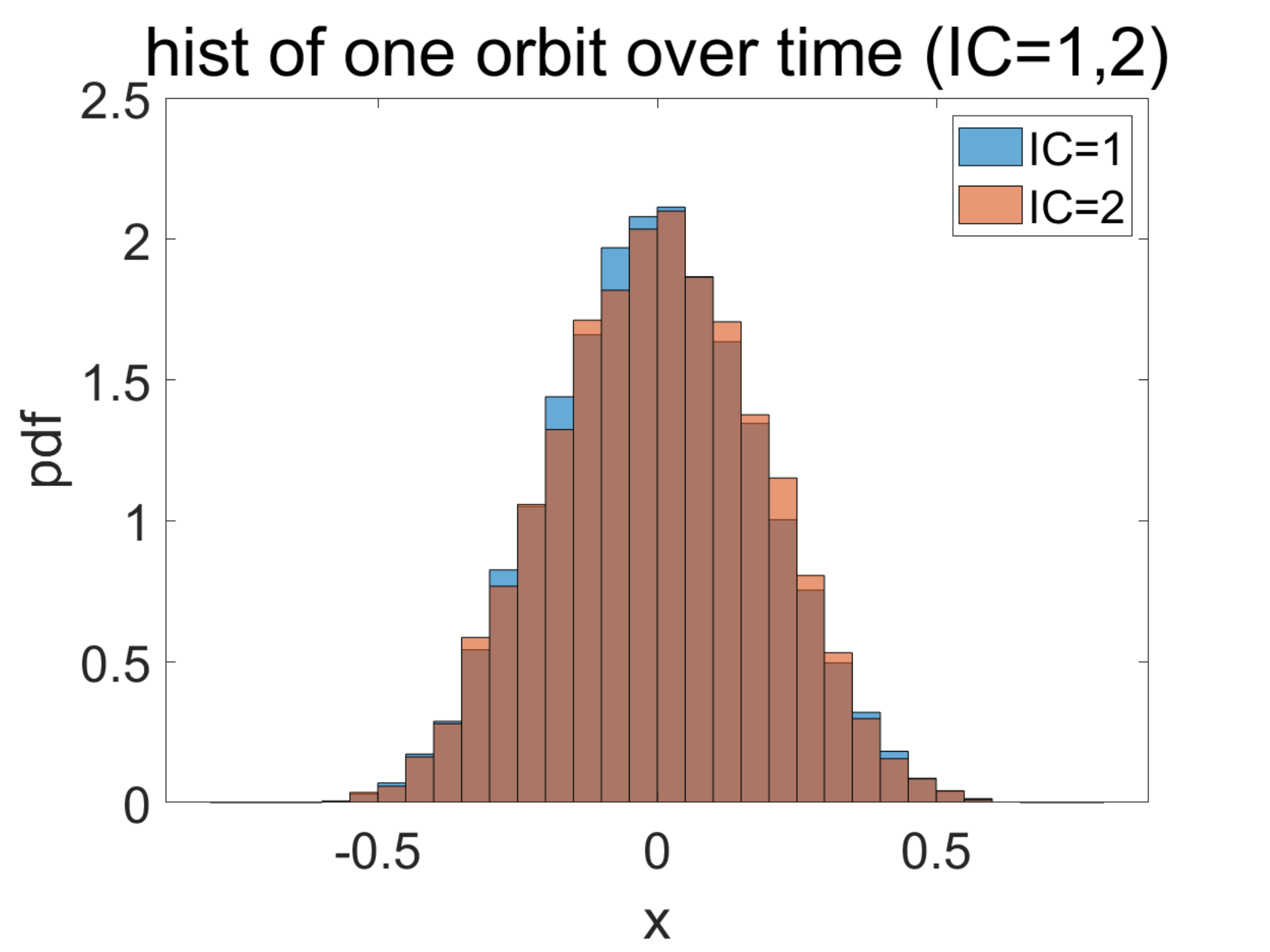}}
	\hfill
	\subfigure[\footnotesize{Stochasticity of an orbit}]{\includegraphics[width=0.3\linewidth]{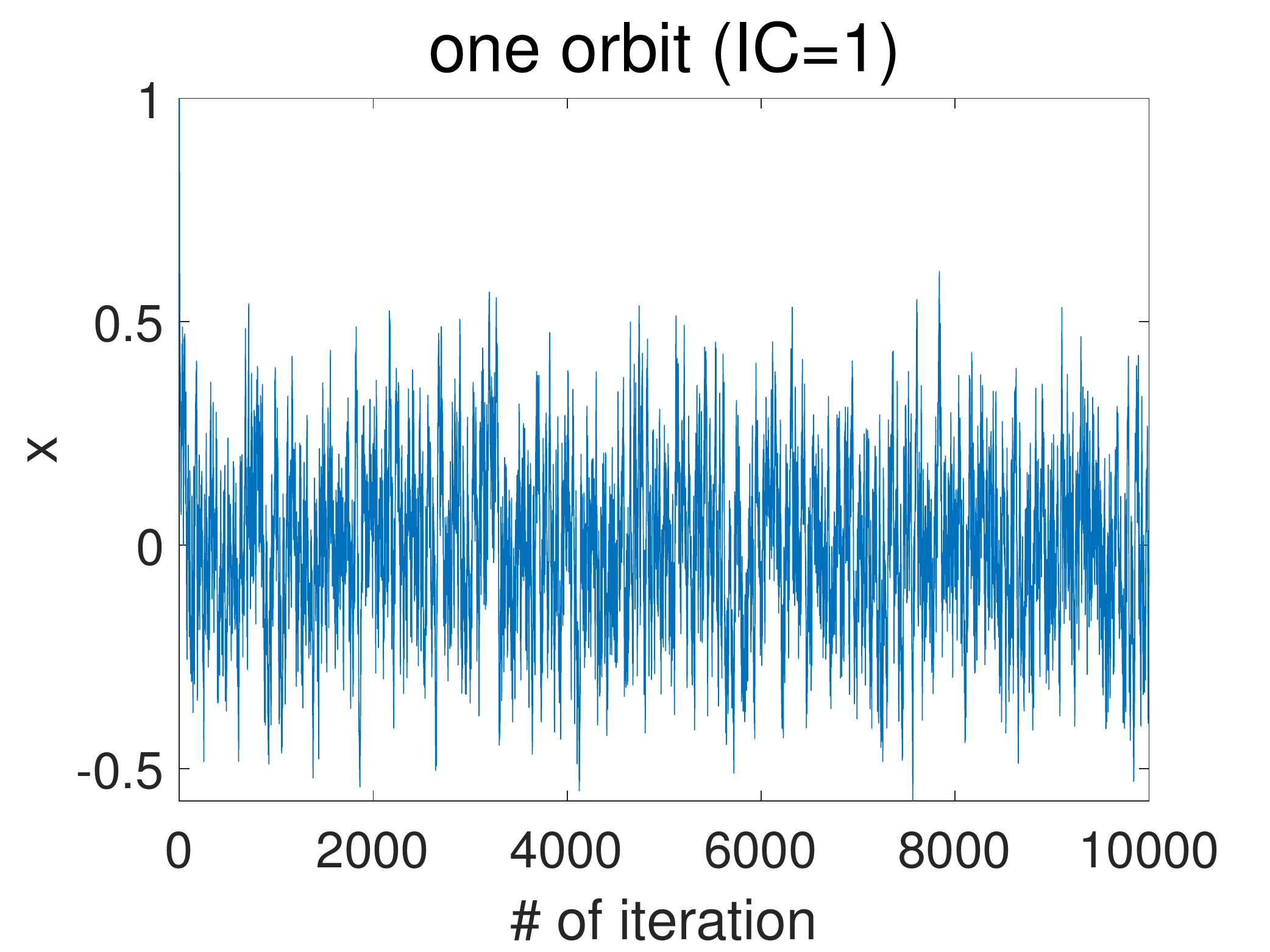}}
	\hfill
	\caption{Ergodicity and mixing of $\varphi$. Nonperiodic nor quasiperiodic 
	small scale. $\epsilon=10^{-4}$, $\eta=0.1$. }
	\label{fig_nonperiodic_generalization}
\end{figure}

\subsection{Stochasticity of deterministic GD: a neural network example}
\label{sec_ExampleNeuralNetwork}
To show that stochasticity can still exist in practical problems even when Cond.\ref{cond_1},\ref{cond_2} are hard to verify, we run a numerical test on a regression problem with a 2-layer neural network. We use a fully connected 5-16-1 MLP to regress UCI Airfoil Self-Noise Data Set \citep{Dua:2019}, with leaky ReLU activation, MSE as loss, and batch gradient. Fig.\ref{chaosInNN} shows large LR produces stochasticity and Fig.\ref{convergeNN} shows small LR doesn't, which are consistent with our study.

\begin{figure}
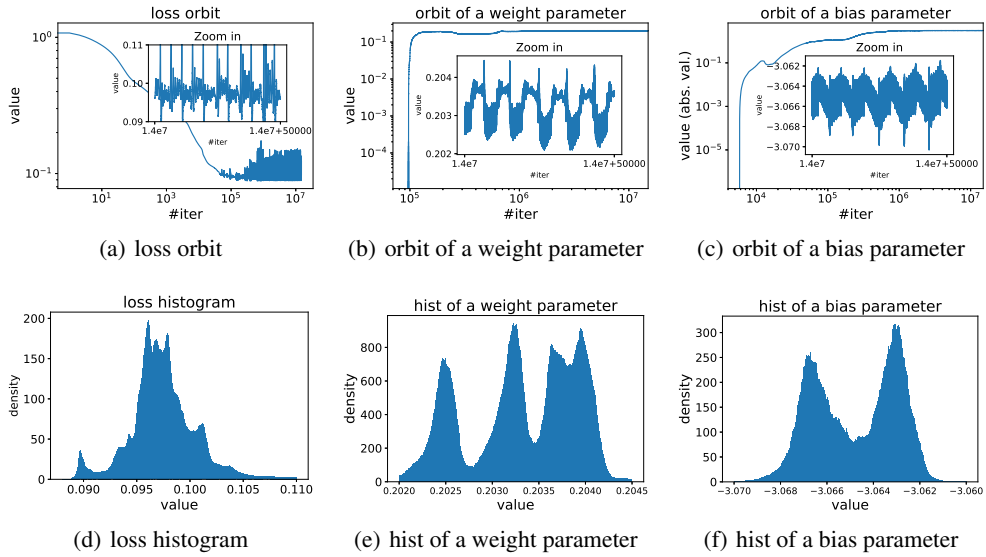

    \setlength{\belowcaptionskip}{-10pt}
    \centering
    \subfigure[loss orbit]{
    \includegraphics[width=0.3\textwidth]{/nn/plotloss.pdf}
    }
    \subfigure[orbit of a weight parameter]{
    \includegraphics[width=0.3\textwidth]{/nn/plotweight.pdf}
    }
    \subfigure[orbit of a bias parameter]{
    \includegraphics[width=0.3\textwidth]{/nn/plotbias.pdf}
    }
    \subfigure[loss histogram]{
    \includegraphics[width=0.3\textwidth]{/nn/histloss.pdf}
    }
    \subfigure[hist of a weight parameter]{
    \includegraphics[width=0.3\textwidth]{/nn/histweight.pdf}
    }
    \subfigure[hist of a bias parameter]{
    \includegraphics[width=0.3\textwidth]{/nn/histbias.pdf}
    }
    \caption{LR=0.02 (large),which demonstrates stochasticity originated from chaos as GD converges to a statistical distribution rather than a local minimum.
    }
    \label{chaosInNN}
\end{figure}

\begin{figure}
    \setlength{\belowcaptionskip}{-10pt}
    \centering
    \includegraphics[width=0.3\textwidth]{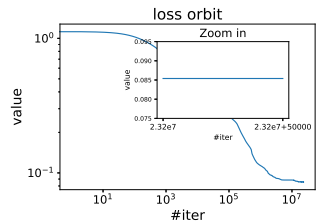}
    \caption{With the same loss function and initial condition, GD with LR=0.0005 (small) converges to a local minimum.
    }
    \label{convergeNN}
\end{figure}

\subsection{Persistence of stochasticity when momentum is added to GD}
Our theory is only for vanilla gradient decent, but also numerically observed is that deterministic GD with momentum still exhibits stochastic behaviors with large LR. See Appendix \ref{sec_withMomentum}.

\newpage
\section*{Broader Impact}
This theoretical work deepens our understanding of the performance of gradient descent, an optimization algorithm of significant importance to machine learning. This understanding could lead to the design of better optimization algorithms and improved learning models (either for encouraging or discouraging multiscale landscape, and for enabling or disabling stochasticity originated from determinism, depending on the application). It also helps tune the learning rate, and creates a new quantitative way for generating randomness (more precisely, sampling via determinism). Last but not least, analytical techniques developed and employed in this paper apply to a wide range of other problems.

\begin{ack}
This research was mainly conducted when LK was a visiting undergraduate student at Georgia Institute of Technology.
The authors thank 
Jacob Abernethy, Fryderyk Falniowski, Ruilin Li, and Tuo Zhao
for helpful discussions.
MT was partially supported by NSF DMS-1847802 and ECCS-1936776.
\end{ack}

\bibliographystyle{plainnat}
\bibliography{ref}

\newpage

\appendix

\section{On the insufficiency of modified equation}
\label{sec_AppendixModifiedEq}

Recently there has been an extremely interesting line of research in which discrete algorithms are studied through their continuum limits (e.g., \cite{su2014differential,wibisonoe7351,liu2017accelerated,franca2018admm,ma2019there,tao2020Lie}); these limits, however, correspond to a small LR (denoted by $\eta$) regime.

It is possible to slightly extend this regime by writing down a limiting ODE that includes additional correction terms (e.g., \cite{shi2018understanding,li2019stochastic,kovachki2019analysis}). The classical notion for systematically doing so is backward error analysis and modified equation (e.g., \cite{hairer2006geometric}). For example, the GD map $\varphi$ can be formally approximated, via an application of the modified equation theory, by $\dot{x}=-\nabla \tilde{f} (x)$, where the modified loss
\[
    \tilde{f}(x)=f(x)+\frac{\eta}{4}\|\nabla f(x)\|_2^2+\mathcal{O}(\eta^2).
\]
While informative, this result does not help us understand the large LR regime. Take $f_{1,\epsilon}=\epsilon f_1(x/\epsilon)$ for periodic $f_1$ as an example. When $\eta \geq C \epsilon$ for some $C>0$, the formal series expansion used in modified equation does not converge (see Appendix \ref{sec_AppendixModifiedEq}), which renders it inapplicable.

More precisely, as detailed in \cite{hairer2006geometric} Chap IX.1, in order for a discrete map
\[
    \Phi_\eta(x) = x + \eta g(x)    \qquad (\text{in our case $g(x)=f'(x)= f'_0(x)+ f'_1(x/\epsilon)$})
\]
to be the $\eta$-time flow of
\begin{equation}
    \dot{x}=g(x)+\eta g_2(x) + \eta^2 g_3(x) + \cdots,
    \label{eq_modifiedEq}
\end{equation}
we need
\begin{align*}
    g_2(x) &= -\frac{1}{2!} g'g(x) \\
    g_3(x) &= -\frac{1}{3!} (g''(g,g)(x)+g'g'g(x)) - \frac{1}{2!} (g'g_2(x)+g_2'g(x)) \\
    & \cdots
\end{align*}
Note each derivative of $g$ gives a factor of $1/\epsilon$, and thus $g_n = \mathcal{O}(\epsilon^{-(n-1)})$. Therefore, RHS of \eqref{eq_modifiedEq} diverges if $\eta \geq C\epsilon$ for some $C>0$, in which case the more higher-order correction terms are included, the worse approximation power the modified ODE will have. 

\emph{This paper thus develops a completely different framework to understand the large LR regime.}

\section{Proofs and additional remarks}
\label{sec_appendixProofs}

\subsection{On the relation between stochastic and deterministic map}
\begin{remark}[On Theorem \ref{thm_sameLimitStats}]
    ~
	\begin{itemize}
		\item The purpose for using an open set $\mathcal{E}$ accumulating at 0 but does not use a interval such as $(0,1]$ directly here. In the later Theorem \ref{counter-example2}, we proved that for a fixed $f_0$ and $\eta$, there exists periodic $f_{1,\epsilon}$ and arbitrary small $\epsilon$ to make the non trivial invariant distribution doesn't exist. We can use the set $\mathcal{E}$ to eliminate this bad case that we doesn't want to see.
		
		\item 
		Lemma \ref{thm_geometricErgodicity} gives a sufficient condition for $\hat\varphi$ to have a unique fixed point, denoted by $X$. When this happens, the conclusion will be if $\{X_{\epsilon_i}\}_{i=1}^\infty$ has a weak limit, $\{X_{\epsilon_i}\}_{i=1}^\infty\rightarrow X$. We do numerical tests on this situation in Sec.\ref{sec_Matyas}. When $\hat\varphi$ have multiple fixed points, please see related numerical test in Sec.\ref{sec_NT_nonvonvex}.
		
		\item Intuitively, condition (*) means $\varphi_\epsilon$ is continuous in $\mathcal{F}$. This property is used in the proof of lemma \ref{lemma_3}. Condition (*) is strong, but we can hardly prove it or find a condition that easy to test. The 2-order derative of $f_0$ goes to infinity, which is pathological, but also make the whole problem interesting and nontrivial. See Thm. \ref{counter-example1} and \ref{counter-example2} for 2 examples. However, some necessary conditions could be useful, such as the r.v.'s in $\mathcal{F}$ cannot have atom points (which means all the variables are nondegenerate).
	\end{itemize}
\end{remark}
In order to prove Theorem \ref{thm_sameLimitStats}, we need the following lemmas.
\begin{lemma}
	\label{lemma_1}
	Under the condition of Thm. \ref{thm_sameLimitStats}, $\forall X$, there exists $\tilde{X}$, such that $\sup_{\omega\in\Omega}\|\tilde{X}(\omega)-X(\omega)\|_2<\delta(\epsilon)$ where $\Omega$ is the sample space
	and $\varphi_\epsilon(\tilde{X})\stackrel{w}{\longrightarrow}\hat{\varphi}(\tilde{X})$ when $\epsilon\rightarrow 0$.
\end{lemma}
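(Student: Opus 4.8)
The plan is to build $\tilde X$ by perturbing $X$ with the bounded noise supplied by Condition~\ref{cond_1} and then to pass to the limit $\epsilon\to0$ by conditioning on $X$. Concretely, for each $\epsilon$ I would (after enlarging $\Omega$ if necessary to carry extra randomness) set $\tilde X_\epsilon:=X+Y_\epsilon$, where conditionally on $\{X=x\}$ the perturbation $Y_\epsilon$ is an independent copy of the random variable $Y_{x,\epsilon}$ uniformly distributed on $\Gamma_{x,\epsilon}\subset B(0,\delta(\epsilon))$ from Condition~\ref{cond_1}; this $\tilde X_\epsilon$ is the $\tilde X$ in the statement. Since $\Gamma_{x,\epsilon}\subset B(0,\delta(\epsilon))$, this automatically gives the deterministic bound $\sup_{\omega\in\Omega}\|\tilde X_\epsilon(\omega)-X(\omega)\|_2<\delta(\epsilon)$, and because $\delta(\epsilon)\downarrow0$ we also get $\tilde X_\epsilon\to X$ in sup-norm. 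One technical point to address here is the measurable dependence of $\Gamma_{x,\epsilon}$ on $x$, so that $\tilde X_\epsilon$ is a genuine random variable; I would either invoke a measurable-selection / disintegration argument or note that Condition~\ref{cond_1} is understood to provide this.

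Next I would expand $\varphi_\epsilon(\tilde X_\epsilon)=\tilde X_\epsilon-\eta\nabla f_0(\tilde X_\epsilon)-\eta\nabla f_{1,\epsilon}(\tilde X_\epsilon)$ and split it into a ``regular'' and a ``singular'' part. Conditionally on $\{X=x\}$, write $Z_\epsilon:=x+Y_{x,\epsilon}$; then $\|Z_\epsilon-x\|_\infty\le\delta(\epsilon)$, and since $\nabla f_0$ is $L$-Lipschitz, $Z_\epsilon-\eta\nabla f_0(Z_\epsilon)=\bigl(x-\eta\nabla f_0(x)\bigr)+R_\epsilon$ with $\|R_\epsilon\|_\infty\le(1+\eta L)\delta(\epsilon)\to0$. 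For the singular part, Condition~\ref{cond_1} gives $\nabla f_{1,\epsilon}(Z_\epsilon)\stackrel{w}{\longrightarrow}-\zeta$, uniformly in $x$. Hence, conditionally on $\{X=x\}$, Slutsky's theorem (the term $R_\epsilon$ tends to $0$ in probability, regardless of its dependence on $Y_{x,\epsilon}$) yields $\varphi_\epsilon(Z_\epsilon)\stackrel{w}{\longrightarrow}x-\eta\nabla f_0(x)+\eta\zeta$.

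Finally I would remove the conditioning. For a compactly supported test function $g\in\mathcal{C}^\infty(\mathbb{R}^d)$, the tower property gives $\mathbb{E}\,g(\varphi_\epsilon(\tilde X_\epsilon))=\mathbb{E}_X\bigl[\psi_\epsilon(X)\bigr]$ with $\psi_\epsilon(x):=\mathbb{E}\,g\bigl(x-\eta\nabla f_0(x)-\eta\nabla f_{1,\epsilon}(x+Y_{x,\epsilon})\bigr)$. Making the previous step uniform in $x$ — using the uniformity built into Condition~\ref{cond_1} together with the $L$-Lipschitz control of $\nabla f_0$ — shows $\sup_x\bigl|\psi_\epsilon(x)-\mathbb{E}\,g(x-\eta\nabla f_0(x)+\eta\zeta)\bigr|\to0$; since $g$ is bounded, dominated convergence then gives $\mathbb{E}\,g(\varphi_\epsilon(\tilde X_\epsilon))\to\mathbb{E}\,g\bigl(X-\eta\nabla f_0(X)+\eta\zeta\bigr)=\mathbb{E}\,g(\hat\varphi(X))$, i.e. $\varphi_\epsilon(\tilde X_\epsilon)\stackrel{w}{\longrightarrow}\hat\varphi(X)$. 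Because $\tilde X_\epsilon\to X$ in sup-norm and $\nabla f_0$ is Lipschitz, one also has $\hat\varphi(\tilde X_\epsilon)\to\hat\varphi(X)$, so the two readings of the stated conclusion coincide. I expect the main obstacle to be exactly this transfer from conditional to unconditional convergence: making the Slutsky estimate uniform in the conditioning value $x$, so that the bounded-convergence step is legitimate, depends on interpreting ``uniformly with respect to $x$'' in Condition~\ref{cond_1} in the correct quantitative sense, and on the measurability issue flagged above.
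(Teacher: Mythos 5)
Your proposal is correct and follows essentially the same route as the paper: construct $\tilde X=X+Y_{X,\epsilon}$ from Condition~\ref{cond_1}, use the Lipschitz control of $\nabla f_0$ (and of the test function) to absorb the $\mathcal{O}(\delta(\epsilon))$ shift, and invoke the uniformity in $x$ of the weak convergence $\nabla f_{1,\epsilon}(x+Y_{x,\epsilon})\stackrel{w}{\longrightarrow}-\zeta$ to exchange the limit with the expectation over $X$. Your extra care about measurable dependence of $\Gamma_{x,\epsilon}$ on $x$ and your explicit reduction of $\hat\varphi(\tilde X)$ to $\hat\varphi(X)$ (which the paper defers to its Lemma~\ref{lemma_2}) are refinements, not deviations.
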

\begin{proof}
	Let $\tilde{X}:=X+Y_{X,\epsilon}$,
	where $Y_{X,\epsilon}$ is defined as in Cond. \ref{cond_1}. Without causing confusion, the dependence of $Y_{x,\epsilon}$ on $\epsilon$ is omitted in this proof, as well as in lemma \ref{lemma_2} and \ref{lemma_3}. So $\sup_\omega\|Y_X(\omega)\|_2<\delta(\epsilon)$. ($\delta(\epsilon)$ is given in Cond. \ref{cond_1})

	Arbitrarily choosing a test function $g$, we have
	\begin{align*}
		&\lim_{\epsilon\rightarrow 0}\mathbb{E}\left[g(\varphi_\epsilon(\tilde{X}))-g(\hat{\varphi}(\tilde{X}))\right]\\
		=&\lim_{\epsilon\rightarrow 0}\mathbb{E}\left[g(\tilde{X}-\eta\nabla f_0(\tilde{X})-\eta\nabla f_{1,\epsilon}(\tilde{X}))-g(\tilde{X}-\eta\nabla f_0(\tilde{X})-\eta\zeta)\right]\\
		=&\lim_{\epsilon\rightarrow 0}\mathbb{E}_X[\mathbb{E}_{Y_X}[g(X+Y_X-\eta\nabla f_0(X+Y_X)\\
		&\quad-\eta\nabla f_{1,\epsilon}(X+Y_X))-g(X+Y_X-\eta\nabla f_0(X+Y_X)-\eta\zeta)|X]]
	\end{align*}
	We use the nice property of $g$ and $f_0$ to have some of the $Y_X$'s.
	$$g(x+Y_x-\eta\nabla f_0(x+Y_x)-\eta\nabla f_{1,\epsilon}(x+Y_x))=g(x-\eta\nabla f_0(x)-\eta\nabla f_{1,\epsilon}(x+Y_x))+\mathcal{O}(\delta(\epsilon))$$
	$$g(x+Y_x-\eta\nabla f_0(x+Y_x)-\eta\zeta)=g(x-\eta\nabla f_0(x)-\eta\zeta)+\mathcal{O}(\delta(\epsilon))$$
	Due to the uniform weak convergence condition in condition \ref{cond_1}, we calculate the limit first and then compute the expectation regarding $X$, which means
	\begin{align*}
		&\lim_{\epsilon\rightarrow 0}\mathbb{E}\left[g(\varphi_\epsilon(\tilde{X}))-g(\hat{\varphi}(\tilde{X}))\right]\\
		=&\mathbb{E}_X\left[\lim_{\epsilon\rightarrow 0}\mathbb{E}_{Y_X}\left[g(X-\eta\nabla f_0(X)-\eta\nabla f_{1,\epsilon}(X+Y_X))-g(X-\eta\nabla f_0(X)-\eta\zeta)|X\right]\right]\\
		=&0
	\end{align*}
\end{proof}

\begin{lemma}
	\label{lemma_2}
	Let $\tilde{X}:=X+Y_X$ (as in the proof of Lemma \ref{lemma_1}). Then $\hat{\varphi}(\tilde{X})\stackrel{w}{\longrightarrow}\hat{\varphi}(X)$ as $\epsilon\rightarrow 0$.
\end{lemma}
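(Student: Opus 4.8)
The plan is to exploit that $\hat\varphi$, unlike $\varphi_\epsilon$, is a fixed map that does not involve the microscopic part $f_{1,\epsilon}$ at all, so no multiscale pathology enters; the statement reduces to a Lipschitz continuity argument. Recall from the proof of Lemma \ref{lemma_1} that $\tilde{X}=X+Y_X$ with $Y_X=Y_{X,\epsilon}$ supported in $B(0,\delta(\epsilon))$, so that $\|Y_X\|_\infty:=\sup_{\omega\in\Omega}\|Y_X(\omega)\|_2<\delta(\epsilon)$ and $\delta(\epsilon)\downarrow 0$ as $\epsilon\to 0$ by Cond.\ref{cond_1}.

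First I would record that the drift part $x\mapsto x-\eta\nabla f_0(x)$ is globally Lipschitz with constant $1+\eta L$, since $f_0$ is $L$-smooth. The $+\eta\zeta$ term is the \emph{same} fixed random variable $\zeta$ in both $\hat\varphi(\tilde{X})$ and $\hat\varphi(X)$, so it cancels exactly on each sample path; this is the point that makes the estimate pathwise and $\epsilon$-uniform, and it means the (possible) dependence between $\zeta$ and $Y_X$ is irrelevant. Hence, for every $\omega\in\Omega$,
$$\|\hat\varphi(\tilde{X})(\omega)-\hat\varphi(X)(\omega)\|_2=\|(\tilde{X}-\eta\nabla f_0(\tilde{X}))(\omega)-(X-\eta\nabla f_0(X))(\omega)\|_2\le (1+\eta L)\,\|Y_X(\omega)\|_2\le (1+\eta L)\,\delta(\epsilon),$$
so in fact $\hat\varphi(\tilde{X})\to\hat\varphi(X)$ uniformly in $\omega$, which is strictly stronger than the asserted weak convergence.

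Then I would pass from this uniform bound to weak convergence in the routine way: for any compactly supported test function $g\in\mathcal{C}^\infty(\mathbb{R}^d)$, $g$ is globally Lipschitz, so
$$\bigl|\mathbb{E}g(\hat\varphi(\tilde{X}))-\mathbb{E}g(\hat\varphi(X))\bigr|\le \mathbb{E}\bigl|g(\hat\varphi(\tilde{X}))-g(\hat\varphi(X))\bigr|\le \mathrm{Lip}(g)\,(1+\eta L)\,\delta(\epsilon)\xrightarrow[\epsilon\to 0]{}0,$$
which is exactly $\hat\varphi(\tilde{X})\stackrel{w}{\longrightarrow}\hat\varphi(X)$.

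I do not expect a genuine obstacle here: the only inputs are the $L$-smoothness of $f_0$ (supplying the Lipschitz constant $1+\eta L$) and $\delta(\epsilon)\downarrow 0$ from Cond.\ref{cond_1}. The one thing worth stating carefully, as above, is the exact cancellation of the shared noise term $\eta\zeta$, since that is what upgrades the argument from "$\tilde X\to X$ uniformly and $\hat\varphi$ continuous, hence weak convergence" to a clean quantitative $\mathcal{O}(\delta(\epsilon))$ bound.
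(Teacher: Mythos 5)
Your proof is correct and takes essentially the same route as the paper: both exploit that the shared noise term $\eta\zeta$ cancels pathwise, bound $\|(\tilde X-\eta\nabla f_0(\tilde X))-(X-\eta\nabla f_0(X))\|$ by $(1+\eta L)\|Y_X\|\le(1+\eta L)\delta(\epsilon)$ via $L$-smoothness of $f_0$, and then pass to weak convergence through the Lipschitz bound $\sup\|\nabla g\|$ on the test function. Your phrasing of the intermediate step as a uniform (in $\omega$) pathwise bound is a mildly cleaner presentation but not a different argument.
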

\begin{proof}
	For an arbitrary test function $g$, we have
	\begin{align*}
		&\lim_{\epsilon\rightarrow 0}\mathbb{E}\left[g(\hat{\varphi}(\tilde{X}))-g(\hat{\varphi}(X))\right]\\
		=&\lim_{\epsilon\rightarrow 0}\mathbb{E}\left[g(\tilde{X}-\eta\nabla f_0(\tilde{X})-\eta\zeta)-g(X-\eta\nabla f_0(X)-\eta\zeta)\right]\\
		\le&\lim_{\epsilon\rightarrow 0}\mathbb{E}\left[\sup\|\nabla g\| \|(\tilde{X}-\eta\nabla f_0(\tilde{X}))-(X-\eta\nabla f_0(X)) \|\right]\\
		\le&\lim_{\epsilon\rightarrow 0}\mathbb{E}\left[\sup\|\nabla g\|(1+\eta L)\|\tilde{X}-X\|_2\right]\\
		\le&\lim_{\epsilon\rightarrow 0}(1+\eta L)\sup\|\nabla g\|\,\delta(\epsilon)\\
		=&0
	\end{align*}
	The 3rd last line is due to $L$-smoothness of $f_0$.
\end{proof}

\begin{lemma}
	\label{lemma_3}
	$\forall X\in\mathcal{F}$, $\varphi_\epsilon(X)\stackrel{w}{\longrightarrow}\hat{\varphi}(X)$ when $\epsilon\rightarrow 0$.
\end{lemma}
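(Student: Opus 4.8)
The plan is to prove Lemma~\ref{lemma_3} by inserting the perturbed variable $\tilde X$ of Lemma~\ref{lemma_1} as a bridge between $\varphi_\epsilon(X)$ and $\hat\varphi(X)$, and then peeling the perturbation off with hypothesis (*). I would fix $X\in\mathcal{F}$ and let $\tilde X=\tilde X_\epsilon:=X+Y_{X,\epsilon}$ be the companion from Lemma~\ref{lemma_1}, so $\sup_{\omega}\|\tilde X_\epsilon(\omega)-X(\omega)\|_2\le\delta(\epsilon)$ with $\delta(\epsilon)\downarrow 0$ (Cond.~\ref{cond_1}). For an arbitrary compactly supported smooth test function $g$, Lemma~\ref{lemma_1} gives $\mathbb{E}g(\varphi_\epsilon(\tilde X_\epsilon))-\mathbb{E}g(\hat\varphi(\tilde X_\epsilon))\to 0$, while Lemma~\ref{lemma_2} gives $\mathbb{E}g(\hat\varphi(\tilde X_\epsilon))\to\mathbb{E}g(\hat\varphi(X))$; adding these yields $\mathbb{E}g(\varphi_\epsilon(\tilde X_\epsilon))\to\mathbb{E}g(\hat\varphi(X))$, i.e.\ $\varphi_\epsilon(\tilde X_\epsilon)\stackrel{w}{\longrightarrow}\hat\varphi(X)$. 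This step is purely a triangle-inequality composition of the two preceding lemmas and requires nothing new.

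It then remains to swap $\tilde X_\epsilon$ for $X$ on the left, i.e.\ to show that $\varphi_\epsilon(X)$ and $\varphi_\epsilon(\tilde X_\epsilon)=\varphi_\epsilon(X+Y_{X,\epsilon})$ share the same weak limit, equivalently that $\mathbb{E}g(\varphi_\epsilon(X))-\mathbb{E}g(\varphi_\epsilon(\tilde X_\epsilon))\to 0$ for every such $g$. This is exactly where hypothesis (*) enters: $X\in\mathcal{F}$, and $Y_{X,\epsilon}$ is an $L^\infty$-null perturbation since $\|Y_{X,\epsilon}\|_\infty\le\delta(\epsilon)\to 0$, so the weak continuity of $\varphi_\epsilon$ on $\mathcal{F}$ supplied by (*) is precisely the statement that shrinking $L^\infty$-perturbations of the argument do not change the weak limit of $\varphi_\epsilon$. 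Combined with the previous paragraph this gives $\varphi_\epsilon(X)\stackrel{w}{\longrightarrow}\hat\varphi(X)$, which is the assertion; no estimates beyond the $O(\delta(\epsilon))$ Taylor/Lipschitz bookkeeping already carried out in Lemmas~\ref{lemma_1} and~\ref{lemma_2} are needed.

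The hard part will be the interchange of limits in that last step. Condition (*) is stated for each fixed $\epsilon$, whereas here the map $\varphi_\epsilon$ and the perturbation $Y_{X,\epsilon}$ shrink to their limits simultaneously, so the deduction is legitimate only if the modulus of weak continuity of $\varphi_\epsilon$ at $X$ does not deteriorate, as $\epsilon\to 0$, faster than $\delta(\epsilon)\to 0$ — that is, one really wants (*) in a form uniform over $\epsilon\in\mathcal{E}$, which is the natural reading and is consistent with the authors' own remark that (*) is a deliberately strong and hard-to-test hypothesis. Concretely I would phrase this as a subsequence/contradiction argument: if $\varphi_{\epsilon_k}(X)\not\stackrel{w}{\to}\hat\varphi(X)$ along some $\epsilon_k\downarrow 0$, then since $\varphi_{\epsilon_k}(\tilde X_{\epsilon_k})\stackrel{w}{\to}\hat\varphi(X)$ is already established, some test function $g$ must witness $|\mathbb{E}g(\varphi_{\epsilon_k}(X))-\mathbb{E}g(\varphi_{\epsilon_k}(X+Y_{X,\epsilon_k}))|\ge\gamma>0$ for all large $k$, with $\|Y_{X,\epsilon_k}\|_\infty\le\delta(\epsilon_k)\to 0$ — a failure of weak continuity at $X$ that the uniform form of (*) forbids. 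Everything else (measurability of $\tilde X_\epsilon$, metrizability of weak convergence, and the composition of the two lemmas) is routine.
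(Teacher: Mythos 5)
Your argument is exactly the paper's: decompose $\mathbb{E}g(\varphi_\epsilon(X))-\mathbb{E}g(\hat\varphi(X))$ into the three differences involving the bridge variable $\tilde X_\epsilon=X+Y_{X,\epsilon}$, and kill the first by (*), the second by Lemma~\ref{lemma_2}, and the third by Lemma~\ref{lemma_1}. You are also right to flag the diagonal nature of the first term — since $\tilde X_\epsilon$ and $\varphi_\epsilon$ shrink simultaneously, condition (*) as literally written (weak continuity of $\varphi_\epsilon$ for each \emph{fixed} $\epsilon\in\mathcal{E}$) does not by itself force $\mathbb{E}[g(\varphi_\epsilon(X))-g(\varphi_\epsilon(\tilde X_\epsilon))]\to 0$; one needs the modulus of continuity to be uniform in $\epsilon$, and the paper implicitly reads (*) that way without saying so. So this is the same route, with the bonus that you have spelled out the strengthening of (*) that the paper's proof tacitly requires.
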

\begin{proof}
	We define $\tilde{X}:=X+Y_X$, like we did in the proof for lemma \ref{lemma_1}. Fix a $g$ as the test function.
	\begin{align*}
		&\mathbb{E}\left[g(\varphi_\epsilon(X))-g(\hat{\varphi}(X))\right]\\
		=&\mathbb{E}\left[g(\varphi_\epsilon(X))-g(\varphi_\epsilon(\tilde{X}))\right]+\mathbb{E}\left[g(\hat{\varphi}(X))-g(\hat{\varphi}(\tilde{X}))\right]+\mathbb{E}\left[g(\varphi_\epsilon(\tilde{X}))-g(\hat{\varphi}(\tilde{X}))\right]
	\end{align*}
	The first term converges to 0 due to condition (*) in Thm. \ref{thm_sameLimitStats}, which ensures the continuity in the weak sense of $\varphi_\epsilon$. The second term goes to 0 according to lemma \ref{lemma_2}. The third term converges to 0 according to lemma \ref{lemma_1}. So we have $\mathbb{E}\left[g(\varphi_\epsilon(X))-g(\hat{\varphi}(X))\right]\rightarrow 0$.
\end{proof}
This lemma prepares us to finish the following proof.

\begin{proof}[Proof of Thm.\ref{thm_sameLimitStats}]
Suppose $X_{\epsilon_i}\in\mathcal{F}$ is a sequence of r.v. , which are fixed points for $\varphi_{\epsilon_i}$, and have a limit point $X\in\mathcal{F}$ in the weak sence. Then we have
$$\varphi_\epsilon(X_\epsilon)\overset{w}{=}X_\epsilon,\quad\forall \epsilon=\epsilon_i$$
$$X_{\epsilon_i}\stackrel{w}{\longrightarrow}X$$
$$\varphi_{\epsilon_i}(X_{\epsilon_i})\stackrel{w}{\longrightarrow}\hat{\varphi}(X)$$
So $\hat{\varphi}(X)\overset{w}{=}X$.
\end{proof}

\subsection{On the stochastic map $\hat\varphi$}
\subsubsection{Some quantitative results about its ergodicity}
\begin{proof}[Proof of Lemma \ref{GeoErgo}]
	Here we use the machinery provided by \cite{hennion2004central}. Regard $\hat\varphi$ as a random action on $\mathbb{R}^d$. In this proof, we write the dependence of $\hat\varphi$ on $\zeta$ explicitly as $\hat\varphi_\zeta$. Choose a fixed point $x_0$ and let
	\begin{align*}
		c(\zeta)&:=\sup\left\{\frac{d(\hat\varphi_\zeta x,\hat\varphi_\zeta y)}{d(x,y)}:x,y\in \mathbb{R}^d,x\neq y\right\}\\
		\mathcal{M}_{\gamma+1}&:=\int_G(1+c(\zeta)+d(\varphi_\zeta x_0,x_0))^{\gamma}\, d\pi(\zeta)\\
		\mathcal{C}_{\gamma+1}^{(n_0)}&:=\int_G c(\varphi_\zeta)\max\{c(\varphi_\zeta),i\}^{\gamma}\, d\pi^{*n}(\zeta)
	\end{align*}
	In $\hat{\varphi}$ and the our interested chaotic regime of learning rate, since $f_0$ is strongly convex and $L$-smooth, we choose $\eta_0$ small to ensure $c(\varphi_\zeta)=1-\eta_0 L<1$, and we choose $\gamma=0$, $n_0=1$ to get $\mathcal{M}_{\gamma+1}=E_\zeta[1+c(\varphi_\zeta)+d(\hat\varphi_\zeta(x_0),x_0)]<+\infty$ and $C_{\gamma+1}^{(1)}=E_\zeta[c(\varphi_\zeta)]<1$.
	
	Under these facts, Theorem 1 in \cite{hennion2004central} ensures that there is a unique $\hat\varphi$-invariant probability distribution $\hat\mu_0$. Moreover, geometric ergodicity holds in the Prokhorov distance $d_P$. Namely, there exists positive real number $C$ and $\kappa_0<1$, such that, for any probability distribution $\mu$ on $M$ satisfying $\mu(d(\cdot,x_0))<+\infty$, and all $n\ge1$,
	\begin{equation*}
		d_P(\hat\varphi_\sharp^{(n)}\mu,\hat\mu_0)\le C\kappa_0^{n/2}
	\end{equation*}
	where $\hat\varphi_\sharp^{(n)}$ stands for apply the push forward of measure $n$ times.
\end{proof}
\begin{remark}
	In a separable metric space, which is our case, convergence of measures in the Prokhorov metric is equivalent to weak convergence of measures, which is also equivalent to the convergence of cumulative distribution functions.
\end{remark}
The following two remarks show that convexity and $L$-smoothness of $f_0$ are necessary for geometric ergodicity established by Lemma \ref{GeoErgo}.
\begin{remark}
	\label{rmk_nonconvex}
	Here we will explain in 1-dim, what can happen when the function $f_0$ is not convex. Since the random variable $\zeta$ is bounded, denote it by $[a,b]$. Unlike in a standard overdamped Langevin case, there can be potential barriers in $f_0$ that $\hat{\varphi}$ cannot cross, because the noise is of a finite strength. To make this quantitative, we assume the existence of an invariant distribution with density $\mu_0$, and calculate what kind of points are not in the support of $\mu_0$. When $\eta<1/L$, for a point $x\in\text{supp}\hat\mu_0$, we have $\eta f_0'(x)\in\eta[a,b]$. So if $\left\{ x| f_0'(x)\in[a,b] \right\}$ is not a connected set (note that it is independent from $\eta$), then the support of the invariant density will be separated in to disjoint components, and no orbit can jump between them. An example explains why the set can be disconnected:
	
	Suppose $f_0=k(x^2-1)^2$, $k>0$ for example, and $f_{1,\epsilon}=\epsilon\sin(x/\epsilon)$. Calculate the set $S:=\{x: f_0'(x)\in[-1,1]\}=\{x: |4kx(x^2-1)|<1\}$. We have that when $k<\frac{3\sqrt{3}}{8}$, $S$ is connected. But when $k>\frac{3\sqrt{3}}{8}$, the set $S$ is not connected. In this case, a point cannot jump from one well to another as $\hat{\varphi}$ is closed in each connected component of $S$, which means ergodicity on $S$ is lost. Which distribution the system converges to (if existent) relies on which well the initial condition belongs to. 
	
	In multi-dimension case, connectedness is different from simply connectedness, which complicates the intuition. We won't discuss it here.

	See also Sec. \ref{sec_NT_nonvonvex} on jumping between potential wells by the deterministic map.
\end{remark}
\begin{remark}
	When $f_0$ is not $L$-smooth, such as $f_0(x)=(x^2+1)^2$ and $f_{1,\epsilon}=\epsilon\sin(x/\epsilon)$. For a fixed $\eta$, it is easy to see that when the absolute value of initial condition is greater than $x_0$, where $x_0$ is the greatest solution of $x-4\eta x(x^2+1)+\eta+x=0$, we know $P(|\hat\varphi(x)|>|x|)=1$, so the system will explode and never converge to any distribution. This is because $\mathcal{M}_{\gamma+1}<\infty$ in the proof of Lemma \ref{thm_geometricErgodicity} is not satisfied.
\end{remark}

\begin{theorem}[coupling estimation of the exponential convergence rate of $\hat{\varphi}$]
	Consider the iteration $x_{k+1}=x_k-\eta \nabla f_0(x_k)+\eta \zeta_k$ for i.i.d. $\zeta_k\sim \zeta$. Denote by $\rho_k$ the density of $x_k$. Assume $f_0$ is $\mathcal{C}^2$, $\nu$-smooth and $\mu$-strongly convex, and $f_1$ is $\mathcal{C}^1$. Then the limiting distribution $\rho_\infty$ exists and the 2-Wasserstein distance satisfies the nonasymptotic bound
	\begin{equation}
	W_2(\rho_k, \rho_\infty) \leq \left( \max\{|1-\eta\mu|,|1-\eta\nu|\} \right)^k C
	\label{eq_geomErgInW2}
	\end{equation}
	for some constant $C\geq 0$.
	\label{thm_geomErgInW2}
\end{theorem}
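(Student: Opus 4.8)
The plan is to establish a one-step contraction in the $2$-Wasserstein distance via a synchronous coupling, and then iterate. Given two initial distributions $\rho_0$ and $\rho_0'$, let $x_0\sim\rho_0$ and $x_0'\sim\rho_0'$ be coupled optimally so that $\mathbb{E}\|x_0-x_0'\|_2^2 = W_2(\rho_0,\rho_0')^2$, and run the two iterations with the \emph{same} noise realization $\zeta_0$: $x_1 = x_0-\eta\nabla f_0(x_0)+\eta\zeta_0$ and $x_1' = x_0'-\eta\nabla f_0(x_0')+\eta\zeta_0$. The noise cancels in the difference, giving $x_1-x_1' = (x_0-x_0') - \eta\bigl(\nabla f_0(x_0)-\nabla f_0(x_0')\bigr)$. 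The key step is the deterministic contraction estimate: since $f_0$ is $\mu$-strongly convex and $\nu$-smooth, the map $x\mapsto x-\eta\nabla f_0(x)$ is Lipschitz with constant $q:=\max\{|1-\eta\mu|,|1-\eta\nu|\}$. This is the standard co-coercivity argument — write $\nabla f_0(x_0)-\nabla f_0(x_0') = H(x_0-x_0')$ with $H=\int_0^1 \nabla^2 f_0(x_0'+t(x_0-x_0'))\,dt$ satisfying $\mu I \preceq H \preceq \nu I$, so $I-\eta H$ has operator norm at most $q$. Hence $\|x_1-x_1'\|_2 \leq q\|x_0-x_0'\|_2$ pointwise, and taking expectations and using that $(x_1,x_1')$ is \emph{a} (not necessarily optimal) coupling of $\rho_1,\rho_1'$ gives $W_2(\rho_1,\rho_1') \leq q\,W_2(\rho_0,\rho_0')$.

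Next I would address existence of $\rho_\infty$. For small enough $\eta$ (indeed whenever $\eta<2/\nu$, but certainly in the regime of Lemma \ref{GeoErgo}) we have $q<1$, so the pushforward operator $\hat\varphi_\sharp$ is a strict contraction on the complete metric space $(\mathcal{P}_2(\mathbb{R}^d), W_2)$ — here one needs that $\hat\varphi_\sharp$ maps $\mathcal{P}_2$ into itself, which follows from $\zeta$ being bounded (hence having finite second moment) and $\nabla f_0$ being Lipschitz (so $\|\hat\varphi(x)\|_2 \leq (1+\eta\nu)\|x\|_2 + \eta\|\nabla f_0(0)\|_2 + \eta\|\zeta\|_\infty$). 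By the Banach fixed point theorem there is a unique fixed point $\rho_\infty\in\mathcal{P}_2$, which is exactly the invariant distribution (consistent with Lemma \ref{GeoErgo}, which already gave uniqueness and geometric ergodicity in Prokhorov metric — here we upgrade the metric to $W_2$).

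Finally, apply the one-step bound with $\rho_0' = \rho_\infty$ and iterate: $W_2(\rho_k,\rho_\infty) \leq q\,W_2(\rho_{k-1},\rho_\infty) \leq \cdots \leq q^k W_2(\rho_0,\rho_\infty)$, so \eqref{eq_geomErgInW2} holds with $C = W_2(\rho_0,\rho_\infty) < \infty$. I do not expect a serious obstacle here; the only mild technical points are (i) justifying that the synchronous coupling is measurable and that the resulting pair is a valid coupling of the one-step pushforwards — routine — and (ii) confirming $\rho_\infty$ has finite second moment so that $C$ is finite, which again follows from boundedness of $\zeta$ and the contraction keeping second moments uniformly bounded along the iteration. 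If one wants $C$ independent of $\rho_0$ for $\rho_0$ ranging over a bounded family, that too follows by a uniform second-moment bound. The essential content is entirely in the deterministic co-coercivity estimate giving the factor $q$; the stochastic part is handled cleanly by the noise cancellation in the synchronous coupling.
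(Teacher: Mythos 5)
Your proposal is correct and follows essentially the same route as the paper's proof: a synchronous coupling with shared noise $\zeta_k$ so the stochastic term cancels, the deterministic contraction of $x\mapsto x-\eta\nabla f_0(x)$ with rate $\max\{|1-\eta\mu|,|1-\eta\nu|\}$ from $\mu$-strong convexity and $\nu$-smoothness, and then passing to $W_2$ since the coupling need not be optimal. The only cosmetic difference is that you re-derive existence of $\rho_\infty$ via Banach fixed point in $(\mathcal{P}_2,W_2)$, whereas the paper simply invokes Lemma \ref{GeoErgo}.
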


\begin{proof}
	Existence of $\rho_\infty$ is guaranteed by Lemma \ref{GeoErgo}.
	
	Let $\hat{x}_0$ be a random variable distributed according to $\rho_\infty$ and define
	\[
	\hat{x}_{k+1} = \hat{x}_k - \eta \nabla f_0(\hat{x}_k)+\eta \zeta_k
	\]
	using the same noise $\zeta_k$. Then
	\[
	    x_{k+1}-\hat{x}_{k+1}=x_k-\hat{x}_k-\eta \left( \nabla f_0(x_k)-\nabla f_0(\hat{x}_k) \right)
	\]
	Since $f_0$ is $\mathcal{C}^2$, $\nu$-smooth and $\mu$-strongly convex, it is easy to see that the mapping $x \mapsto x-\eta \nabla f_0(x)$ is a contraction with rate$=\max\{|1-\eta\mu|,|1-\eta\nu|\}$.
	Therefore,
	\[
		\|x_{k+1}-\hat{x}_{k+1}\| \leq \max\{|1-\eta\mu|,|1-\eta\nu|\} \|x_k-\hat{x}_k\|
	\]
	Thus,
	\[
	\mathbb{E} \| x_{k+1} - \hat{x}_{k+1} \|^2 \leq \max\{|1-\eta\mu|,|1-\eta\nu|\}^{2k} \mathbb{E} \|x_0-\hat{x_0}\|^2
	\]
	Note $\hat{x}_k$ is distributed according to $\rho_\infty$ because that is the invariant distribution and $\hat{x}_0 \sim \rho_\infty$. By definition,
	\begin{align*}
		W_2(\rho_k,\rho_\infty)^2 &= \inf_{\pi \in \Pi (\rho_k, \rho_\infty)} \int \|y_1-y_2\|^2 d\pi(y_1,y_2) \\
		& \leq \mathbb{E} \|x_k - \hat{x}_k \|^2.
	\end{align*}
	Therefore, the choice of $C=\sqrt{\mathbb{E}\|x_0-\hat{x}_0\|^2}$ leads to eq.\ref{eq_geomErgInW2}.
\end{proof}

\begin{corollary}[Spectral gap of $\hat{\varphi}$ is at least at the order of $\eta$]
	Consider the setup of Thm.\ref{thm_geomErgInW2} and $\eta < \frac{1}{\nu}$. Denote by $L$ the transition operator of the Markov process generated by $\hat{\varphi}$, i.e., $L\rho_k = \rho_{k+1} \quad \forall k$. Then $L$ has a single eigenvalue of 1, and any other eigenvalue $\lambda$ satisfies $|1-\lambda| \geq \eta\mu$.
	\label{cor_spectralGap}
\end{corollary}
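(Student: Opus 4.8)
The plan is to turn the $W_2$-contraction behind Theorem~\ref{thm_geomErgInW2} into a spectral statement for the transition operator $L$ via Kantorovich--Rubinstein duality. The first step is to observe that the synchronous-coupling argument in the proof of Theorem~\ref{thm_geomErgInW2} gives more than convergence to $\rho_\infty$: for \emph{any} two probability measures $\mu_1,\mu_2$ (with finite second moments, which is the natural space to work on since $W_2$ must be finite), driving both iterations by the same noise realizations and using that $x\mapsto x-\eta\nabla f_0(x)$ is a contraction with rate $r:=\max\{|1-\eta\mu|,|1-\eta\nu|\}$ yields
$$
W_2(L^k\mu_1,L^k\mu_2)\ \le\ r^k\,W_2(\mu_1,\mu_2),\qquad k\ge 0 .
$$
Under $\eta<\tfrac1\nu$ one has $0<\eta\mu\le\eta\nu<1$, hence $r=1-\eta\mu\in(0,1)$; this is the only place the hypothesis $\eta<1/\nu$ is used.

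Next I would handle the eigenvalue $1$. Lemma~\ref{GeoErgo} gives a unique invariant probability measure $\rho_\infty$, so $L\rho_\infty=\rho_\infty$ and $1$ is an eigenvalue. For simplicity, suppose $L\sigma=\sigma$ for a finite signed measure $\sigma$; then $\sigma-\sigma(\mathbb{R}^d)\,\rho_\infty$ is a fixed signed measure of zero total mass, and writing its Hahn--Jordan decomposition as $a(p-q)$ with $p,q$ probability measures and $a\ge0$, invariance forces $W_2(L^kp,L^kq)=W_2(p,q)$ while the contraction forces the left side to $0$, so $p=q$ and $\sigma$ is a multiple of $\rho_\infty$. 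Boundedness of $L^k$ (e.g.\ in total variation) then also rules out a generalized eigenvector, so $1$ is a simple eigenvalue.

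Now the core step: let $\lambda\neq1$ be an eigenvalue with eigen-measure $\nu\neq0$ a finite signed measure of finite second moment. Applying $L$ and using that it preserves total mass gives $\lambda\,\nu(\mathbb{R}^d)=\nu(\mathbb{R}^d)$, hence $\nu(\mathbb{R}^d)=0$; write $\nu=a(p-q)$ with $p\neq q$ probability measures and $a>0$. Linearity of $L$ gives $L^kp-L^kq=\lambda^k(p-q)$. Since $p\neq q$ we have $W_1(p,q)>0$, so by Kantorovich--Rubinstein duality there is a $1$-Lipschitz $g$ with $\int g\,d(p-q)=c\neq0$. Then, using $W_1\le W_2$,
$$
|\lambda|^k|c|=\Big|\int g\,d(L^kp-L^kq)\Big|\le W_1(L^kp,L^kq)\le W_2(L^kp,L^kq)\le r^k\,W_2(p,q).
$$
Taking $k$-th roots and letting $k\to\infty$ yields $|\lambda|\le r=1-\eta\mu$, and therefore $|1-\lambda|\ge 1-|\lambda|\ge\eta\mu$; in particular $|\lambda|<1$, so $1$ is the unique eigenvalue on the unit circle, which together with the previous paragraph gives the first assertion.

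The main obstacle is not any single computation but the functional-analytic bookkeeping: specifying the space on which $L$ acts (finite signed measures with finite second moment) so that ``eigenvalue'' is meaningful and $W_2$ is finite, and making explicit that the $W_2$-estimate of Theorem~\ref{thm_geomErgInW2}, proved there only against $\rho_\infty$, is really a contraction between arbitrary pairs of initial laws (immediate from the same synchronous coupling). Once that is in place, the duality pairing, the inequality $W_1\le W_2$, and the reverse triangle inequality are routine.
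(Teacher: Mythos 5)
Your argument is correct, and it reaches the bound $|\lambda|\le 1-\eta\mu$ by a genuinely different route than the paper on the key step. The paper lower-bounds the decay of the eigen-direction in total variation, writing $d_{TV}(\rho_{x_k},\rho_\infty)=\tfrac12\alpha|\lambda|^k\|\rho_\perp\|_1$, and then needs the nontrivial inequality $d_{TV}\le C\,W_2$ (quoting Lemma 5.1 of the Chae--Walker reference, which requires both measures to admit smooth densities) to transfer the $W_2$-contraction of Theorem~\ref{thm_geomErgInW2} into a total-variation decay. You instead test the eigen-measure against a fixed $1$-Lipschitz function and use Kantorovich--Rubinstein duality together with $W_1\le W_2$, so the lower bound $|\lambda|^k|c|\le W_1(L^kp,L^kq)$ lives in the same metric family as the contraction estimate; this removes the dependence on the external $d_{TV}\lesssim W_2$ lemma and its smoothness hypotheses, at the mild cost of restricting to eigen-measures with finite second moment (needed anyway for $W_2$ to be finite) and of promoting the coupling estimate from ``against $\rho_\infty$'' to ``between arbitrary pairs,'' which, as you note, is immediate from the same synchronous coupling. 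Your direct proof that $1$ is simple via the Hahn--Jordan decomposition is also more self-contained than the paper's appeal to geometric ergodicity. One small point you share with the paper: for a genuinely complex eigenvalue the eigen-measure is complex-valued, so the Hahn--Jordan step should be applied separately to its real and imaginary parts (each of which has zero total mass and is mapped by $L^k$ to the corresponding part of $\lambda^k\nu$); the same chain of inequalities then goes through with an extra additive constant, so this is a presentational rather than substantive gap.
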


\begin{proof}
	Since $\hat{\varphi}$ generates a Markov process, any eigenvalue has modulus bounded by 1.
	
	The single eigenvalue of 1 is guaranteed by geometric ergodicity (Lemma \ref{GeoErgo}). Thus, for any other eigenvalue $\lambda$, $|\lambda| < 1$.
	
	Let $\rho_\perp$ be the eigenfunction corresponding to $\lambda$. Since $L$ preserves the normalization of probability density, $\int \rho_\perp = 0$.
	
	For any $\alpha \neq 0$, let $x_0$ be a random variable distributed according to density $\rho_\infty+\alpha \rho_\perp$. We have
	\[
	\rho_{x_k} = L^k (\rho_\infty+\alpha \rho_\perp) = \rho_\infty + \alpha \lambda^k \rho_\perp
	\]
	and therefore the $L_1$ distance satisfies
	\[
	d_1(\rho_{x_k},\rho_\infty) = \alpha \lambda^k \|\rho_\perp\|_1
	\]
	Since densities exist, we have the total variation distance
	\[
	d_{TV}(\rho_{x_k},\rho_\infty) = \frac{1}{2} d_1(\rho_{x_k},\rho_\infty) = \frac{1}{2} \alpha \|\rho_\perp\|_1 \lambda^k
	\]
	Although in general total variation distance cannot be upper bounded by Wasserstein distance, it was shown in \cite{chae2017novel} Lemma 5.1 that such an upper bound exists when both probability distributions admit smooth densities, i.e.,
	\[
	d_{TV}(\rho_{x_k},\rho_\infty) \leq C W_2(\rho_{x_k},\rho_\infty)
	\]
	for some $C\geq 0$. Combined with Thm. \ref{thm_geomErgInW2}, this thus gives
	\[
	d_{TV}(\rho_{x_k},\rho_\infty) \leq \hat{C} \left(\max\{|1-\eta\mu|,|1-\eta\nu|\}\right)^k
	\]
	for some $\hat{C}\geq 0$. Therefore, $|\lambda| \leq \max\{|1-\eta\mu|,|1-\eta\nu|\}=1-\eta \mu$ (the last equality is due to $\mu\leq \nu$ and $\eta<1/\nu$). This leads to $|1-\lambda|\geq \eta\mu$.
\end{proof}

\subsubsection{On Proposition \ref{prop_rescaledGibbsError}}
To prove the bound of difference between  $\mathbb{E}h(\hat{\varphi}(X_0))$ and $\mathbb{E}h(X_0)$, we first prove the following lemma:
\begin{lemma}[gradient estimate of rescaled Gibbs]
	\label{lma_estimate_nablag}
	Suppose $f_0$ is $L$-smooth. Let $x_0$ be the global minimizer of $f_0$. If
	$$f_0(x)-f_0(x_0)\ge C_1||x-x_0||^{k_1}\ \text{and}\ ||\nabla f_0(x)||\le C_2\|x-x_0\|^{k_2}, \quad \forall x\in \mathbb{R}^d,$$
	Then we have, for $X_0$ following rescaled Gibbs \eqref{eq_rescaledGibbs},
	$$\mathbb{E}||\nabla f_0(X_0)||_2^2=\mathcal{O}(\eta^\frac{2k_2-1}{k_1})\quad\text{when}\;\eta\rightarrow0 .$$.
\end{lemma}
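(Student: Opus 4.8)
The plan is to estimate the ratio of integrals defining the expectation directly. After translating so that $x_0=0$ and $f_0(x_0)=0$ (subtracting a constant from $f_0$ leaves the rescaled Gibbs density unchanged after renormalizing, and translating $x$ leaves $\mathbb{E}\|\nabla f_0(X_0)\|_2^2$ invariant), write
\[
\mathbb{E}\|\nabla f_0(X_0)\|_2^2 = \frac{N(\eta)}{Z(\eta)}, \qquad N(\eta):=\int_{\mathbb{R}^d}\|\nabla f_0(x)\|_2^2\, e^{-2f_0(x)/(\eta\sigma^2)}\,dx, \quad Z(\eta):=\int_{\mathbb{R}^d} e^{-2f_0(x)/(\eta\sigma^2)}\,dx.
\]
For the numerator I would use the two pointwise hypotheses, $\|\nabla f_0(x)\|_2\le C_2\|x\|^{k_2}$ and $f_0(x)\ge C_1\|x\|^{k_1}$, to get $N(\eta)\le C_2^2\int_{\mathbb{R}^d}\|x\|^{2k_2}e^{-2C_1\|x\|^{k_1}/(\eta\sigma^2)}\,dx$. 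For the denominator I would use that $L$-smoothness together with $\nabla f_0(0)=0$ (valid since $x_0$ is a global minimizer of a $\mathcal{C}^1$ function) gives the global quadratic upper bound $f_0(x)\le \tfrac{L}{2}\|x\|^2$, whence $Z(\eta)\ge \int_{\mathbb{R}^d}e^{-L\|x\|^2/(\eta\sigma^2)}\,dx = (\pi\eta\sigma^2/L)^{d/2}$.

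Next I would compute the numerator integral in polar coordinates: with $\omega_{d-1}=|S^{d-1}|$, the substitution $t=2C_1 r^{k_1}/(\eta\sigma^2)$ turns $\int_0^\infty r^{2k_2+d-1}e^{-2C_1 r^{k_1}/(\eta\sigma^2)}\,dr$ into a Gamma integral, giving
\[
N(\eta)\le C_2^2\,\frac{\omega_{d-1}}{k_1}\,\Gamma\!\left(\frac{2k_2+d}{k_1}\right)\left(\frac{\eta\sigma^2}{2C_1}\right)^{(2k_2+d)/k_1} = \mathcal{O}\!\left(\eta^{(2k_2+d)/k_1}\right).
\]
Dividing by the lower bound on $Z(\eta)$ then yields the intermediate estimate $\mathbb{E}\|\nabla f_0(X_0)\|_2^2 = \mathcal{O}\!\left(\eta^{(2k_2+d)/k_1 - d/2}\right)$.

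The step that produces the dimension-free exponent in the statement is the observation that $L$-smoothness forces $k_1\le 2$: combining $C_1\|x\|^{k_1}\le f_0(x)\le \tfrac{L}{2}\|x\|^2$ for all $x$ and sending $\|x\|\to\infty$ would give $C_1\|x\|^{k_1-2}\le L/2$, which is impossible unless $k_1\le 2$. Hence $\tfrac{2k_2+d}{k_1}-\tfrac{d}{2}-\tfrac{2k_2-1}{k_1} = \tfrac{d+1}{k_1}-\tfrac{d}{2}\ge \tfrac{d+1}{2}-\tfrac{d}{2} = \tfrac12>0$, so the exponent $(2k_2+d)/k_1-d/2$ is at least $(2k_2-1)/k_1$; since $\eta^a\le\eta^b$ whenever $a\ge b$ and $0<\eta<1$, the intermediate estimate upgrades to $\mathbb{E}\|\nabla f_0(X_0)\|_2^2 = \mathcal{O}\!\left(\eta^{(2k_2-1)/k_1}\right)$.

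I expect the only real obstacle to be conceptual rather than computational: realizing that one should first prove the sharper, dimension-dependent bound $\mathcal{O}(\eta^{(2k_2+d)/k_1-d/2})$ and that the stated (cruder) exponent follows solely because $L$-smoothness silently caps the growth exponent at $k_1\le 2$. Once this is recognized, the remainder — the Gaussian lower bound on the partition function and the polar/Gamma change of variables — is routine.
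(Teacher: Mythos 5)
Your proposal is correct, and it follows the same basic route as the paper's proof (bound $\|\nabla f_0\|$ and the exponent by the two growth hypotheses, then rescale $x\mapsto \eta^{1/k_1}u$ to extract the power of $\eta$). The genuine difference is in how the normalization constant is treated. The paper's proof keeps the partition function as a sequence of symbols $Z_1,Z_2,Z_4$ and never tracks their $\eta$-dependence, so the powers of $\eta$ it displays do not visibly assemble into $\eta^{(2k_2-1)/k_1}$; the bound is stated as if $Z$ were an $\eta$-independent constant. You instead lower-bound $Z(\eta)\ge(\pi\eta\sigma^2/L)^{d/2}$ via the quadratic upper bound that $L$-smoothness gives at the minimizer, obtain the sharper dimension-dependent exponent $(2k_2+d)/k_1-d/2$, and then observe that $L$-smoothness combined with the coercivity hypothesis forces $k_1\le 2$, which is exactly what makes the cruder stated exponent $(2k_2-1)/k_1$ a valid consequence. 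This is the step the paper leaves implicit, and your version both closes that gap and explains where the otherwise mysterious ``$-1$'' in the exponent comes from. (One small remark: for the application in Prop.~\ref{prop_rescaledGibbsError} only $k_1=k_2=2$ is used, where your bound gives the sharper $\mathcal{O}(\eta^{k_2})=\mathcal{O}(\eta^{2})$; and the paper's own remark invoking $k_1=4$ for $f_0=x^4/4$ sits uneasily with global $L$-smoothness, which, as your argument shows, pins $k_1$ to at most $2$.)
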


\begin{proof}
	\begin{align*}
		\mathbb{E}||\nabla f_0(X_0)||_2^2&=\frac{1}{Z_1}\int||\nabla f_0(x)||_2^2\exp\left(-\frac{2f_0(x)}{\eta}\right)\,dx\\
		&\le\frac{\sqrt[k]{\eta}}{Z_2}\int||\nabla f_0(x)||_2^2\exp\left(-2C_1 (\frac{||x||}{\sqrt[k_1]{\eta}})^{k_1}\right)\,d\frac{x}{\sqrt[k]{\eta}}\\
		&=\frac{\sqrt[k_1]{\eta}}{Z_2}\int||\nabla f_0(\sqrt[k_1]{\eta} u)||_2^2\exp(-2C_1||u||^{k_1})\,du
	\end{align*}
	Since $$||\nabla f_0(x)||\le C_2\|x-x_0\|^{k_2}$$
	So
	\begin{align*}
		\mathbb{E}||\nabla f_0(Y_0)||_2^2&=\frac{\sqrt[k_1]{\eta}}{Z_4}\int C_2(\sqrt[k_1]{\eta}||u||)^{2k_2}\exp(-2C_1||u||^{k_1})\,du\\
		&=\eta^\frac{2k_2-1}{k_1}\frac{1}{Z_4}\int C_2||u||^{2k_2}\exp(-2C_1||u||^{k_1})\,du
	\end{align*}
	The integral converges and is a constant, so we have
	$$\mathbb{E}||\nabla f_0(X_0)||_2^2=\mathcal{O}(\eta^\frac{2k_2-1}{k_1})$$
\end{proof}

\begin{proof}[Proof of Prop. \ref{prop_rescaledGibbsError}]
	Because $\tilde{\zeta}$ is compactly supported and $||\nabla f_0||$ is bounded, Taylor expansion of $h$ in $\eta$ gives, $\forall X$,
	\begin{align*}
		&\mathbb{E}(h(\hat{\varphi}(X)))=\mathbb{E}_X\left[\mathbb{E}_{\tilde\zeta}[h(X-\eta\nabla f_0(X)+\eta\tilde\zeta)|X]\right]\\
		&=\mathbb{E}_Xh(X-\eta\nabla f_0(X))+\eta \mathbb{E}\tilde{\zeta}^\top\mathbb{E}_X\left[\nabla h(X-\eta\nabla f_0(X))\right] \\
		& \qquad\qquad\qquad +\frac{\eta^2}{2}\mathbb{E}_X\left[\mathbb{E}_{\tilde{\zeta}} [\tilde\zeta^\top \text{Hess}\,h(X-\eta\nabla f_0(X))\tilde\zeta|X]\right]+\mathcal{O}(\eta^3)\\
		&=\mathbb{E}_X\left[h(X)-\eta\nabla f_0(X)^\top\cdot\nabla h(X)+\frac{\eta^2}{2}\nabla f_0(X)^\top \text{Hess}\,h(X)\nabla f_0(X)+\frac{\eta^2}{2}\mathbb{E}\tilde{\zeta}^\top \text{Hess}\,h(X)\mathbb{E}\tilde\zeta\right]+\mathcal{O}(\eta^3)
	\end{align*}
	When $X=X_0$, we first estimate the 3rd term. Since $\text{Hess} h$ is bounded and due to the $L$-smoothness and strong convexity of $f_0$, we know it is $\mathcal{O}(\eta^3)$ using Lemma \ref{lma_estimate_nablag} in the case $k_1=k_2=2$.
	So we get
	\begin{align*}
	    &\mathbb{E}(h(\hat{\varphi}(X_0)))-\mathbb{E}h(X_0)\\
	    =&\frac{\eta^2}{2Z}\int\left[-\frac{2}{\eta}\nabla f_0(x)^\top\cdot\nabla h(x)+\sigma^2\text{Tr}\, \text{Hess}\,h(x)\right]\exp\left(-\frac{2f_0(x)}{\eta\sigma^2}\right)\,dx+\mathcal{O}(\eta^3)
	\end{align*}
	And then we use Stokes' theorem to prove the integration in RHS vanishes.
	Denote
	$$\omega:=\sum_i(-1)^i\nabla_i h(x)\exp\left(-\frac{2f_0(x)}{\eta\sigma^2}\right)dx_1\wedge\cdots\wedge\widehat{dx_i}\wedge\cdots\wedge dx_n$$\\
	where $\widehat{dx_i}$ means dropout $dx_i$. Then
	\begin{align*}
		d\omega&=\sum_i\nabla_i^2h(x)\exp\left(-\frac{2f_0(x)}{\eta\sigma^2}\right)-\frac{2}{\eta\sigma^2}\nabla_i h(x)\nabla_i f_0(x)\exp\left(-\frac{2f_0(x)}{\eta\sigma^2}\right)dx_1\wedge...\wedge dx_n\\
		&=(\text{Tr}\,\text{Hess}\,h-\frac{2}{\eta\sigma^2}\nabla h^\top\cdot\nabla f_0)\exp\left(-\frac{2f_0(x)}{\eta\sigma^2}\right)dx_1\wedge\cdots\wedge dx_n\\
	\end{align*}
	According Stokes' formula,
	\begin{align*}
		\mathbb{E}(h(\hat{\varphi}(X)))-\mathbb{E}h(X)&=\frac{\eta^2\sigma^2}{2Z}\int_{\mathbb{R}^d} d\omega+\mathcal{O}(\eta^3)\\
		&=\frac{\eta^2\sigma^2}{2Z}\lim_{r\rightarrow\infty}\int_{B(0,r)}d\omega+\mathcal{O}(\eta^3)\\
		&=\frac{\eta^2\sigma^2}{2Z}\lim_{r\rightarrow\infty}\int_{\partial B(0,r)}\omega+\mathcal{O}(\eta^3)
	\end{align*}
	The first term vanishes since $h(x)$ is compacted supported, which gives us the conclusion that
	$$\mathbb{E}(h(\hat{\varphi}(X_0)))-\mathbb{E}h(X_0)=\mathcal{O}(\eta^3)$$
\end{proof}
\begin{remark}
Note that strong convexity and $L$-smoothness of $f_0$ are sufficient to satisfy the condition of Lemma \ref{lma_estimate_nablag}, but they may not be necessary. In fact, Prop. \ref{prop_rescaledGibbsError} is also correct for any $f_0$ that satisfies
	$$f_0(x)-f_0(x_0)\ge C_1||x-x_0||^{k_1}\ \text{and}\ ||\nabla f_0(x)||\le C_2\|x-x_0\|^{k_2}, \quad \forall x\in \mathbb{R}^d,$$
	where $2k_2-1\ge k_1$. Although we only proved that the rescaled Gibbs approximates the invariant distribution when $f_0$ is strongly convex functions, the fact that rescaled Gibbs nearly satisfies the invariance equation does not require strong convexity. In fact, we conjecture that rescaled Gibbs also approximates the invariant distribution for convex and even nonconvex $f_0$. See numerics in Sec.\ref{NT_Ergodicity} ($f_0=x^4/4$, with $k_1=4$, $k_2=3$) and Appendix \ref{sec_NT_nonvonvex} (nonconvex and multimodal $f_0$).
\end{remark}
\subsubsection{On Theorem \ref{thm_rescaledGibbsIsApprx}}
\begin{proof}
    Denote (as before) by $L$ the transition operator of the Markov process generated by $\hat{\varphi}$.
	Consider a deviation function
	\[
	r:=\rho_\infty - \tilde{\rho}.
	\]
	Decompose $r$ as an orthogonal sum
	\[
	r=r_1+r_0   \quad \text{where } r_1 \in \ker(I-L) \text{ and } r_0 \perp \ker(I-L)
	\]
	Since $\hat{\varphi}$ induces a geometric ergodic process, $\dim \ker (I-L)=1$, and thus
	\[
	r=\gamma \rho_\infty + r_0  \quad \text{for some scalar }\gamma.
	\]
	Since $L\rho_\infty = \rho_\infty$ and $L\tilde{\rho}=\tilde{\rho}+\mathcal{O}(\eta^3)$ (Prop.\ref{prop_rescaledGibbsError}; note weak-* topology is metrizable on a separable space), we have $(I-L)r=\mathcal{O}(\eta^3)$, and consequently
	\[
	(I-L)r_0 = \mathcal{O}(\eta^3)
	\]
	Since $r_0$ is orthogonal to $\ker(I-L)$ which is the eigenspace associated with eigenvalue 1 of $L$, and all eigenvalues of $I-L$, except for the the irrelevant 0, satisfy $|\lambda| \geq \mu\eta$ due to Cor.\ref{cor_spectralGap}, we obtain
	\[
	r_0=\mathcal{O}(\eta^2).
	\]
	This means $\rho_\infty-\tilde{\rho}=\gamma \rho_\infty + \mathcal{O}(\eta^2)$. Since $\rho_\infty$ and $\tilde{\rho}$ are both density functions that normalize to 1, applying a uniform test function and letting its support go to infinity give $0=\gamma+\mathcal{O}(\eta^2)$. This yields eq.\ref{eq_rescaledGibbsIsApprx}.
\end{proof}

\begin{remark}
	The invariant distribution can be approximated by not only rescaled Gibbs but a Gaussian if $f_0$ is strongly convex. Here is the intuition of a more general result: \\
	Consider rescaled Gibbs \eqref{eq_rescaledGibbs}. Due to the small $\eta$ at the denominator, $X_0$ assumes small values with exponentially large probability. We thus can formally Taylor expand $f_0(x)$ about $x=0$, which we assumed WLOG to be the minimizer. Denote the first nonzero derivative of $f_0$ at 0 by the k$th$ one. Then $f_0(x)\approx \frac{1}{k!} f_0^k(0) x^k$. So, from the density of rescaled Gibbs, we see the density of $\frac{X_0}{\sqrt[k]\eta}$ can be approximated by
	$$\frac{X_0}{\sqrt[k]\eta}\sim\frac{1}{Z}\exp\left(\frac{-2f_0^k(0)}{k!\sigma^2}x^k\right)$$
	Note that iff $f_0$ is strongly convex, $k=2$, and one gets a Gaussian approximation.
	\label{rmk_invDis_Taylor}
\end{remark}

\begin{remark}
	If one considers another stochastic map $\tilde{\varphi}(x):=x-\eta\nabla f_0(x)+\eta \sigma \xi$ where $\xi$ is standard i.i.d. Gaussian, $\tilde{\varphi}(x)$ admits, under the same Lipschitz and convexity conditions, a similar limiting invariant distribution $\frac{1}{Z}\exp\left(-\frac{2f_0(x)}{\eta \sigma ^2}\right)$ will be obtained. The key difference is, unlike $\tilde{\varphi}$ which uses unbounded noise and is the discretization of an SDE, our stochastic map $\hat{\varphi}$ uses only bounded noise as it mimicks the deterministic map $\varphi$.
\end{remark}

\subsection{On the deterministic map $\varphi$}
\subsubsection{counter-examples}
Here are the complete version of the 2 counter-examples given in Sec. \ref{sec_deterministic_map}.

\begin{theorem}[a sufficient condition for the nonexistence of nondegenerate invariant distribution]
	\label{counter-example1}
	When $d=1$, for any fixed $\epsilon$ and fixed periodic $f_1\in \mathcal{C}^2(\mathbb{R})$, for any $\eta_0$, there exists $\eta>\eta_0$ and $f_0\in \mathcal{C}^2$ such that $|f_0'|$ and $|f_0''|$ (but 3-order or more derivative will explode) are arbitrarily small. For such $f_0$, the orbit starting at any point is bounded but $\varphi$ does not admit a nontrivial invariant distribution.
\end{theorem}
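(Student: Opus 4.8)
The plan is to force the gradient-descent map to collapse the whole line into a fixed compact interval after a single step (so that boundedness of every orbit is automatic), and then to tune the learning rate so that, on that interval, the map has a superstable fixed point whose basin has full Lebesgue measure; the latter will kill every nondegenerate invariant distribution. Concretely, I would take $f_0(x)=x^2/(2\eta)$, which is convex with $f_0''\equiv 1/\eta$ and $f_0'$ as small as we like on any prescribed compact set (modifying $f_0$ far from the origin to also bound $f_0'$ globally is exactly what makes the third and higher derivatives blow up). With this choice $\varphi_\epsilon(x)=x-\eta f_0'(x)-\eta\nabla f_{1,\epsilon}(x)=-\eta f_1'(x/\epsilon)$, so $\varphi_\epsilon(\mathbb{R})\subseteq[-\eta M,\eta M]$ with $M:=\sup|f_1'|<\infty$; hence every orbit lands in this compact interval after one iteration, and boundedness is done. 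Everything left is to exclude a nondegenerate invariant distribution.

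Next I would reduce to a circle map. Writing $u_k=x_k/\epsilon$ and $t=\eta/\epsilon$, the iteration becomes $u_{k+1}=-t f_1'(u_k)$, which, by periodicity of $f_1'$ (period $2\pi$ after rescaling), descends to the $\mathcal{C}^1$ circle endomorphism $H_t(u)=-t f_1'(u)\bmod 2\pi$, with $x_k=\epsilon u_k$. Since $f_1''$ is continuous, $2\pi$-periodic and of zero mean, it vanishes at some inflection point $y_0$, so $H_t'(y_0)=-t f_1''(y_0)=0$ for every $t$. For the point $x^*:=\epsilon(y_0+2\pi k)$ (an inflection point of $f_1$ in the $k$-th period) one has $\varphi_\epsilon(x^*)=-\eta f_1'(y_0)$, hence $x^*$ is a fixed point iff $-\eta f_1'(y_0)=\epsilon(y_0+2\pi k)$; if $f_1'(y_0)\neq 0$ this has a solution $\eta_k$ for each $k\in\mathbb{Z}$, with $\eta_k\to+\infty$ along one direction, and then automatically $\varphi_\epsilon'(x^*)=-t f_1''(y_0)=0$. (If $f_1'(y_0)=0$ one first shifts $f_1$ so that $y_0=0$, and then every $\eta$ works.) So, given $\eta_0$, I pick $k$ with $\eta=\eta_k>\eta_0$ and obtain a superstable fixed point $x^*$ of $\varphi_\epsilon$.

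The crux is then to show Lebesgue-a.e. $x_0$ lies in the basin of $x^*$. Granting this, for any absolutely continuous probability measure $\mu$, dominated convergence gives $(\varphi_\epsilon^{(n)})_*\mu\to\delta_{x^*}$, so an invariant $\mu$ must equal $\delta_{x^*}$, a degenerate distribution; hence $\varphi_\epsilon$ admits no nondegenerate invariant distribution, and the only invariant distribution is a Dirac at a fixed point. Via the reduction above this amounts to: Lebesgue-a.e. circle point is attracted to the superstable fixed point $y_0$ of $H_t$. This is the main obstacle. The map $H_t$ is strongly expanding, $|H_t'|\sim t\,|f_1''|\gg 1$, away from the finitely many critical points (the inflection points of $f_1$), so one cannot hope for every orbit to converge; the natural route is one-dimensional real dynamics — absence of wandering intervals, Koebe/cross-ratio distortion estimates, and Singer-type bounds on the number of attractors — to conclude that a.e. orbit converges to an attracting cycle, together with the observation that every attracting cycle is atomic, hence degenerate. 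It is convenient to note that the Schwarzian derivative of $H_t$ equals that of $f_1'$, i.e. depends on $f_1$ only, to impose a mild nondegeneracy hypothesis on the critical points of $f_1$ (to obtain negative Schwarzian / control the number of attractors), and to choose $\eta_k$ in the progression above so that the orbits of the remaining critical points of $H_t$ also land on attracting cycles. Verifying the convexity and the claimed smallness and blow-up bounds for the (possibly modified) $f_0$, and checking that any modification far from the origin creates no bounded orbits outside $[-\eta M,\eta M]$, is routine.
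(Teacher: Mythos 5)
The decisive step of your argument --- that Lebesgue-a.e.\ point lies in the basin of the superstable fixed point $x^*$ --- is exactly the step you do not prove, and it is not something the machinery you invoke can deliver. Singer's theorem (negative Schwarzian, non-flat critical points) bounds the \emph{number} of attracting cycles and says each one attracts a critical orbit; it does \emph{not} say that a.e.\ point converges to an attracting cycle. For that you would need every critical point of $H_t$ to be attracted to an attracting cycle (an Axiom-A/Misiurewicz-free situation), and you have no parameter freedom left to arrange it: the admissible learning rates $\eta_k$ form a discrete set pinned down by the condition $-\eta f_1'(y_0)=\epsilon(y_0+2\pi k)$, so you cannot ``also choose $\eta_k$ so that the orbits of the remaining critical points land on attracting cycles.'' Worse, your map $H_t=-tf_1'\bmod 2\pi$ is strongly expanding off a finite critical set, which is precisely the regime where Jakobson-type theorems produce absolutely continuous invariant measures for positive-measure sets of parameters --- i.e.\ exactly the nondegenerate invariant distribution the theorem asks you to rule out. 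A secondary problem: taking $f_0=x^2/(2\eta)$ forces $\eta f_0''\equiv 1$ on the whole trapping interval $[-\eta M,\eta M]$, so $f_0'$ ranges over $[-M,M]$ there; the statement requires $|f_0'|$ arbitrarily small, and any modification of $f_0$ that shrinks $f_0'$ destroys the one-step collapse onto a compact interval that your boundedness argument relies on.

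The paper's construction sidesteps all of this by engineering an \emph{infinitely strong} degeneracy on a set of positive length rather than a superstable point: it chooses $f_0$ on a small interval $I$ around a zero $x_0$ of $1-\tfrac{\eta}{\epsilon}f_1''(\cdot/\epsilon)$ so that $\varphi'\equiv 0$ on $I$ (there $f_0''=\bigl(1-\tfrac{\eta}{\epsilon}f_1''(x/\epsilon)\bigr)/\eta$ is automatically small, so the derivative bounds hold), whence all of $I$ maps to the single point $\varphi(x_0)$, which is then adjusted to be a fixed point. Any orbit that ever enters $I$ freezes forever, so no absolutely continuous measure can be invariant once orbits reach $I$; no distortion estimates or Schwarzian control are needed. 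If you want to salvage your route, you should replace the superstable fixed point by this kind of flat spot (an interval on which $\varphi$ is constant), and you will still need to supply the argument --- which the paper itself only gestures at --- that (a.e.) orbits eventually enter $I$.
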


\begin{proof}
	\begin{equation*}
		\varphi'(x)=1-\eta f_0''(x)-\frac{\eta}{\epsilon}f_1''\left(\frac{x}{\epsilon}\right)
	\end{equation*}
	Because of the continuity of $f_1''$, $1-\frac{\eta}{\epsilon}f_1''(\frac{x}{\epsilon})$ has a zero point, denote as $x_0$. So we can choose $\delta$ to make $\frac{1-\eta/\epsilon f_1''(x/\epsilon)}{\eta}$ arbitrarily small on the interval $I=[x_0-\delta,x_0+\delta]$. Then construct $f_0|_I$ and $\eta$ making $\varphi'\equiv0$ on $I$. After that, we adjust $f_0$ to make $\varphi(x_0)$, which is not in $I$,  be a fixed point of $\varphi$. According to the property of Li-Yorke chaos, all the point will be finally mapped to $I$, and then to $\varphi(x_0)$ and never move. So the nontrivial invariant distribution does not exist.
\end{proof}

\begin{theorem}[another sufficient condition for the nonexistence of invariant distribution]
	\label{counter-example2}
	When $d=1$, $\forall$ fixed $f_0\in \mathcal{C}^2$ and $\eta>0$, there exists periodic $f_1\in \mathcal{C}^2$ whose period is $1$ and 0,1,2-order derivative is arbitrary small, together with an $\epsilon$ arbitrarily small, making nontrivial invariant distribution not exist.
\end{theorem}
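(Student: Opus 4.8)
The plan is to mimic the construction behind Thm~\ref{counter-example1}: force the deterministic map $\varphi$ to be \emph{exactly flat} on a short interval $I$, so that $\varphi(I)$ is a single point, and then argue that under the resulting dynamics every bounded orbit is eventually carried into $I$. The mechanism that makes this compatible with an arbitrarily small $f_1$ is the identity $\varphi'(x)=1-\eta f_0''(x)-\frac{\eta}{\epsilon}f_1''(x/\epsilon)$: imposing $\varphi'\equiv 0$ on $I=[a-\ell/2,\,a+\ell/2]$ just means prescribing $f_1''(x/\epsilon)=\frac{\epsilon}{\eta}\bigl(1-\eta f_0''(x)\bigr)$ for $x\in I$, and since $f_0\in\mathcal C^2$ this right-hand side has magnitude $O(\epsilon/\eta)$ on a short interval; so although the induced modification $\frac{\eta}{\epsilon}f_1''$ of $\varphi'$ is of order one, $f_1''$ itself — and hence, after integrating twice, $f_1'$ and $f_1$ — all have $\mathcal C^0$-norm $O(\epsilon/\eta)\to 0$. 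Thus $f_1$ can be made as small in $\mathcal C^2$ as desired by sending $\epsilon\downarrow 0$, exactly as the statement requires.

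Concretely I would first set aside the degenerate case $\inf_x|f_0'(x)|>0$: then $f_0'$ has a constant sign, and any small $f_1$ with $\|f_1'\|_\infty<\inf_x|f_0'|$ (e.g.\ $f_1(t)=\delta\sin 2\pi t$ with $\delta$ tiny) makes $\varphi$ translate every point by at least $\tfrac12\eta\inf_x|f_0'|$ in a fixed direction, so $\varphi$ has \emph{no} invariant probability measure at all and the conclusion is trivial for every $\epsilon$. Otherwise $\inf_x|f_0'|=0$; I pick $a$ with $|f_0'(a)|$ arbitrarily small (a critical point of $f_0$, when one exists), take $\ell<\epsilon$ so that $x\mapsto x/\epsilon\bmod 1$ is injective on $I$, and on the sub-arc $S:=\{x/\epsilon\bmod 1:x\in I\}\subset[0,1)$ define $f_1''$ as the pull-back of $x\mapsto\frac{\epsilon}{\eta}(1-\eta f_0''(x))$; on $[0,1)\setminus S$ I extend $f_1''$ smoothly with $\int_0^1 f_1''=0$ and the same $O(\epsilon/\eta)$ bound (possible once $|S|\le\tfrac12$, i.e.\ $\ell\le\epsilon/2$), mollified at the two junctions, and take $f_1',f_1$ to be the iterated antiderivatives with integration constants fixed by $1$-periodicity. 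By construction $\varphi\equiv b$ on $I$ for a single point $b$; using the remaining freedom (the choice of $a$/phase and the extension of $f_1''$ on the complementary arc, which controls $f_1'$ there) one can in addition arrange, by an intermediate-value argument around the fixed point of the unperturbed map $x\mapsto x-\eta f_0'(x)$, that $b$ is a fixed point of $\varphi$.

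Granting the \emph{absorption} property that every bounded orbit eventually visits $I$, the proof finishes cleanly. If $\mu$ were an absolutely continuous invariant probability measure, it would have no atoms, so $\mu(I)\le\mu(\varphi^{-1}\{b\})=\mu(\{b\})=0$, hence $\mu(\varphi^{-n}I)=\mu(I)=0$ for every $n$ by invariance; since $\mu$ is carried by the non-wandering set and unbounded orbits carry no $\mu$-mass (a wandering tower of disjoint translates has total mass $\le 1$; equivalently, in the nontrivial case one arranges a bounded absorbing window), the absorption property forces $1=\mu\bigl(\bigcup_{n\ge 0}\varphi^{-n}I\bigr)\le\sum_{n\ge 0}\mu(\varphi^{-n}I)=0$, a contradiction — so, together with the degenerate case, $\varphi$ has no nondegenerate invariant distribution.

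The main obstacle is precisely the absorption property. I would establish it by placing $I$ in a micro-well containing a macroscopic minimiser of $f_0$: outside a small core region the macroscopic gradient $\eta\nabla f_0$ dominates the $O(\epsilon)$ microscopic drift and pushes orbits monotonically toward that minimiser, so every bounded orbit reaches the core region; inside it one uses the same tuning of $f_1$ that flattens $I$ — chosen, in addition, so that $\varphi$ is monotone on the relevant micro-well (or so that the critical value $b$ returns to $I$ under further iteration) — to prevent orbits from avoiding $I$ forever, leaving only a Lebesgue-null exceptional set, just as in Thm~\ref{counter-example1}. Ruling out a positive-Lebesgue-measure invariant set disjoint from $I$ (equivalently, showing the non-wandering set off $I$ is null) is the technically hardest point, and is the only step that genuinely uses the freedom to \emph{choose} $f_1$ and $\epsilon$ rather than merely perturb — which is why, like Thm~\ref{counter-example1}, this theorem is an existence statement.
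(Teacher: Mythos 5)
Your proposal follows essentially the same construction as the paper's own proof: force $\varphi'\equiv 0$ on a short interval $I$ by setting $f_1''(x/\epsilon)\equiv\frac{\epsilon}{\eta}\bigl(1-\eta f_0''(x)\bigr)$ there with $|I|\ll\epsilon$, extend $f_1''$ $1$-periodically (which, as you correctly note, keeps $\|f_1^{(j)}\|_\infty=\mathcal O(\epsilon/\eta)$ for $j=0,1,2$), and tune the remaining freedom so that the collapse point $b=\varphi(I)$ is a fixed point of $\varphi$. The substantive difference is your candor about the absorption step. The paper's own proof simply declares that ``according to the property of Li-Yorke chaos, all the points will be finally mapped to $I$'' and then to $b$; but this is not an implication of Li-Yorke chaos — possessing a period-$3$ orbit and an uncountable scrambled set does not force every (or even Lebesgue-almost-every) orbit to visit a prescribed tiny interval. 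So the gap you flag as the hardest step is genuinely present in the paper's proof as well, and is not addressed there; closing it would require a concrete argument that the non-wandering set off $\bigcup_{n\ge 0}\varphi^{-n}(I)$ is Lebesgue-null (for example, via the classical theory of flat-topped/stunted unimodal maps), which neither the paper nor your sketch supplies. Your side treatment of the degenerate case $\inf_x|f_0'|>0$ (no invariant probability measure at all, since orbits escape to infinity) is consistent with the boxed theorem statement, though it departs from the informal description in Sec.\ 2.3 that orbits remain bounded — worth a remark if you keep it.
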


\begin{proof}
	Choose $f_1$ s.t. $\nabla^2 f_1(\frac{x}{\epsilon})\equiv\frac{\epsilon}{\eta}(1-\eta\nabla^2f_0(x))$ on a interval $[0,\delta]$ where $\delta\ll\epsilon$ and make $f_1$ and $f_1'$ arbitrarily small on $[0,\delta/\epsilon]$, and choose $f_1$ on $[\delta/\epsilon]$ to ensure continuity and smoothness. We can make $\epsilon\rightarrow0$ to make $f_1''$ small. Then choose a specific $\epsilon$ to make $\varphi(0)$ is a fix point. According to the property of Li-Yorke chaos, all the point will be finally mapped to $[0,\delta]$, then to $\varphi(0)$ and never move. So the nontrivial invariant distribution does not exist.
\end{proof}

\begin{remark}
The requirements for $\eta$ to be arbitrarily large in Theorem \ref{counter-example1} and $\epsilon$ to be arbitrarily small in Theorem \ref{counter-example2} ensure the system won't converge to a local minimum created by $f_1$, and from the construction of the counter-examples, we know the system is not the other trivial one, which means the system explodes because $\eta$ is too large. 
\end{remark}
\begin{remark}
	Here we give some intuition of Thm.\ref{counter-example1} and \ref{counter-example2}. Thm.\ref{LYThm} will show that in 1-dim case, if we have a period-3 orbit, then there exists a subset $S$ of the whole space $J$ satisfying: For every $x_1,x_2\in S$ with $x_1\neq x_2$, $\liminf_{n\rightarrow\infty}|\varphi^{(n)}(x_1)-\varphi^{(n)}(x_2)|=0$. So the intuition for proving Thm. \ref{counter-example1} and \ref{counter-example2} is to make $\varphi\equiv0$ on a small interval, then all the points that drop in this interval will be mapped to a single fixed point of $\varphi$.
\end{remark}

\subsubsection{Period Doubling}
\label{sec_PD}
When $\eta$ is small, each (local) minimizer of $f$ corresponds to a stable fixed point of $\varphi$, which is thus also a periodic orbit of $\varphi$ with period 1. As $\eta$ increases, this point remains as a fixed point but will become unstable. Instead, the previously stable periodic orbit bifurcates into a stable periodic orbit with period 2, and the period similarly keeps doubling as $\eta$ further increases. Eventually, the period becomes arbitrarily large before a finite value of $\eta$, as will be numerically illustrated in Sec.\ref{sec_experimentPeriodDoubling}. This phenomenon is known as period doubling, which is a common route to chaos (e.g., \cite{alligood1997chaos,ott2002chaos}); after the appearance of arbitrarily large period, the system enters $\eta$ regime that corresponds to chaotic dynamics.

We now explain how this relates to what we call global and local chaos, which are specific to our multiscale problem.

When $\eta \ll \epsilon$, we know GD converges to a local minimum of $f$ corresponding to one of the many potential wells of created by $f_{1,\epsilon}$. This is the non-interesting case. 

When $\eta$ approaches some order function of $\epsilon$ 
describing the width of microscopic potential wells of $f_{1,\epsilon}$ (for the periodic case, this is $\mathcal{O}(\epsilon)$), the orbit is still trapped in a single microscopic potential well, but it starts making jumps within the well. In fact, restricted to any potential well, $\varphi$ becomes a unimodal map (see e.g., \cite{strogatz2018nonlinear}) and its dynamics is known to eventually become chaotic as $\eta$ exceeds a critical value. This is where the period of a periodic orbit keeps on doubling and becomes arbitrarily large. The classical method for studying the invariant distribution of unimodal chaotic maps applies here (see e.g., \cite{cvitanovic2017universality}). This is the local chaos regime.

Even more interesting is the case when $\eta$ gets even larger, large enough for the orbit to jump out of a single potential well created by $f_{1,\epsilon}$ and navigate the landscape of $f_0$. This is what we call global chaos. For this, Thm.\ref{thm_sameLimitStats} and \ref{thm_geometricErgodicity} characterize the combined effect of chaos and global behavior of $f_0$.

\subsubsection{About Li-Yorke Chaos}

\begin{definition}[Li-Yorke chaos]
\label{def_LY}
		Let $J$ be an interval and let $F: J\rightarrow J$ be continuous. The dynamical system generated by $F$ exhibits Li-Yorke chaos if
		\begin{enumerate}
			\item For any $k=1,2,...$, there is a periodic point in $J$ having period $k$.
			\item There is an uncountable set $S\subset J$ containing no periodic points, that satisfies:\\
			(A) For every $p,q\in S$ with $p\neq q$, $\limsup_{n\rightarrow\infty}|F^n(p)-F^n(q)|>0$ and $\liminf_{n\rightarrow\infty}|F^n(p)-F^n(q)|=0$.
			
			(B) For every $p\in S$ and periodic point $q\in J$, $\limsup_{n\rightarrow\infty}|F^n(p)-F^n(q)|>0$.
		\end{enumerate}
	\end{definition}
	\begin{theorem}[period 3 implies chaos]
		\label{LYThm}
		If there exists $a\in J$ for which $b=F(a)$, $c=F^2(a)$, and $d=F^3(a)$ satisfy $d\le a<b<c\text{ or }d\ge a>b>c$, then $F$ induces Li-Yorke chaos.
\end{theorem}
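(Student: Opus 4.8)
\emph{Proof proposal.} The plan is to run the classical Li--Yorke argument, whose only analytic input is the Intermediate Value Theorem. By the conjugacy $x\mapsto -x$ (which replaces $F$ by $\tilde F(y):=-F(-y)$ and sends the hypothesis $d\ge a>b>c$ to $\tilde d\le\tilde a<\tilde b<\tilde c$), it suffices to treat the case $d\le a<b<c$. I would first record two covering lemmas. (L1) If $I=[\alpha,\beta]$ is a compact interval with $I\subseteq F(I)$, then $F$ has a fixed point in $I$: choose $u,v\in I$ with $F(u)=\alpha$ and $F(v)=\beta$, note $F(u)-u\le 0\le F(v)-v$, and apply the IVT to $F(x)-x$ on the subinterval of $I$ between $u$ and $v$. (L2) If $\{I_n\}_{n\ge0}$ are compact intervals with $I_{n+1}\subseteq F(I_n)$ for every $n$, then there is a point $x$ with $F^n(x)\in I_n$ for all $n$: using that a minimal closed subinterval of $I_n$ still covering $I_{n+1}$ is in fact mapped exactly onto $I_{n+1}$, build, for each $N$, a chain of nested pullback intervals and hence a nonempty closed set $A_N=\{x\in I_0:F^k(x)\in I_k,\ k\le N\}$; these are nested, so $\bigcap_N A_N\ne\emptyset$.

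Now set $K:=[b,c]$ and $L:=[a,b]$. From $F(a)=b$, $F(b)=c$ and $F(c)=d\le a$, the IVT gives the covering relations
\begin{equation*}
F(L)\supseteq K,\qquad F(K)\supseteq K\cup L .
\end{equation*}
For clause (1) of Def.~\ref{def_LY} (a periodic point of each period $k$): for $k=1$, (L1) applied to $K$ gives a fixed point; for $k\ge 2$, feed the periodic itinerary $L,\underbrace{K,\dots,K}_{k-1},L,\underbrace{K,\dots,K}_{k-1},L,\dots$ into (L2) and then (L1) (the latter on $F^k$ restricted to the depth-$k$ pullback interval inside $L$) to obtain $p$ with $F^k(p)=p$, $p\in L$, and $F^i(p)\in K$ for $1\le i\le k-1$. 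The period is exactly $k$: a proper period $m\mid k$ would force $F^m(p)=p\in K\cap L=\{b\}$, i.e. $p=b$; this is impossible because $F(b)=c\ne b$ settles $k=2$, and for $k\ge 3$ the itinerary demands $F^2(p)\in K$ while $F^2(b)=d<b\notin K$.

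The substantial part is clause (2): the uncountable scrambled set $S$. Here I would follow Li--Yorke's nested-interval construction. Encode orbits by itineraries over the alphabet $\{K,L\}$ that are \emph{admissible}, meaning they forbid two consecutive $L$'s (the only constraint permitted by the covering relations above); realize each admissible itinerary by a point via (L2); and then extract an uncountable family $\mathcal U$ of admissible, non-eventually-periodic itineraries such that (a) any two distinct members disagree at infinitely many positions, and (b) any two members agree on arbitrarily long blocks occurring arbitrarily far out. A disagreement at a position places one orbit in $L$ while the other is forced into a fixed subinterval of $K$ at positive distance from $L$ (obtained by passing, at such times, to slightly shrunk intervals while keeping the covering relations intact), which yields $\limsup_n|F^n(p)-F^n(q)|>0$ for all $p\ne q$ in $S$; part (B) of clause (2) follows similarly, since every periodic orbit meets $K$ and one arranges $p$'s orbit to land, infinitely often, at definite distance from the finite orbit of any prescribed periodic point. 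The long common blocks, together with the fact that the diameters of the corresponding deep pullback cylinders can be driven to $0$ along a subsequence, give $\liminf_n|F^n(p)-F^n(q)|=0$. Non-eventual-periodicity of the itineraries makes $S$ contain no periodic point, and the cardinality of $\mathcal U$ makes $S$ uncountable.

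The hard part is exactly this last step, and within it the $\liminf=0$ claim: one must produce, for every pair of orbits in $S$, a \emph{common} sequence of times at which the two orbits are genuinely close — not merely in the same symbol of $\{K,L\}$, whose two pieces overlap at $b$. Making this quantitative forces a careful choice of the nested pullback intervals (so that the relevant depth-$\ell$ cylinders shrink to points along a subsequence as $\ell\to\infty$) together with a matching combinatorial design of the uncountable itinerary family. By contrast, clause (1) and the $\limsup>0$ half of clause (2) are comparatively routine once the covering relations $F(L)\supseteq K$ and $F(K)\supseteq K\cup L$ are in hand.
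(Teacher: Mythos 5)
First, a point of reference: the paper does not prove Theorem~\ref{LYThm} at all. It is quoted verbatim as the classical Li--Yorke/Sharkovskii result, with the proof deferred to \cite{li1975period} and \cite{sharkovskiui1995coexistence} (see the remark immediately following the theorem). So the comparison here is between your reconstruction and the classical argument. Your skeleton is the right one: the covering relations $F(L)\supseteq K$ and $F(K)\supseteq K\cup L$ for $K=[b,c]$, $L=[a,b]$ follow correctly from $F(a)=b$, $F(b)=c$, $F(c)=d\le a$ and the IVT; lemmas (L1)--(L2) are standard and correctly argued; and your treatment of clause (1), including the check that the constructed $k$-periodic point has exact period $k$ (ruling out $p=b$ via $F(b)=c$ and $F^2(b)=d<b$), is complete and correct.

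The genuine gap is in clause (2), specifically the mechanism you propose for $\liminf_n|F^n(p)-F^n(q)|=0$. You want to "drive the diameters of the deep pullback cylinders to $0$ along a subsequence" by a careful choice of nested intervals. For a general continuous $F$ this cannot be arranged: the set of points realizing a fixed infinite itinerary is a nested intersection of compact intervals that may well be a nondegenerate interval (e.g.\ if $F$ has flat pieces, or more generally wandering subintervals), and no choice of pullbacks fixes this — there is simply no contraction hypothesis anywhere in the theorem. The original Li--Yorke proof avoids this entirely: it gets $\liminf=0$ not from shrinking cylinders but from a counting/density argument on the positions of the $L$-symbols (placed at perfect-square times, with densities indexed by $r\in(3/4,1)$), which forces both orbits, at infinitely many common times, into sets that provably accumulate on a single distinguished point. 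Your $\limsup>0$ step has a milder version of the same issue (since $K\cap L=\{b\}$, being in different symbols does not by itself separate the orbits), though the "shrunk interval" fix you sketch is closer to repairable there. As written, then, clause (1) is proved but clause (2) is a plan whose central quantitative step rests on a claim that is false for general continuous interval maps; to close it you would need to import the actual Li--Yorke density construction rather than a cylinder-contraction argument.
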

\begin{remark}
About Thm.\ref{LYThm}, see \cite{sharkovskiui1995coexistence, li1975period} for rigorous theorems and proofs. This is one of the most celebrated result in chaotic dynamics, which tells us that period 3 implies chaos. The 1st conclusion is named after Sharkovskii. The 2nd conlusion in this theorem is also generalized to be the definition of Li-Yorke Chaos in multi-dim case.
\end{remark}
\begin{proof}[Proof of Thm.\ref{thm_condition4LiYorke}]
	\begin{figure}[H]
		
		\centering
		\includegraphics[width=0.8\linewidth]{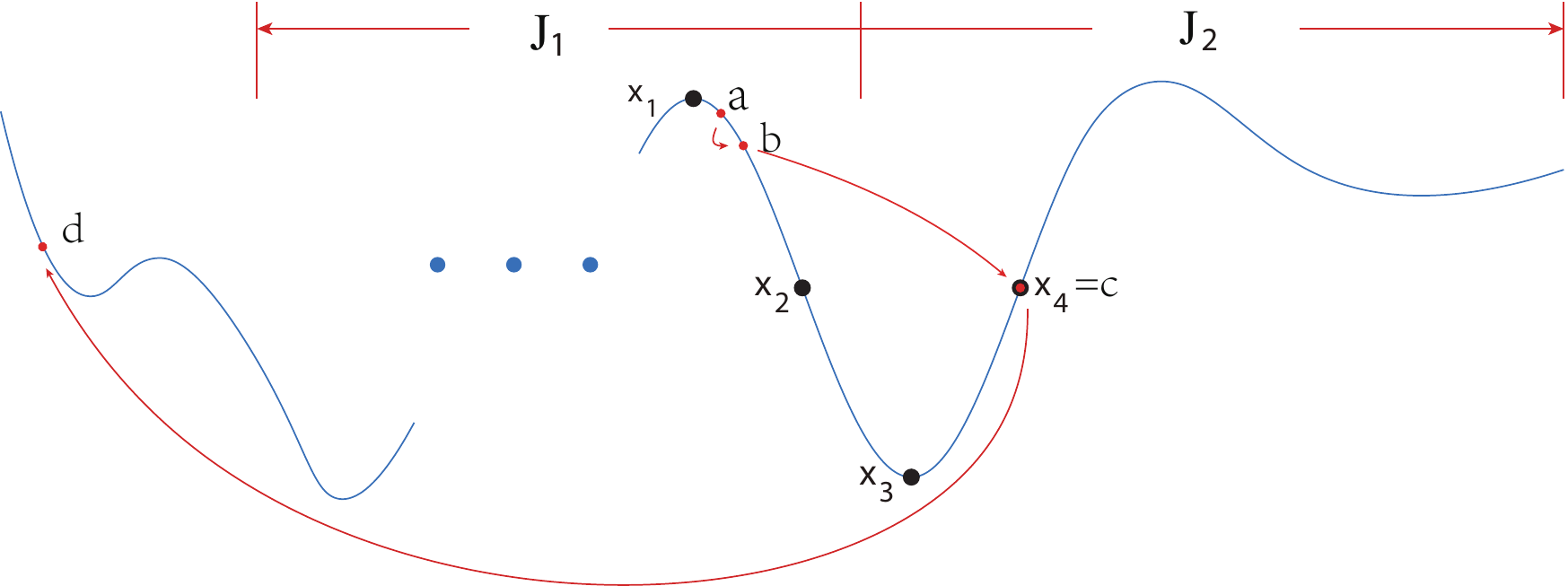}
		\caption{Guideline to finding a period-3 orbit}
		\label{fig_period3}
	\end{figure}
	First we show there exists an interval $J$, such that when $0<\eta<1/L$, $\varphi(J)\subset J$. WLOG, suppose $f_0(0)=0$. According to Cond. \ref{cond_1}, there exists $\epsilon_1$, when $\epsilon<\epsilon_1$, $\sup_x\|\nabla f_{1,\epsilon}(x)\|$ is uniformly bounded w.r.t. $\epsilon$. Denote the upper bound as $R$. Due to the $L$-smoothness of $f_0$, 
	\begin{align*}
	    \limsup_{x\rightarrow+\infty}[\varphi(x)-x]\le&\limsup_{x\rightarrow+\infty}[-\eta f_0'(x)+\eta R]<-C<0\\
	    \liminf_{x\rightarrow+\infty}[\varphi(x)+x]\ge&\liminf_{x\rightarrow+\infty}[2x-\eta f_0'(x)+\eta R]\ge\liminf_{x\rightarrow+\infty}[(2-\eta L)x+\eta R]>C>0
	\end{align*}
	where $C>0$ is a constant. So there exists $M_1$ such that $-x<\varphi(x)<x$ when $x>M_1$. Similarly, we have $M_2$ such that $x<\varphi(x)<-x$ when $x<-M_2$.
	
	So there exists $M:=\max(M_1,M_2)$, so when $|x|>M$,  $-|x|<\varphi(x)<|x|$. Set $J:=[\inf_{x\in[-M,M]}\varphi(x),\sup_{x\in[-M,M]}\varphi(x)]$ and we have $\varphi(J)\subset J$ when $\epsilon<\epsilon_1$.
	
	Next, we try to find $a$, $b$, $c$ and $d$ in Thm. \ref{LYThm}. Because $P(\zeta=0)<1$, $\exists\delta_0>0$ s.t.$P(\zeta>\delta_0)>0$ and $P(\zeta<-\delta_0)>0$. Since $\nabla f_0$ have a zero point, we can find an interval $\tilde{J}$ on which $|\nabla f_0|<\delta_0/3$. Denote the middle point of $x_0$. Find a subinterval of $\tilde{J}$, whose length $\le\eta/\frac{\delta_0}{3}$ and denote as $J$. Divide $J$ into 2 parts of similar length $J_1$ and $J_2$. $\exists \epsilon_1$, s.t. when $\epsilon<\epsilon_1$, $|\min_{J_i}\nabla f_{1,\epsilon}|,|\max_{J_i}\nabla f_{1,\epsilon}|>\frac{2}{3}\delta_0,i=1,2$. So now we have that $|\inf_{J_i}\nabla f|,|\sup_{J_i}\nabla f|>\delta_0/3$. Which means we can find $x_1,x_2\in J_1$, $x_3,x_4\in J_2$ and $x_1<x_2<x_3<x_4$ satisfying $\varphi(x_1)=x_1$, $\varphi(x_2)>x_4$, $\varphi(x_3)=x_3$, $\varphi(x_4)<x_1$.
	
	Let $c=x_4$, and $d=\varphi(c)$. So we have $\varphi(x_2)>c$. And since $\varphi(x_1)=x_1$ and continuity, $b\in[x_1,x_2]$ s.t.$\varphi(b)=c$. By the same way we get $a\in[x_1,b]$ s.t. $\varphi(a)=b$. Let $\epsilon_0:=\min(\epsilon_1,\epsilon_2)$. Based on Thm.\ref{LYThm}, we deduct that the discrete dynamical system induced by $\varphi$ is chaotic in Li-Yorke sense when $\epsilon<\epsilon_0$ and $0<\eta<1/L$.
\end{proof}

\begin{remark}[Beyond Li-Yorke Chaos]
  (\emph{Thanks to valuable comments from Fryderyk Falniowski.}) Here the 3-periodic orbit of $\varphi$ can be used to establish a positive topological entropy \citep{misiurewicz2010horseshoes}, which implies not only Li-Yorke chaos but also distributional chaos, as well as the existence of a subsystem chaotic in the sense of Devaney \citep{li1993} (see e.g., \cite{aulbach2001three,falniowski2015two} for their differences). So far these are only known in 1D though.
\end{remark}

\subsubsection{On the Lyapunov exponent}
\begin{proof}[Proof of Thm.\ref{thm_LyapExp}]
	All the norms for matrix in this proof is 2-norm (for simplicity, we omit its subscript).
	
	Denoted by $\nu$ the invariant distribution of the deterministic map. Denote the special map where is $f_0\equiv 0$ as $\varphi_0$:
	$$\varphi_0(x)=x-\eta\nabla f_{1,\epsilon}(x)$$
	With ergodicity, when $\epsilon\rightarrow0$, we have
	\begin{align*}
		\lambda(x)&=\lim_{n\rightarrow\infty}\frac{1}{n}\sum_{i=1}^{n}\ln||\nabla\varphi|_{\varphi^{(i)}(x)}||\\
		&=\int\ln||\nabla\varphi|_x||\,\nu(dx)\\
		&=\int\ln||\nabla\varphi_0|_x+\eta\text{Hess} f_0(x)||\,\nu(dx)\\
	\end{align*}
	Since  $\text{Hess}f_0$ is bounded, we know that
	\begin{align*}
		\lambda(x)&=\int\ln||\nabla\varphi_0|_x||\,\nu(dx)+\mathcal{O}(\eta)\\
	\end{align*}
	And then, we choose a bounded set $T$ and a mesh of which, denoted as $\Delta=\bigsqcup_{i\in\mathcal{I}}\Gamma_i$, $\forall\delta>0$, we have $\mu$ is a simple function which is constant on each $Gamma_i$, where $\text{supp}\mu\subset T$, $\int|\mu-\nu|\,dx<\delta$. Denoted the bound of $\epsilon \nabla^2f_{1,\epsilon}=A$, then
	\begin{align*}
		\lambda(x)&=\sum_{i\in\mathcal{I}}\int_{\Gamma_i}\ln||\nabla\varphi_0|_x||\,\nu(dx)+\mathcal{O}(\eta)\\
		&=\sum_{i\in\mathcal{I}}\int_{\Gamma_i}\ln||\nabla\varphi_0|_x||\,(\mu+(\nu-\mu))dx+\mathcal{O}(\eta)\\
		&=\ln\left(\frac{\eta}{\epsilon}\right)+\sum_{i\in\mathcal{I}}\int_{\Gamma_i}\ln||\epsilon\nabla^2 f_1(y)||\,(\mu+(\nu-\mu))dx+\mathcal{O}(\eta)
	\end{align*}
	where $\sum_{i\in\mathcal{I}}\int_{\Gamma_i}\ln||\nabla\varphi_0|_x||\mu(dx)\rightarrow m$ and $\sum_{i\in\mathcal{I}}\int_{\Gamma_i}\ln||\nabla\varphi_0|_x||(\nu-\mu)(dx)<\delta A\rightarrow0$. So we know that $\lambda(x)-\ln\left(\frac{\eta}{\epsilon}\right)\rightarrow m$ when $\epsilon\rightarrow0$ first and then $\eta\rightarrow0$.
\end{proof}
\begin{remark}
	Here we need $\varphi$ to be ergodic, which means the distribution of a single trajectory converges to the invariant distribution of the chaotic dynamical system. We don't have a reference, but please see section \ref{NT_Ergodicity} for numerical test.
\end{remark}
\begin{remark} 
    One may ask why $f_0$ doesn't appear in $m$. The reason is, the microstructure creates both local and global chaos, not the macrostructure; in fact, since $L \ll 1/\epsilon$, $L$ for the $L$-smooth $f_0$ gets absorbed in the high-order term in the proof.
\end{remark}
\begin{remark}
When $f_1$ is periodic and $f_{1,\epsilon}=\epsilon f_1(x/\epsilon)$, we have an estimation of the order of convergence. 

We divide the support of the invariant distribution into small parts according to the period of $\epsilon f_1(x/\epsilon)$, and enumerate them with $A_j,j\in \mathbb{N}$.
\begin{align*}
	\lambda(x)&=\sum_{i}\int_{A_j}\ln||\nabla\varphi|x||\,\nu(dx)+\mathcal{O}(\eta)\\
	&=\sum_{i}\int_{A_j}\frac{1}{\epsilon|\Gamma|}\left(\int_{\epsilon\Gamma}\ln||\nabla^2 f_{1,\epsilon}(y)||dy+\mathcal{O}(\epsilon)\right)\,\nu(dx)+\mathcal{O}(\eta)\\
	&=\ln\left(\frac{\eta}{\epsilon}\right)+\frac{1}{|\Gamma|}\int_{\Gamma}\ln||\nabla^2 f_1(y)||\,dy+\mathcal{O}(\epsilon+\eta) \\
	&=\ln\left(\frac{\eta}{\epsilon}\right) + m + \mathcal{O}(\epsilon+\eta).
\end{align*}
\end{remark}

\section{A possible origin of multiscale landscape from neural networks}
\label{sec_MultiscaleLandscape}
It is possible that the (training) loss of a neural network satisfies the multiscale requirement of the presented theory. Here is an illustration in which multiscale training data together with periodic activation leads to a multiscale loss:

Consider the training of a 2-layer neural network to fit data $\{x^k,y^k\}_k$, where the output $y^k=y_0^k+y_1^k+\xi^k$ admits a decomposition into large scale behavior $y_0^k=g_0(x^k)$, microscopic detail $y_1^k=\epsilon g_1(\epsilon x^k)$, and i.i.d. noise $\xi_k$. Assume $g_0$ and $g_1$ are regular enough so that universal approximation (UA) works and they can be approximated by wide enough neural networks with $\mathcal{O}(1)$ weights. Consider MSE loss $\sum_k \|y^k-\sum_i a_i \sigma(W_i x^k+b_i)\|^2$ with $\sigma$ being the periodic activation in a recent progress \citep{sitzmann2020implicit}. Then the loss admits a minimizer and in its neighborhood the loss satisfies Cond.\ref{cond_1}\&\ref{cond_2} for the following reason: omit $k$ without loss of generality, absorb bias into weight, and rewrite the loss as (denoting $\theta=[a_i,W_i]_i$)
\begin{align*}
    & f(\theta)=\Big\| y_0 - \textstyle\sum_{i\in I} a_i \sigma(W_i x) + \epsilon y_1 - \textstyle\sum_{j \not\in I} a_j \sigma(W_j x) \Big\|^2
= \Big\|g_0(x) - \textstyle\sum_{i\in I} a_i \sigma(W_i x)\Big\|^2 \\
& \qquad +2\epsilon \Big\langle g_0(x) - \textstyle\sum_{i\in I} a_i \sigma(W_i x), g_1(\epsilon x) - \textstyle\sum_{j \not\in I} a_j \sigma(W_j x) \Big\rangle + \epsilon^2 \Big\|g_1(\epsilon x) - \textstyle\sum_{j \not\in I} a_j \sigma(W_j x)\Big\|^2
\end{align*}
where $I$ and $I^c$ are sets of nodes, each large enough for UA to ensure vanishing loss. Renormalize by letting $\hat{x}=\epsilon x$ so that UA works for $g_1(\cdot)$, then the 2nd term rewrites as
\[
2\epsilon \Big\langle g_0(x) - \textstyle\sum_{i\in I} a_i \sigma(W_i x), g_1(\hat{x}) - \textstyle\sum_{j \not\in I} a_j \sigma\Big(\frac{W_j}{\epsilon} \hat{x}\Big) \Big\rangle.
\]
This is in the form of $\epsilon \hat{f}_1(\theta/\epsilon,\theta)$ for some $\hat{f}_1(\phi,\varphi)$ that is quasiperiodic in $\phi$ (quasiperiodic because $\hat{x}$ is multi-dim). The 3rd term rewrites similarly. Thus, we see $f(\theta) = f_0(\theta) + f_{1,\epsilon}(\theta)$ where $f_0$ is the 1st term and $f_{1,\epsilon}(\theta)=\epsilon \hat{f}_1 (\theta/\epsilon,\theta) + \epsilon^2 \hat{f}_2 (\theta/\epsilon,\theta)$ for some $\hat{f}_1,\hat{f}_2$ quasiperiodic in the 1st argument. Such $f_{1,\epsilon}$ satisfies Cond.\ref{cond_1}\&\ref{cond_2} due to its quasiperiodic micro-scale. \hfill$\square$\vspace{4pt}

\section{More numerical evidence}
\label{sec_moreNumerics}
\subsection{Period doubling}
\begin{wrapfigure}{r}{0.50\textwidth}
    \vspace{-20pt}
	\centering
	\includegraphics[width=0.55\textwidth]{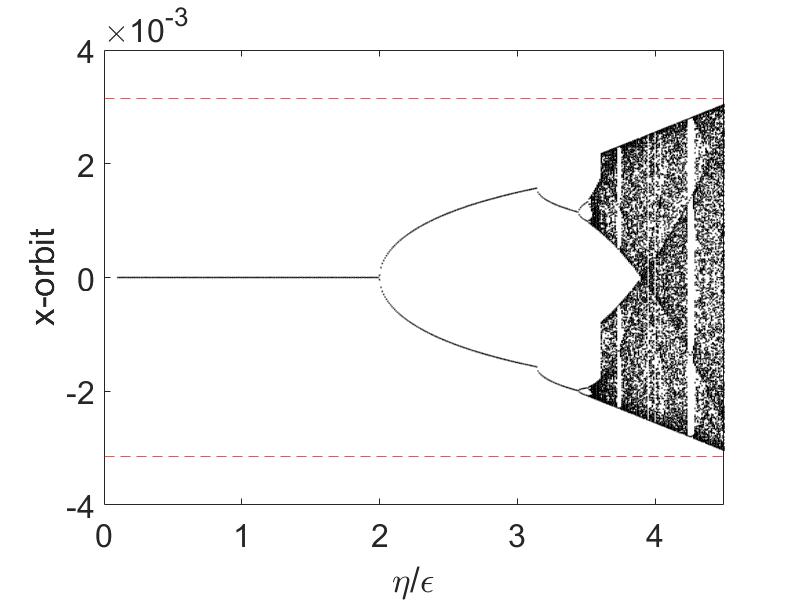}
	\caption{Bifurcation diagram of GD with $\epsilon=10^{-3}$, $f_0=x^4/4$ and $f_{1,\epsilon}=-\epsilon\cos(x/\epsilon)$.}
\label{sec_experimentPeriodDoubling}
\end{wrapfigure}
We illustrate numerically that $\varphi$, when viewed as a family of maps indexed by LR $\eta$, keeps undergoing period doubling bifurcation as $\eta$ increases, and the period of $\eta$ eventually approaches infinite at a finite $\eta$ value, which is the chaos threshold (e.g., \cite{alligood1997chaos}, Chap 11). This observation is rather robust to $f_0$, and we choose a convex but not strongly-convex example for an illustration.

The bifurcation diagram is plotted in Fig.\ref{sec_experimentPeriodDoubling}. For each $\eta$ value, we start with a fixed initial condition and iterate it using GD dynamics ($\varphi$) for sufficiently long so that the dynamics settle into an attractor, and then draw each of the thereafter iterations as a point on the diagram. For example, one can read from Fig.\ref{sec_experimentPeriodDoubling} that there are two points at $\eta=2.5\epsilon$, corresponding to an orbit of period 2. Although limited by the numerical resolution, one can see that the chaos threshold in this case is around $\eta \approx 3.5\epsilon$.

Worth mentioning is that the chaos that first onsets is a local one, happening in a (and every) small potential well created by $f_{1,\epsilon}$. In other words, before global chaos for which LR is so large that GD can escape local well, arbitrarily large period already appears and chaos already onsets. This can be seen from Fig.\ref{sec_experimentPeriodDoubling} as the boundaries of a small potential well, which is approximately $[-\epsilon\pi,\epsilon\pi]$, are marked by red dashed lines.

\subsection{A multi-dimensional demonstration}
\label{sec_Matyas}
Our sufficient condition for chaos (Thm.\ref{thm_condition4LiYorke}) is restricted to 1D problems, although our connection between $\varphi$ and $\hat{\varphi}$ limiting statistics (Sec.\ref{sec_SB}) and the approximation of $\hat{\varphi}$ limiting statistics (Sec.\ref{sec_rescaledGibbs}) work for any finite dimension. We conjecture that stochasticity also appears in large LR GD for multidimensional multiscale objective functions. A numerical experiment consistent with this conjecture is presented, based on a classical strongly convex test function of Matyas:

Let $f_0$ be defined as
$$f_0(x,y)=0.26(x^2+y^2)+0.48xy.$$
The small scale is arbitarily chosen to be
$$f_{1,\epsilon}(x,y)=\epsilon\sin(x/\epsilon)+\epsilon\cos(y/\epsilon),\epsilon=10^{-7}.$$

The evolution of the empirical distribution of an ensemble, respectively under GD $\varphi$ and the stochastic map $\hat{\varphi}$, is shown in Fig.\ref{fig_Matyas_StochVsChaotoc}, where good agreement is observed. The GD empirical distribution is also compared with rescaled Gibbs in Fig.\ref{fig_Matyas_InvDis}, where results again agree.

\begin{figure}[H]
	\centering
	\hfill
	\subfigure[Deterministic map]{\includegraphics[width=0.45\linewidth]{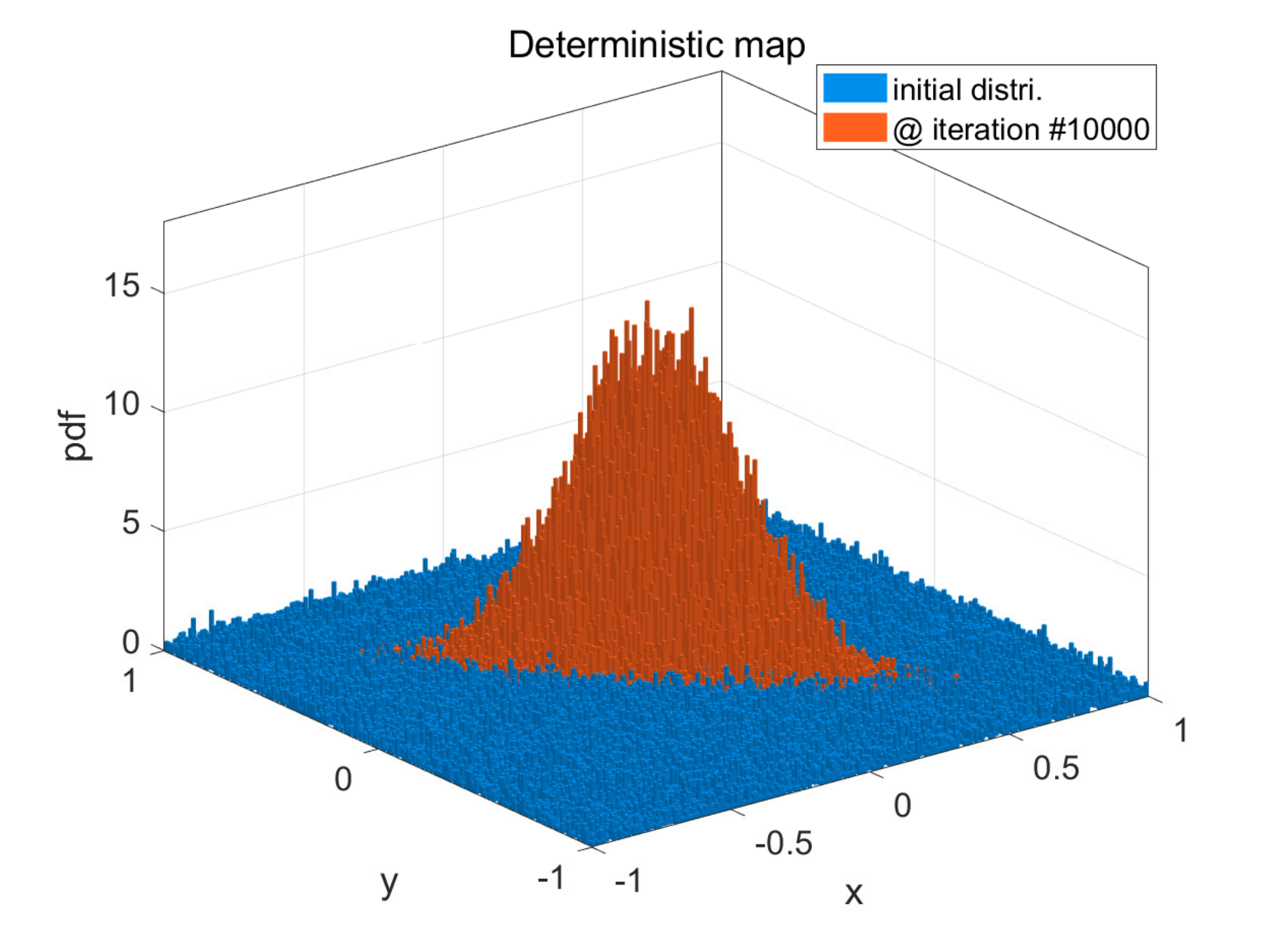}}
	\hfill
	\subfigure[Stochastic map\label{fig_Matyas_StochVsChaotoc_b}]{\includegraphics[width=0.45\linewidth]{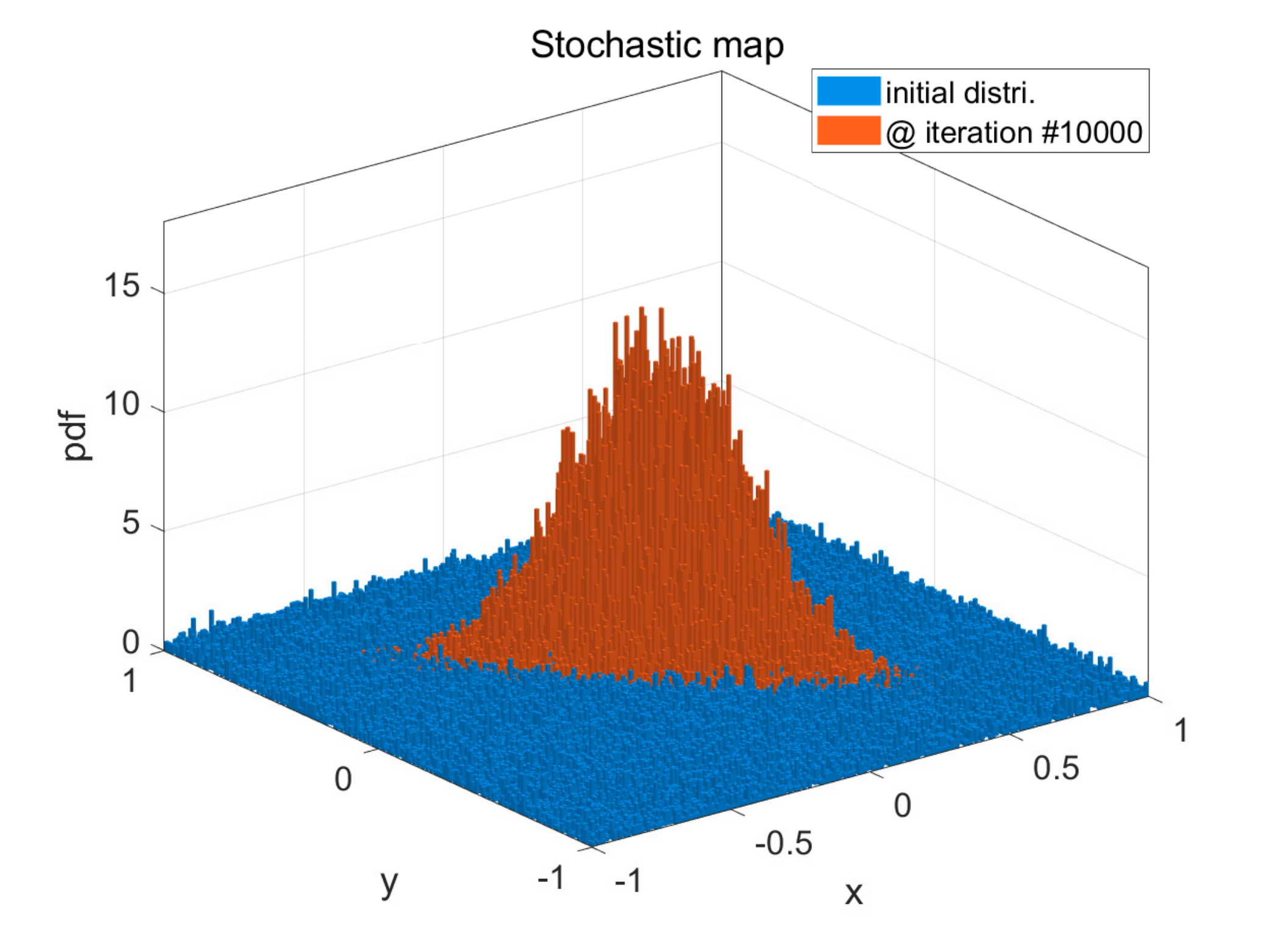}}
	\hfill
	\caption{Comparison between the deterministic map and the stochastic map on Matyas function ($\eta=0.01$) for  testing Thm.\ref{thm_sameLimitStats}. Agreed histograms suggests that the limiting distributions of the two maps are close.}
	\label{fig_Matyas_StochVsChaotoc}
\end{figure}

\begin{figure}[H]
	\centering
	\hfill
	\subfigure[$\eta=0.1$]{\includegraphics[width=0.3\linewidth]{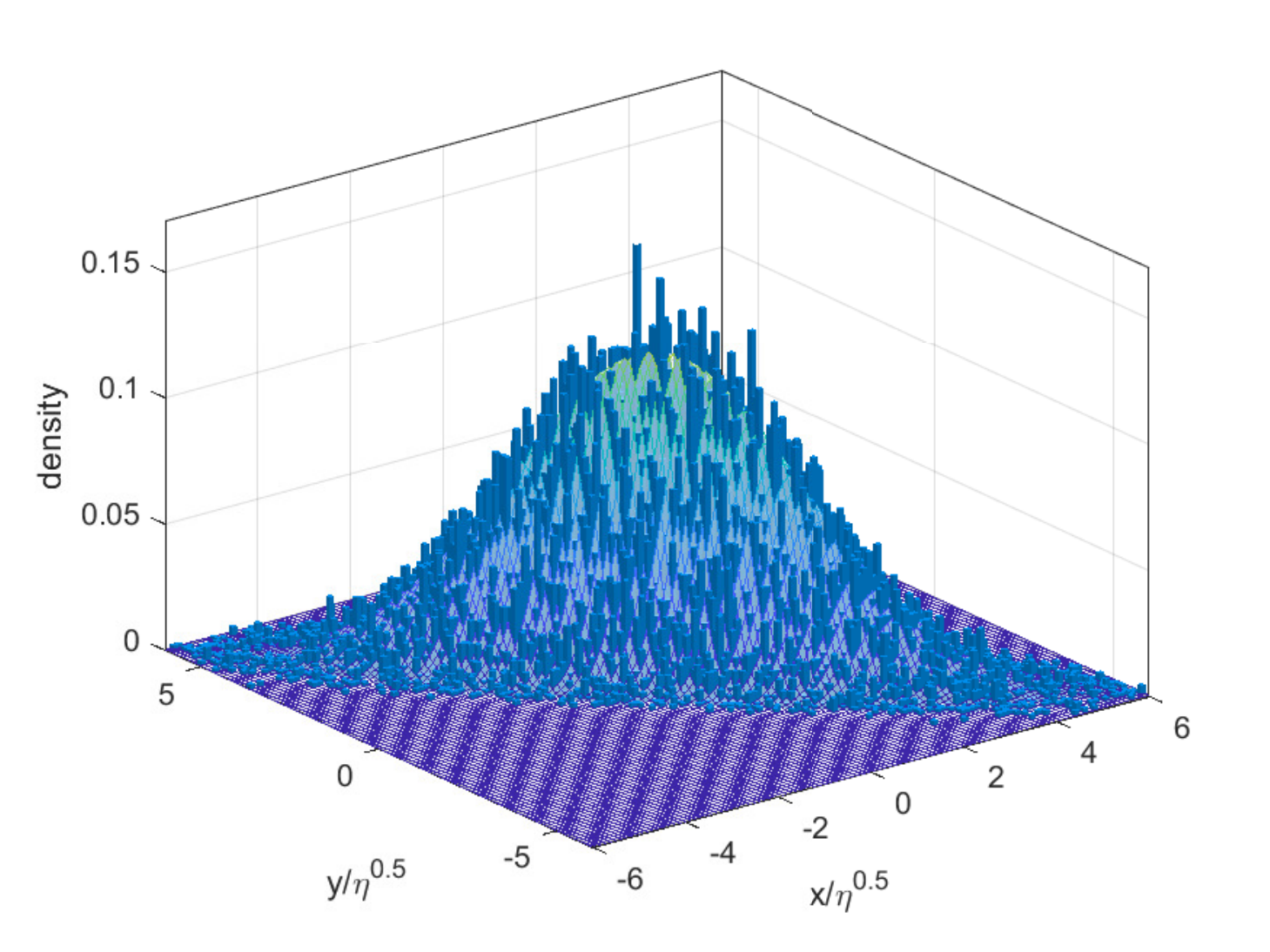}}
	\hfill
	\subfigure[$\eta=0.01$]{\includegraphics[width=0.3\linewidth]{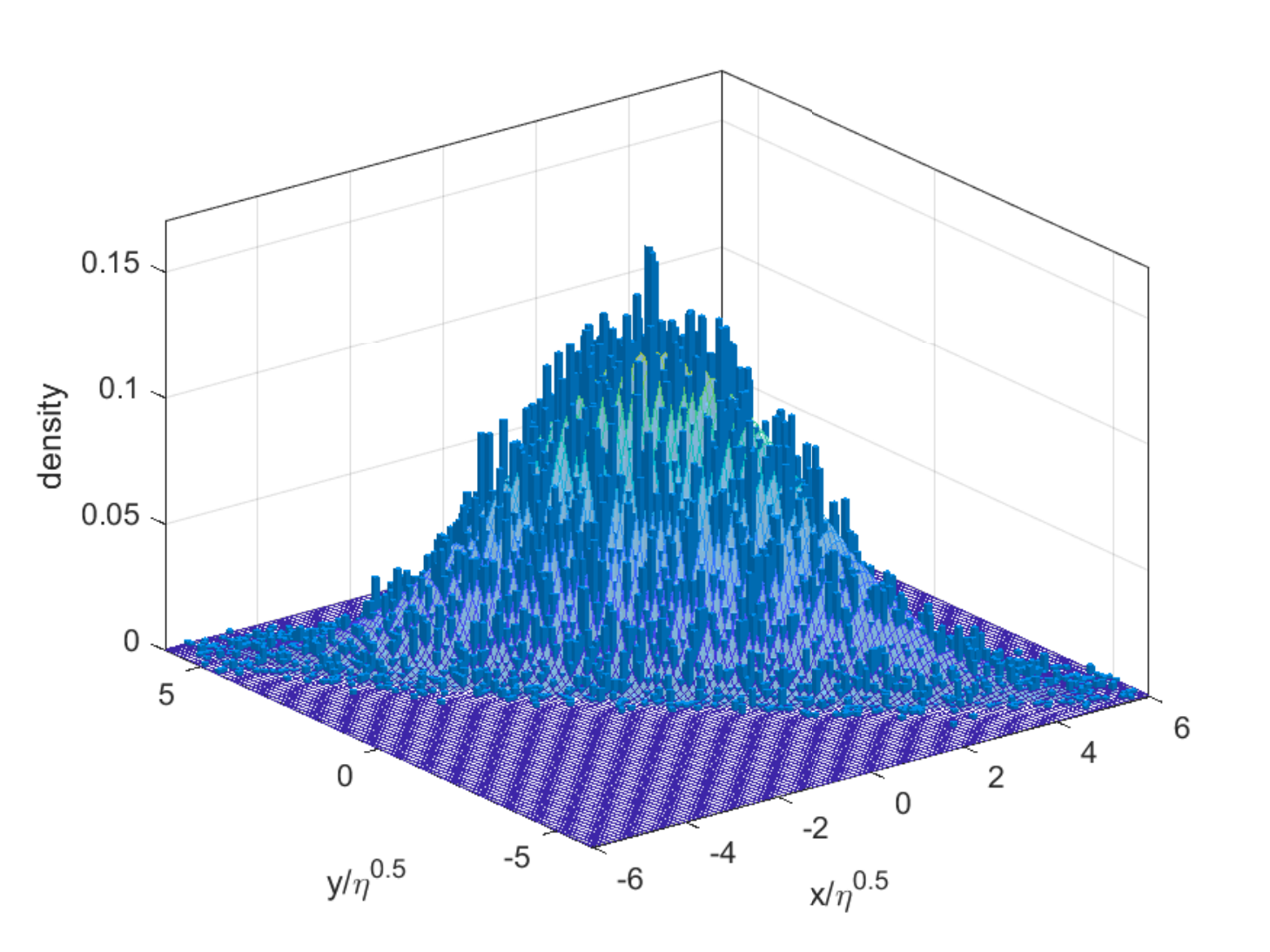}}
	\hfill
	\subfigure[$\eta=0.001$]{\includegraphics[width=0.3\linewidth]{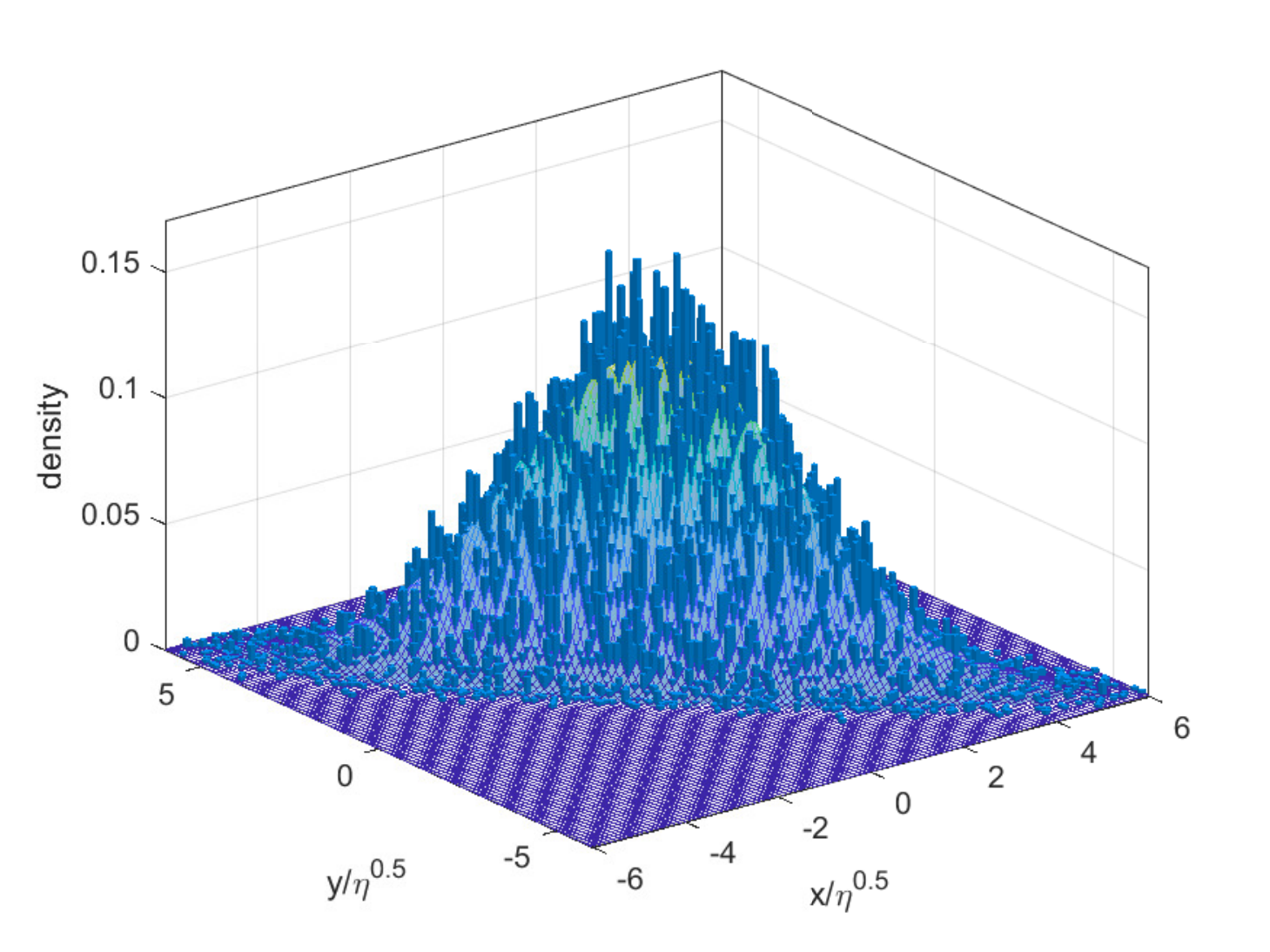}}
	\hfill
	\caption{Test for the explicit expression of the invariant distribution. The surface is rescaled Gibbs and the histogram is the experiment result. They are overplotted after a rescaling by $\sqrt\eta$ in both axis. Obersved agreement is consistent with the rescaled Gibbs approximation.}
	\label{fig_Matyas_InvDis}
\end{figure}
In terms of deterministic chaos, although our sufficient condition for chaos (Thm.\ref{thm_condition4LiYorke}) is only for 1-dim., the Lyapunov exponent estimate (Thm.\ref{thm_LyapExp}) works for any finite dimension as it assumes already ergodicity. Here we observe numerically that the deterministic map is chaotic and mixing (thus ergodic) despite of the $\geq 2$ dimension: see Fig.\ref{fig_Matyas1orbit} for the statistical behavior of a single orbit. A comparison with Fig.\ref{fig_Matyas_StochVsChaotoc} gives agreement in the statistics.
\begin{figure}[H]
	\centering
	\subfigure[Histogram of a trajectory]{\includegraphics[width=0.3\linewidth]{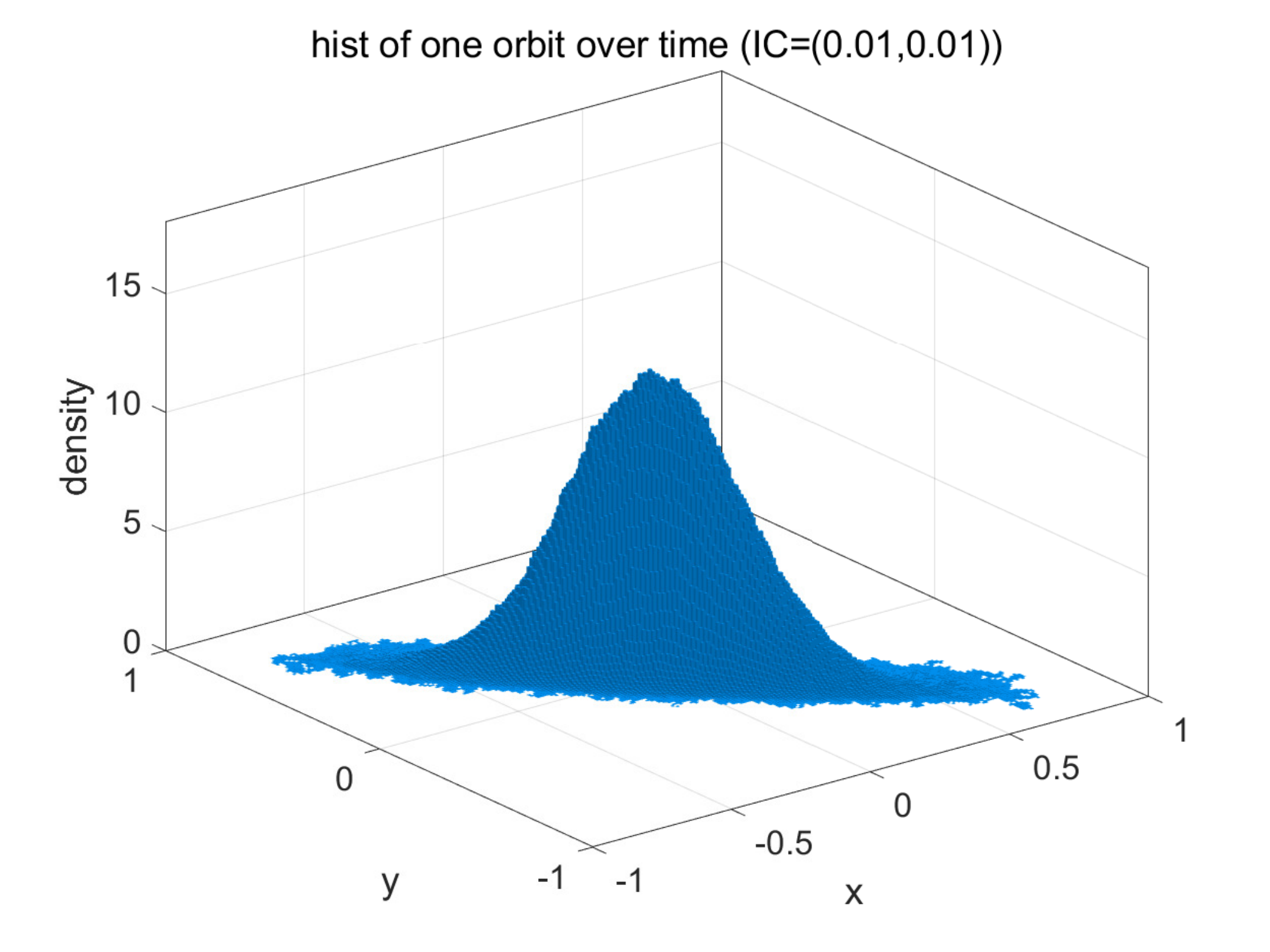}}
	\hfill
	\subfigure[x value of a trajectory]{\includegraphics[width=0.3\linewidth]{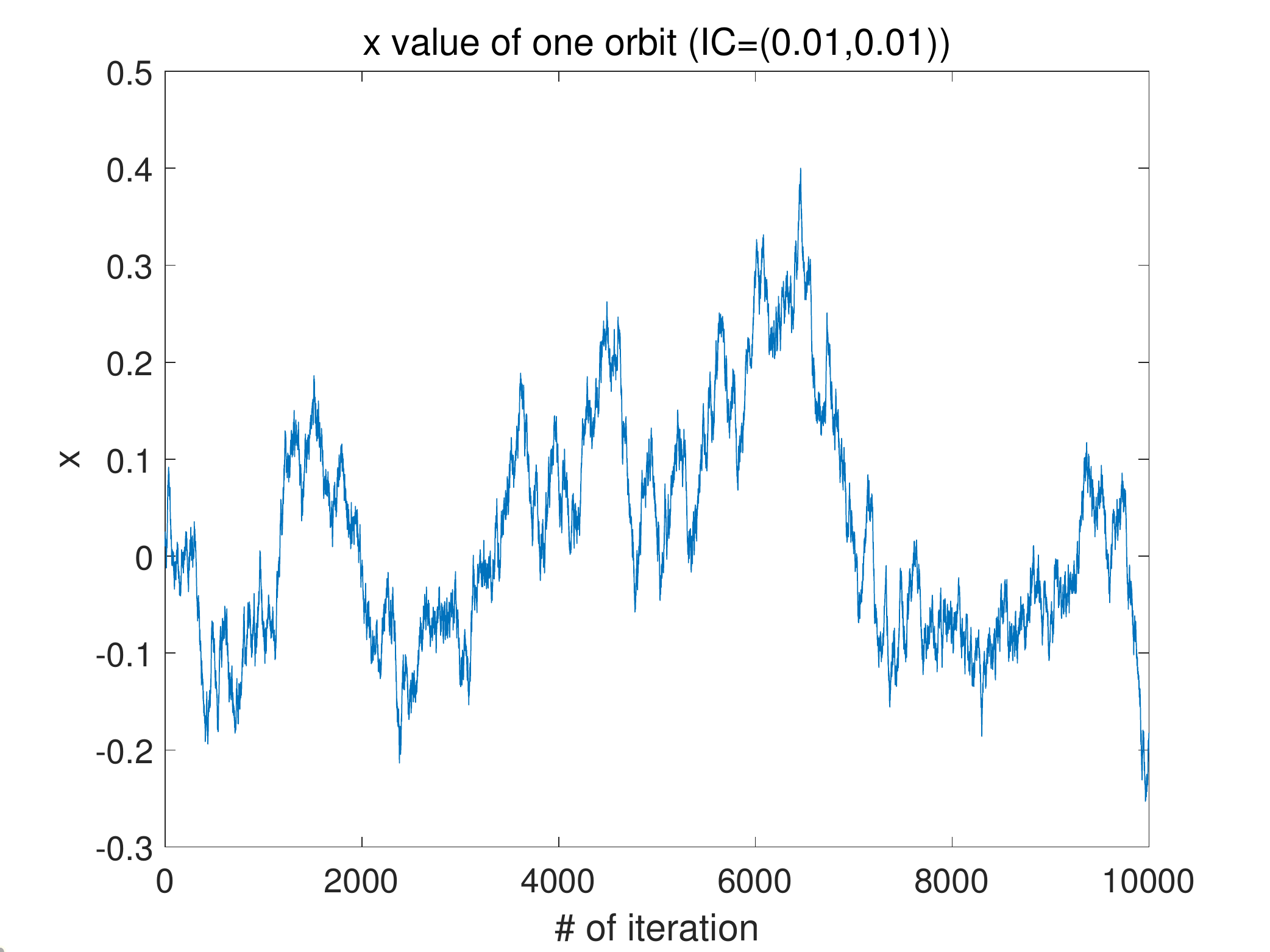}}
	\hfill
	\subfigure[y value of a trajectory]{\includegraphics[width=0.3\linewidth]{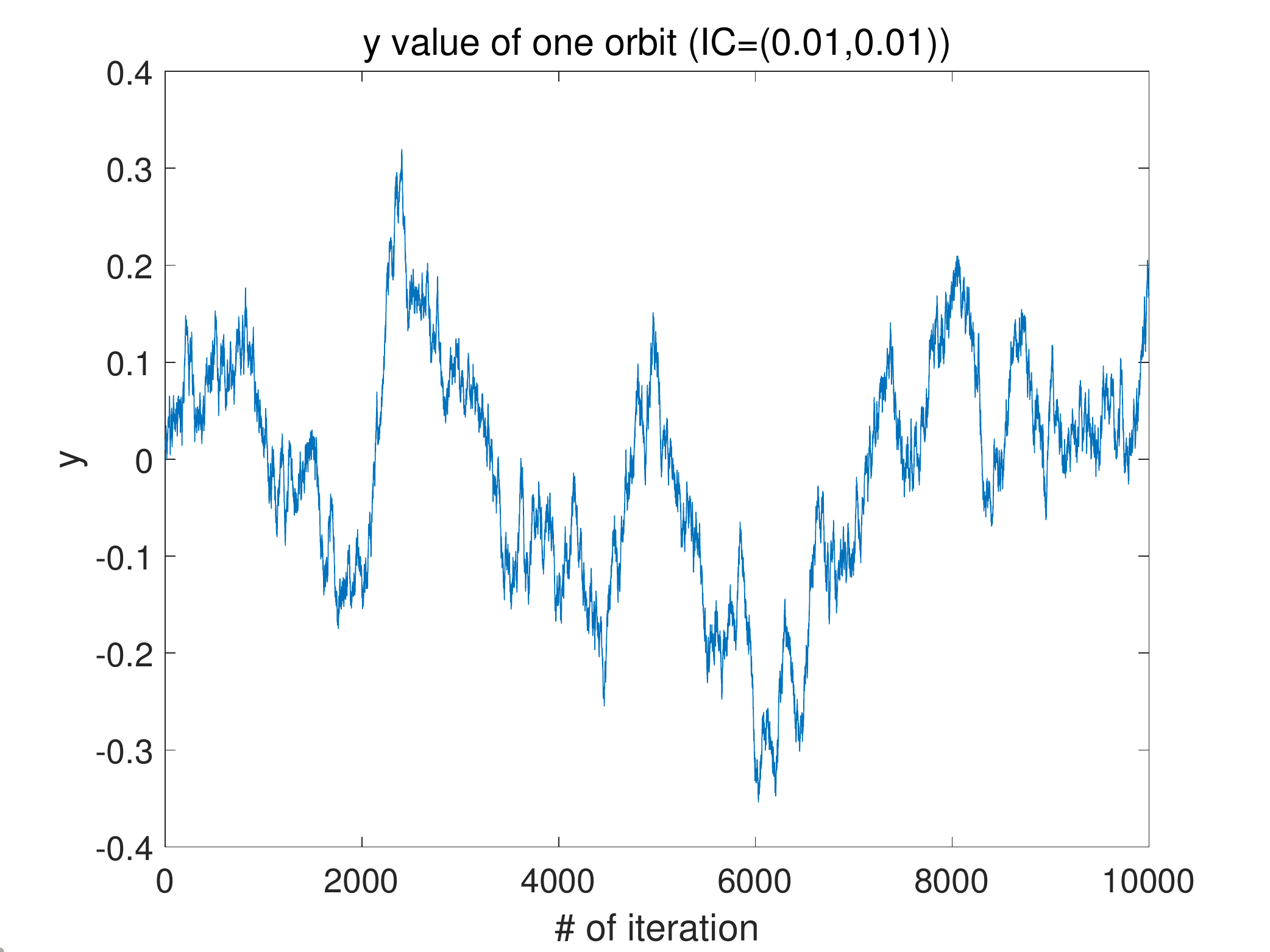}}
	\caption{The histogram of a single trajectory. We can see that it is the same as the experimental result for the invariant distribution in Fig.\ref{fig_Matyas_StochVsChaotoc_b}.}
	\label{fig_Matyas1orbit}
\end{figure}

\subsection{Lyapunov exponent}


Thm.\ref{thm_LyapExp} provides a quantitative estimate of the Lyapunov exponent of the deterministic GD map $\varphi$. Although we required an additional strong convexity condition on $f_0$ for the geometric ergodicity of the stochastic map $\hat{\varphi}$, this result about the deterministic map does not have this requirement.

\subsubsection{On 1-dim periodic $f_{1,\epsilon}$}
As an illustration, we pick multimodal nonconvex $f_0=(x^2-1)^2$, together with $f_{1,\epsilon}(x)=\epsilon\sin\left(\frac{x}{\epsilon}\right)$. Fig.'s \ref{fig_lyapExpVsEta} and \ref{fig_lyapExpVsEpsilon} respectively plot how the numerically computed Lyapunov exponent (computed by eq.\ref{LyapExpDef} with a random initial point) depends on $\eta$ (with fixed $\epsilon$) and on $\epsilon$ (with fixed $\eta$). The constant $m\approx \lambda(x)-\ln(\eta/\epsilon)$ is around 0.7 in both plots, which agrees with our theoretical estimate of $m=\frac{1}{2\pi}\int_0^{2\pi}\ln|\sin(y)|\,dy \approx-0.6931$.

\begin{figure}[H]
	\centering
	\hfill
	\subfigure[$\lambda(x)$ against $\eta$]{\includegraphics[width=0.45\linewidth]{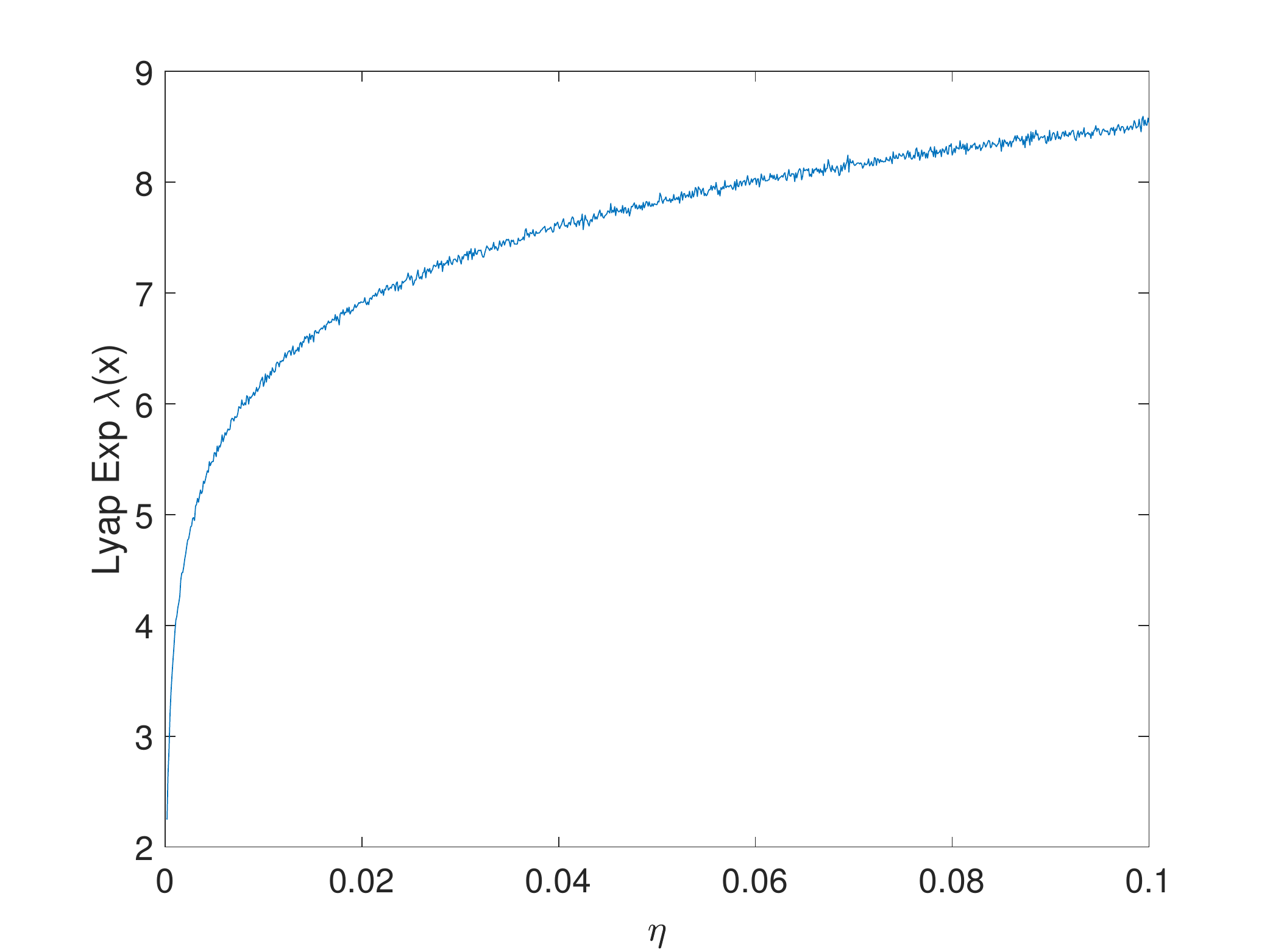}}
	\hfill
	\subfigure[$\lambda(x)-\ln(\eta/\epsilon)$ against $\eta$]{\includegraphics[width=0.45\linewidth]{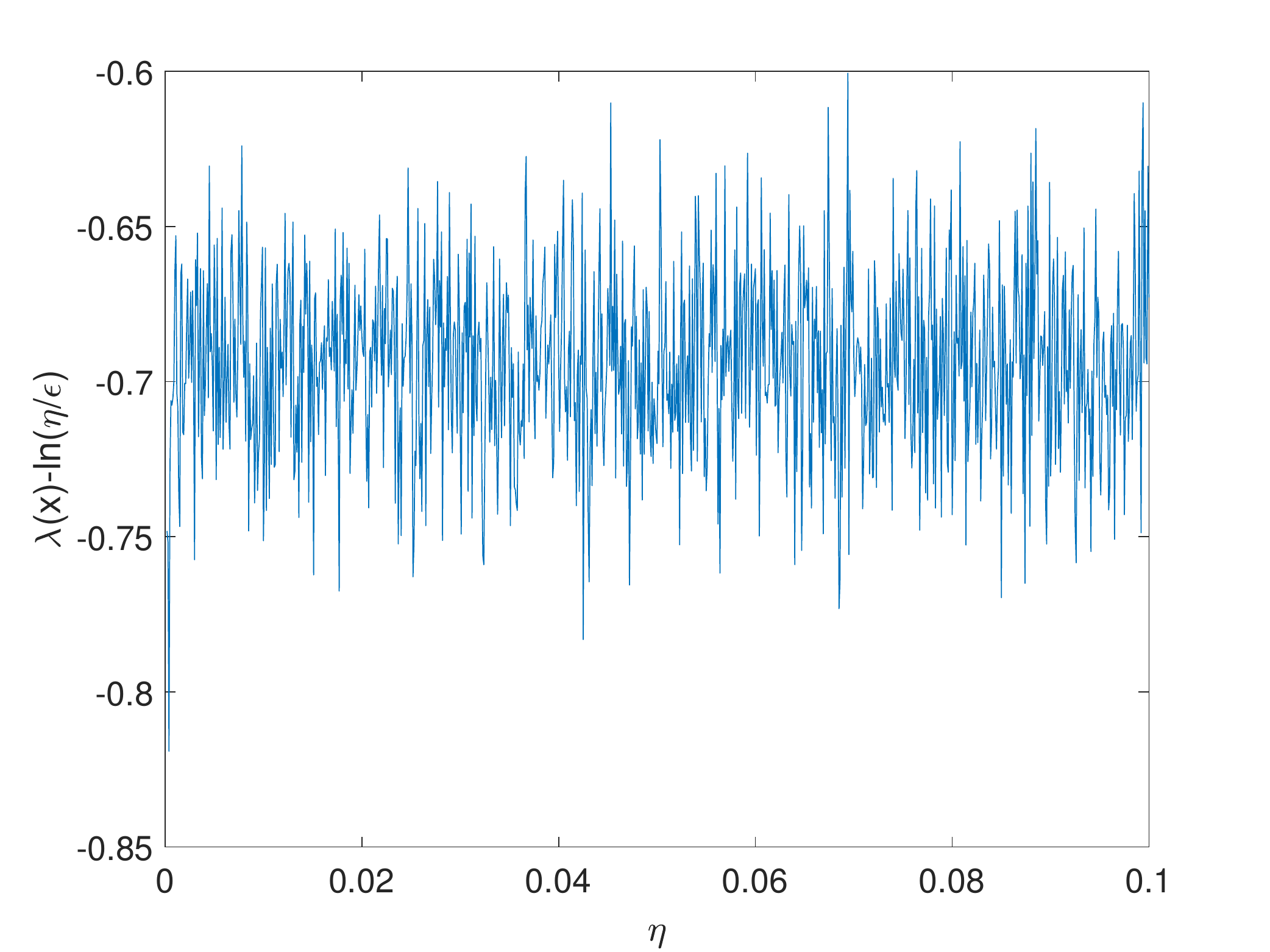}}
	\hfill
	\caption{Dependence of the Lyapunov exponent on $\eta$}
	\label{fig_lyapExpVsEta}
\end{figure}
\begin{figure}[H]
	\centering
	\hfill
	\subfigure[$\lambda(x)$ against $\epsilon$]{\includegraphics[width=0.45\linewidth]{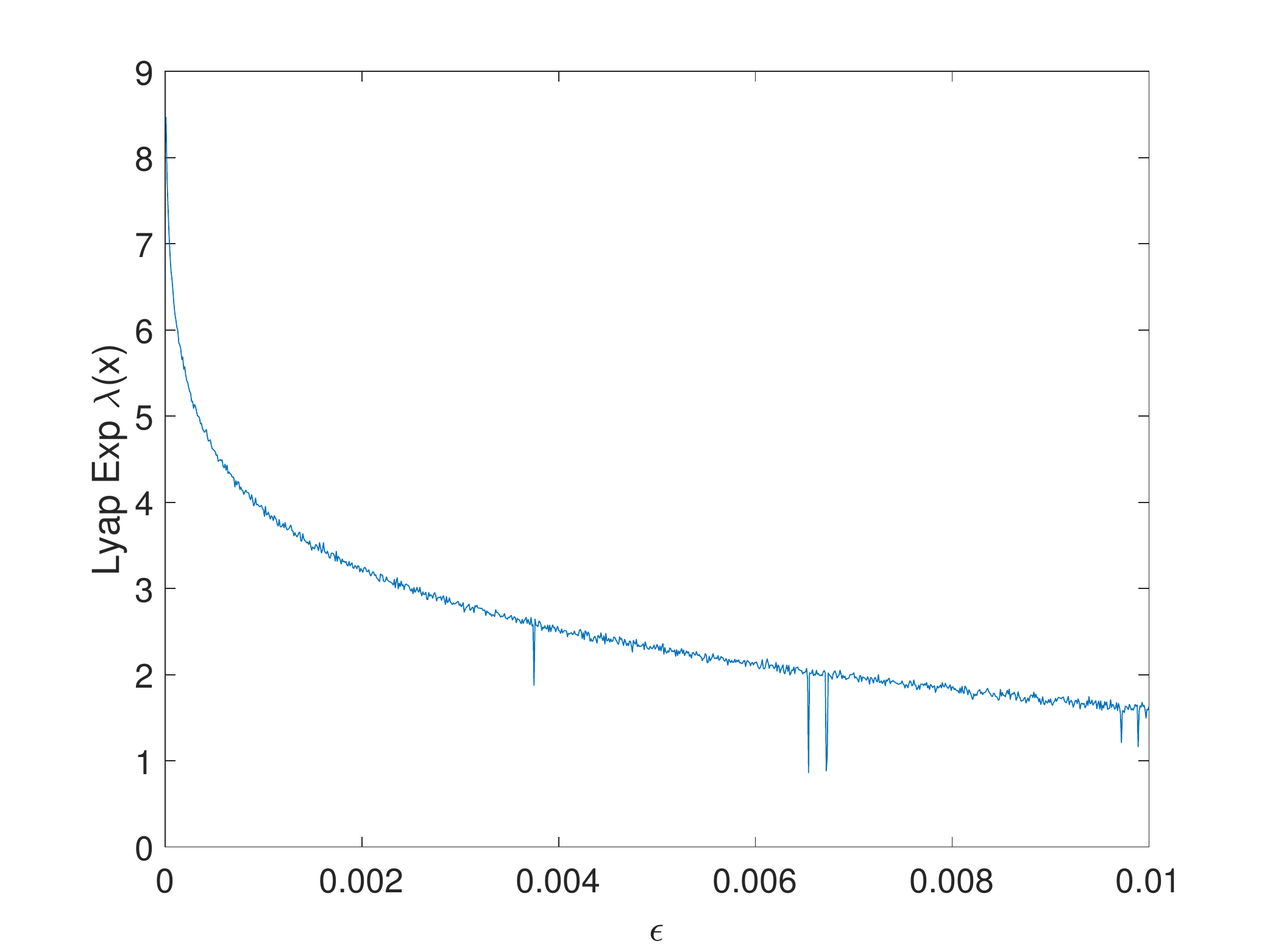}}
	\hfill
	\subfigure[$\lambda(x)-\ln(\eta/\epsilon)$ against $\epsilon$]{\includegraphics[width=0.45\linewidth]{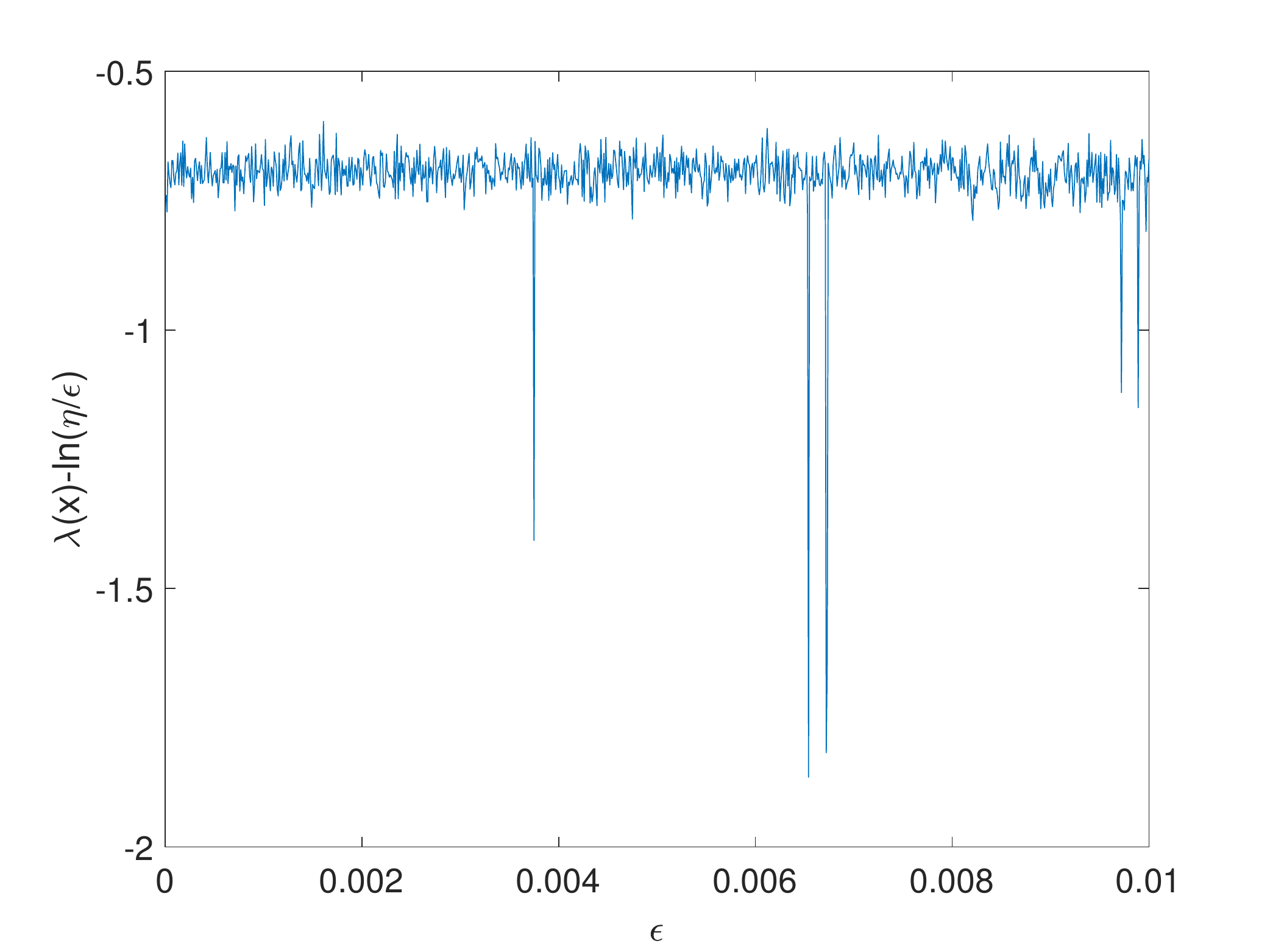}}
	\hfill
	\caption{Dependence of the Lyapunov exponent on $\epsilon$}
	\label{fig_lyapExpVsEpsilon}
\end{figure}

\subsubsection{On 1-dim non-periodic $f_{1,\epsilon}$}
The following experiment shows that Thm. \ref{thm_LyapExp} works for non-periodic $f_{1,\epsilon}$. Fig. \ref{fig_lyapExp_quasiperiodic} is the test on the quasiperiodic $f_{1,\epsilon}$ given in Fig.  \ref{fig_nonperiodic_generalization} and Example \ref{example_aperiodic_new}. The theoritical value for $m$ in Cond. 2 is $\lim_{n\rightarrow\infty}\int_0^n\ln|\sin(x)+2\sin(\sqrt{2}x)|\,dx\approx-0.0117$, is the same as the experiment shows. 
\begin{figure}[H]
	\centering
	\hfill
	\subfigure[$\lambda(x)-\ln(\eta/\epsilon)$ against $\eta$]{\includegraphics[width=0.45\linewidth]{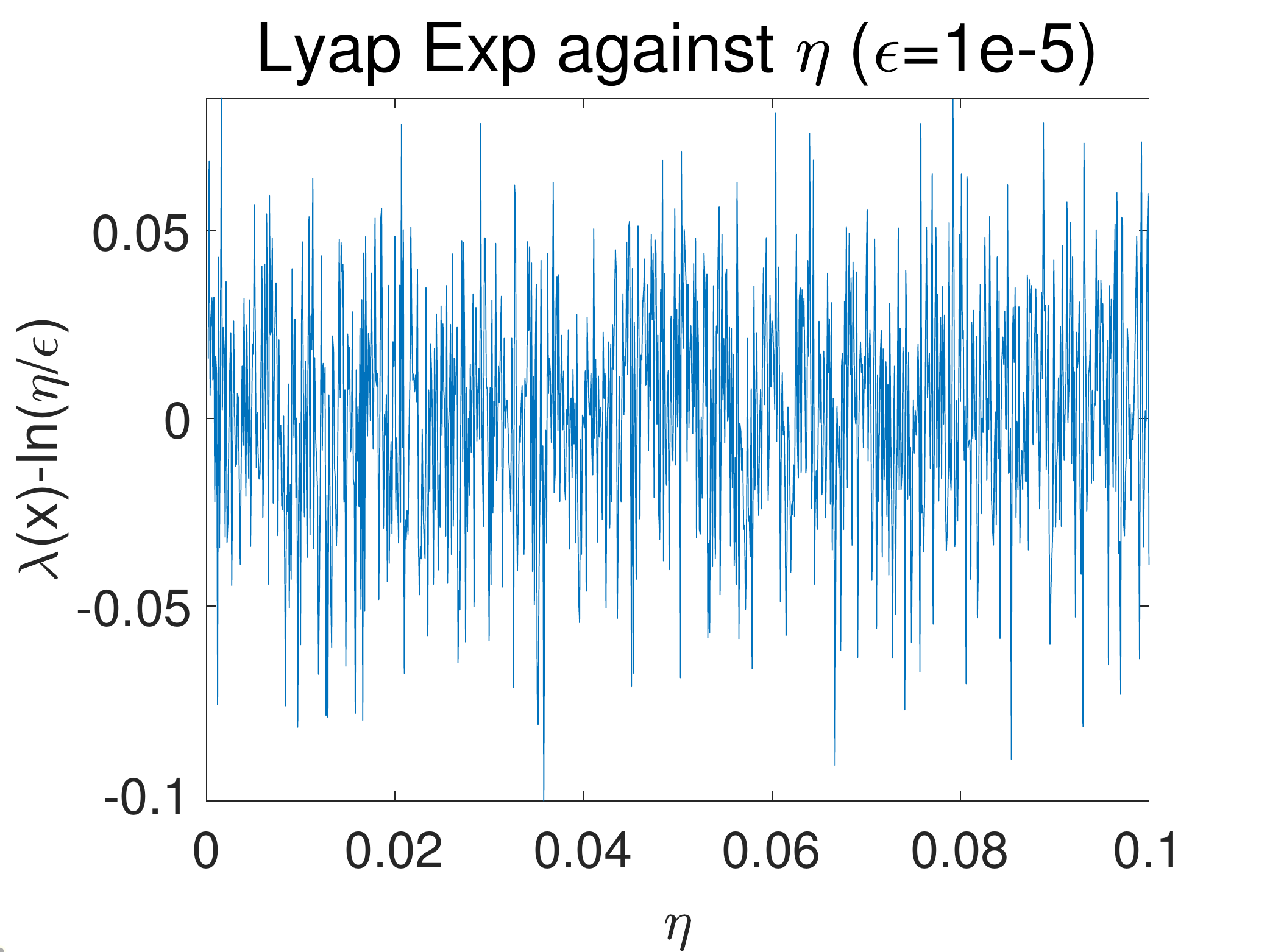}}
	\hfill
	\subfigure[$\lambda(x)-\ln(\eta/\epsilon)$ against $\epsilon$]{\includegraphics[width=0.45\linewidth]{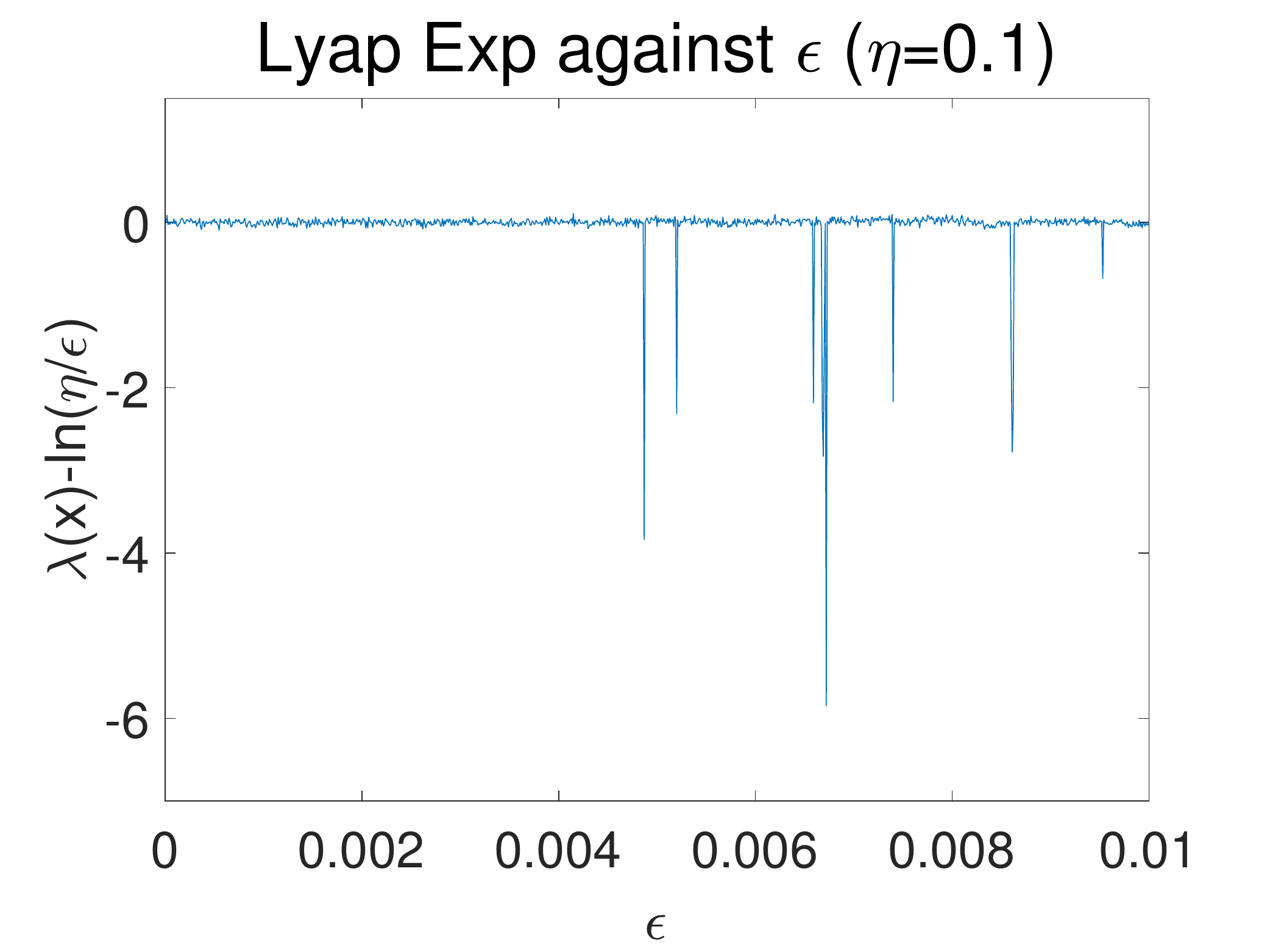}}
	\hfill
	\caption{Dependence of the Lyapunov exponent on $\epsilon$ and $\eta$ for non-periodic $f_{1,\epsilon} ($m=-0.0117$)$.}
	\label{fig_lyapExp_quasiperiodic}
\end{figure}

\subsubsection{On the multi-dim case}
Then we also test the theorem in a multi-dim case, whose $f_0$ is Matyas function and $f_{1,\epsilon}$ is periodic function, same as we did in Sec. \ref{sec_Matyas}. We chose a random initial point, run sufficiently many iterations, and use eq.\ref{LyapExpDef} to compute it. At the same time, Thm.\ref{thm_LyapExp} gives a theoretical estimation, with $m=\frac{1}{4\pi^2}\int_{[0,2\pi]^2}\ln\max(|\sin(x)|,|\cos(y)|)\,dx\,dy \approx -0.2669$. Fig.'s \ref{fig_LyapExp_Matyas_eta} and \ref{fig_LyapExp_Matyas_eps} show that this estimation, namely $\lambda(x)\approx m+\ln\left(\frac{\eta}{\epsilon}\right)$, is correct up to $\mathcal{O}(\epsilon+\eta)$ error.
\begin{figure}[H]
	\centering
	\hfill
	\subfigure[$\lambda(x)$ against $\eta$]{\includegraphics[width=0.45\linewidth]{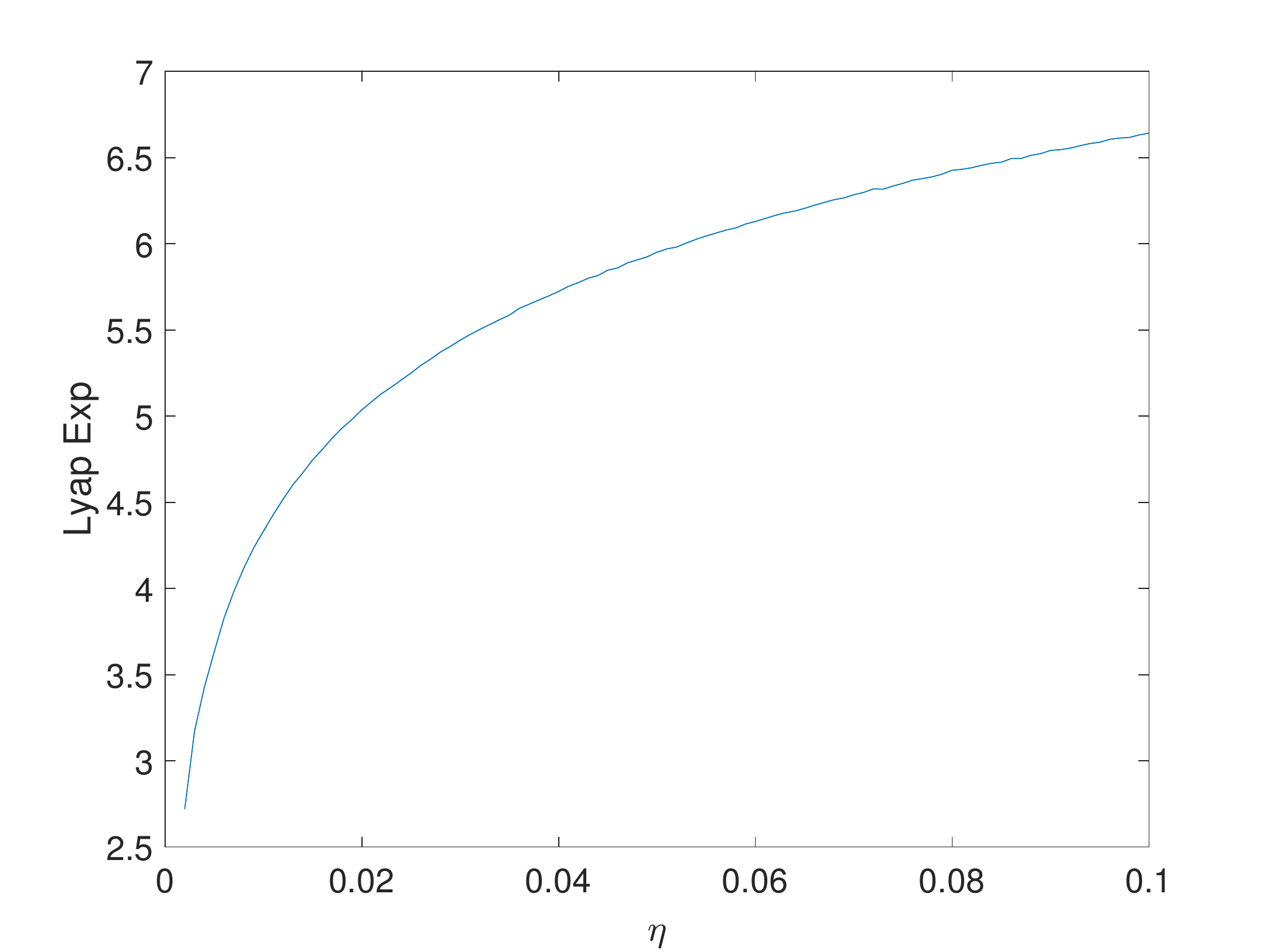}}
	\hfill
	\subfigure[$\lambda(x)-\ln(\eta/\epsilon)$ against $\eta$\label{fig_LyapExp_Matyas_eta_b}]{\includegraphics[width=0.45\linewidth]{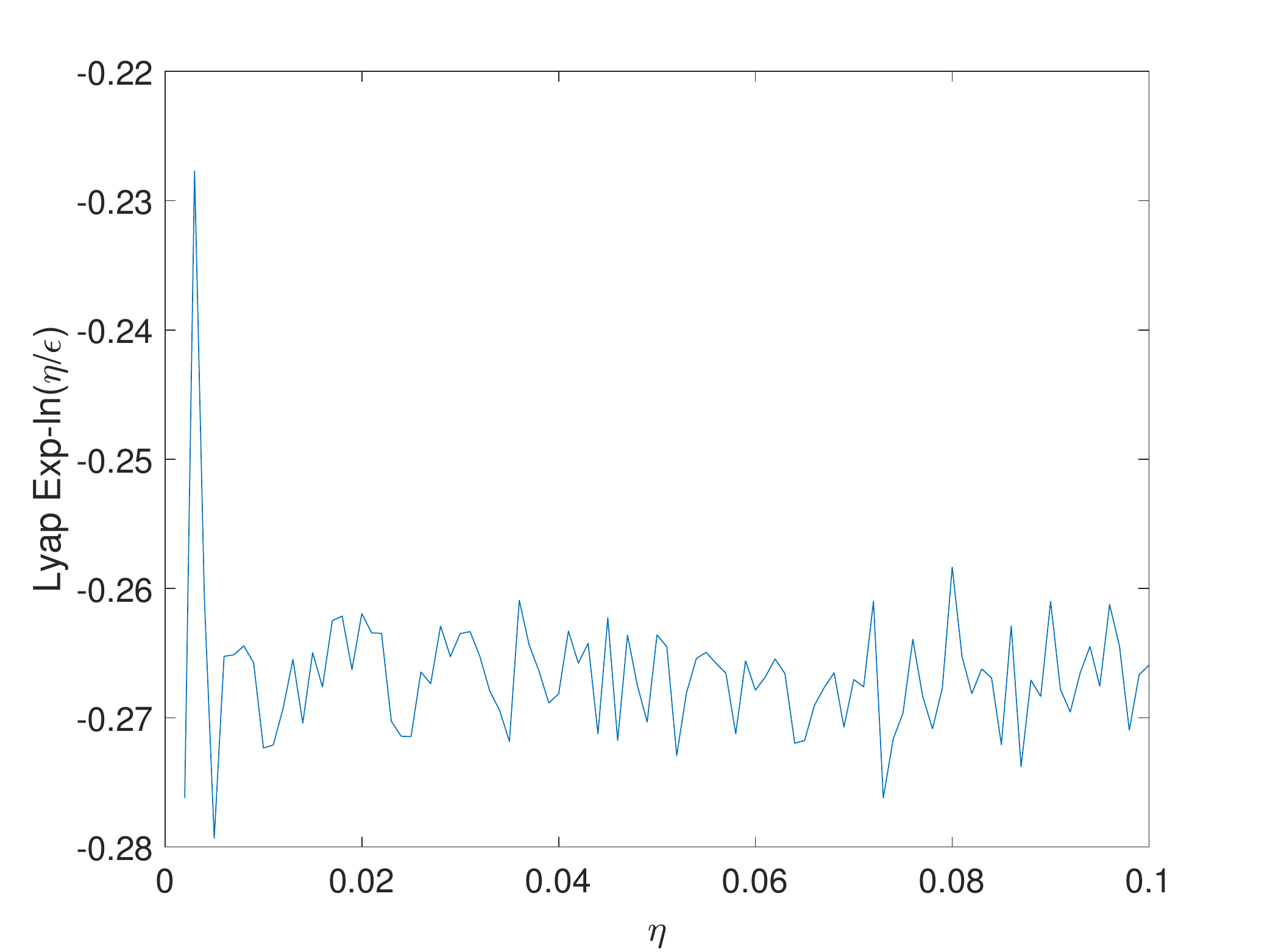}}
	\hfill
	\caption{Dependence of $\lambda(x)$ on $\eta$\label{fig_LyapExp_Matyas_eta} ($\epsilon=0.00001$)}
\end{figure}
\begin{figure}[H]
	\centering
	\hfill
	\subfigure[$\lambda(x)$ against $\epsilon$]{\includegraphics[width=0.45\linewidth]{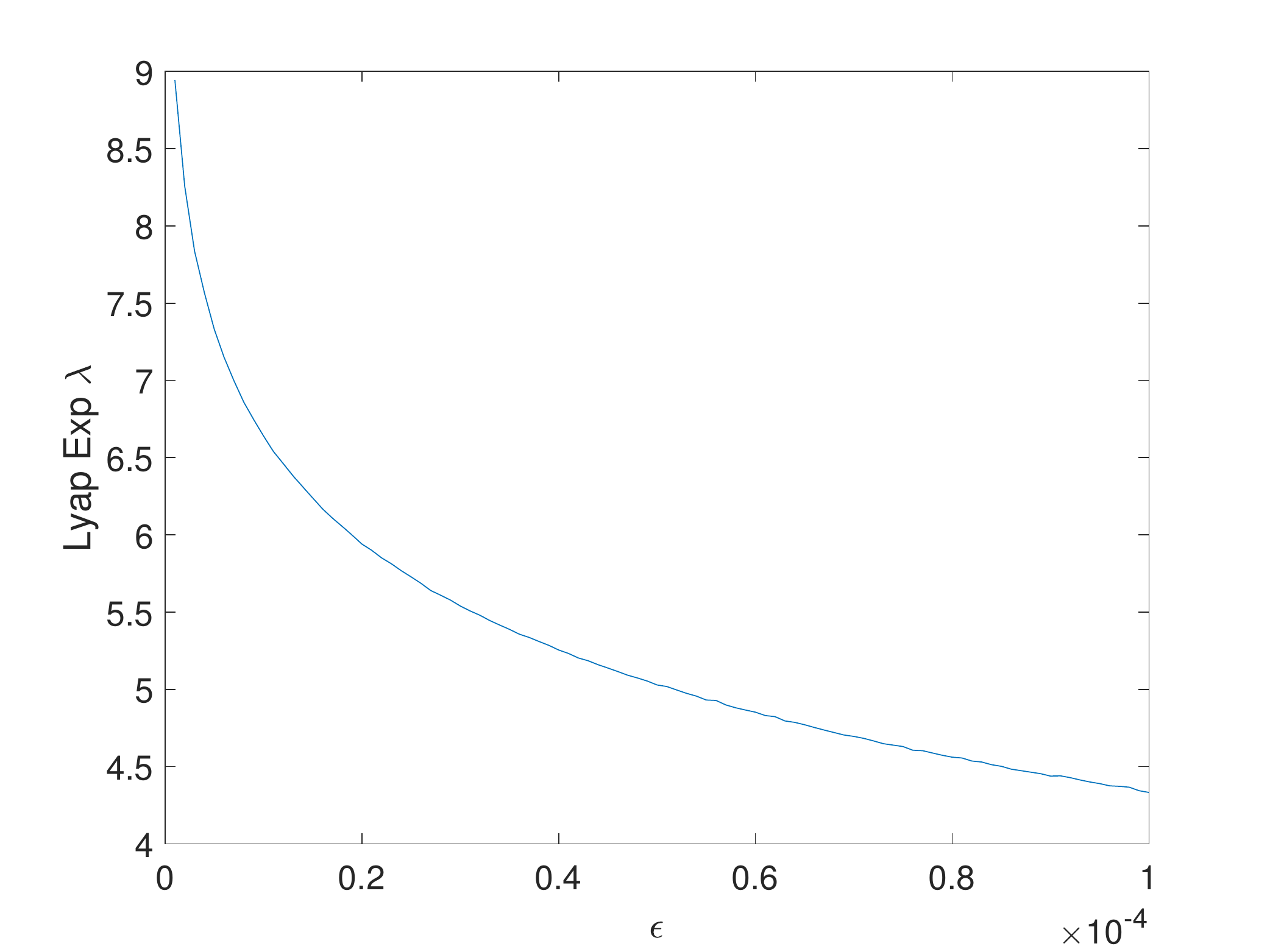}}
	\hfill
	\subfigure[$\lambda(x)-\ln(\eta/\epsilon)$ against $\epsilon$\label{fig_LyapExp_Matyas_eps_b}]{\includegraphics[width=0.45\linewidth]{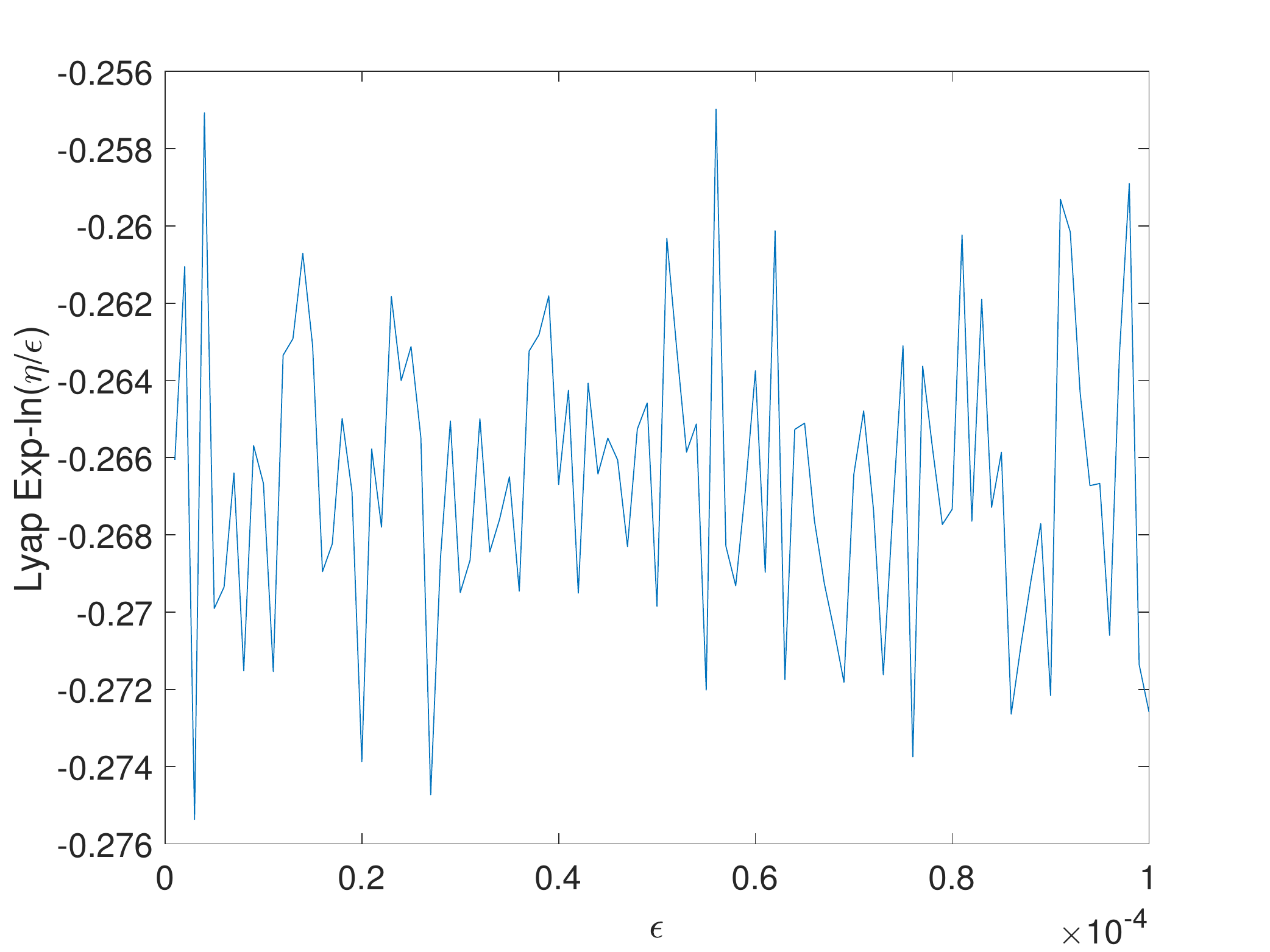}}
	\hfill
	\caption{Dependence of $\lambda(x)$ on $\epsilon$ ($\eta=0.1$)\label{fig_LyapExp_Matyas_eps}}
\end{figure}

\subsection{Stochasticity of deterministic gradient descent with momentum}
\label{sec_withMomentum}
Just for illustrations, consider $f_0=x^2/2$, $f_{1,\epsilon}(x)=\epsilon\sin(x/\epsilon)$, and two common ways for adding momentum:

\subsubsection{Heavy ball}
The iteration is \citep{polyak1964some}
$v_{n+1}=\gamma y_n-\eta\nabla f(x_n), ~
x_{n+1}=x_n+v_{n+1}$, 
with $v_0=0$. See the stochasticity of $x$ in Fig.\ref{fig_heavyBall}.
\begin{figure}[H]
	\centering
	\hfill
	\subfigure[\footnotesize{Evolution of an ensemble}]{\includegraphics[width=0.3\linewidth]{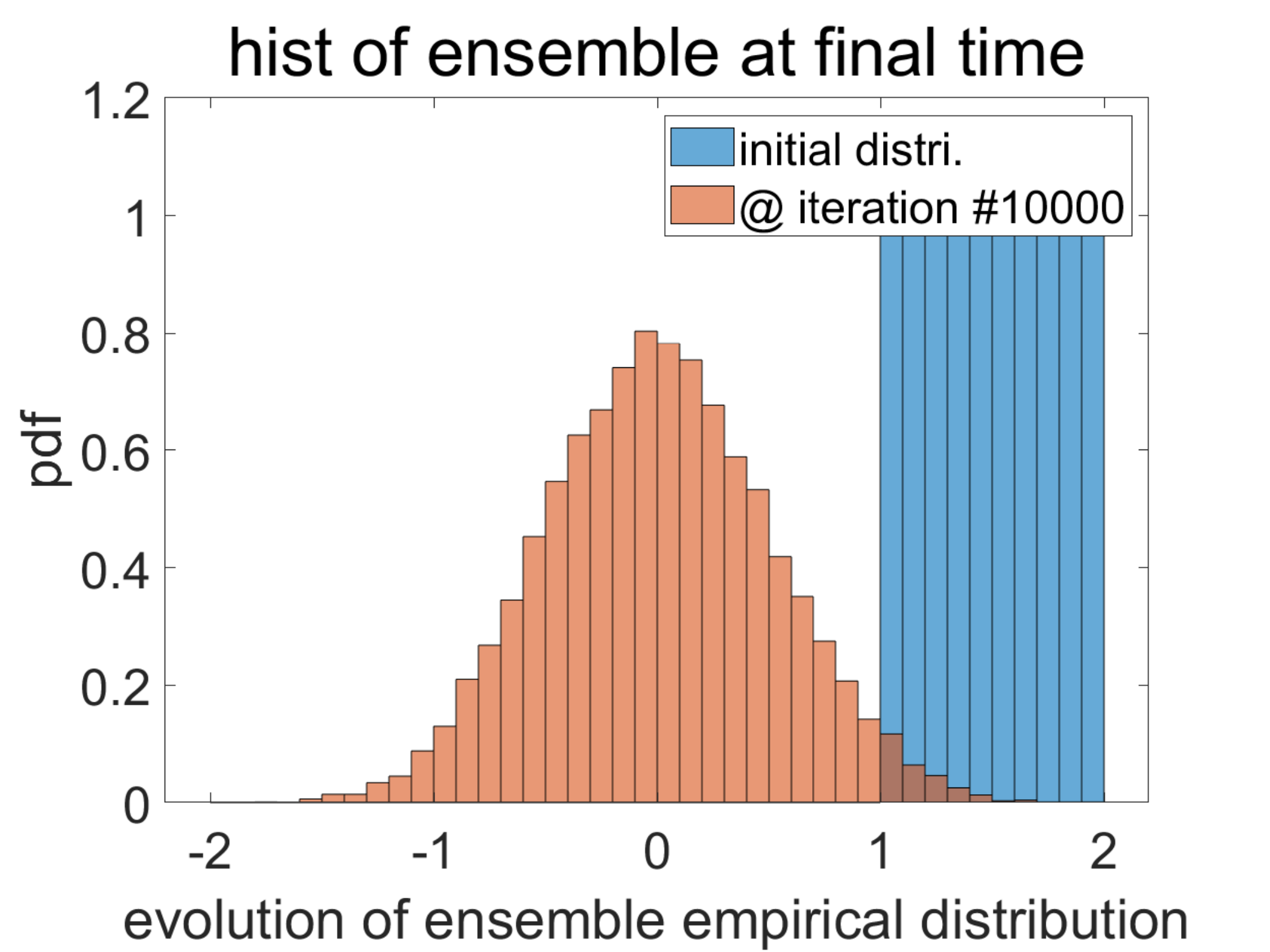}}
	\hfill
	\subfigure[\footnotesize{Empirical distrib. of an orbit}]{\includegraphics[width=0.3\linewidth]{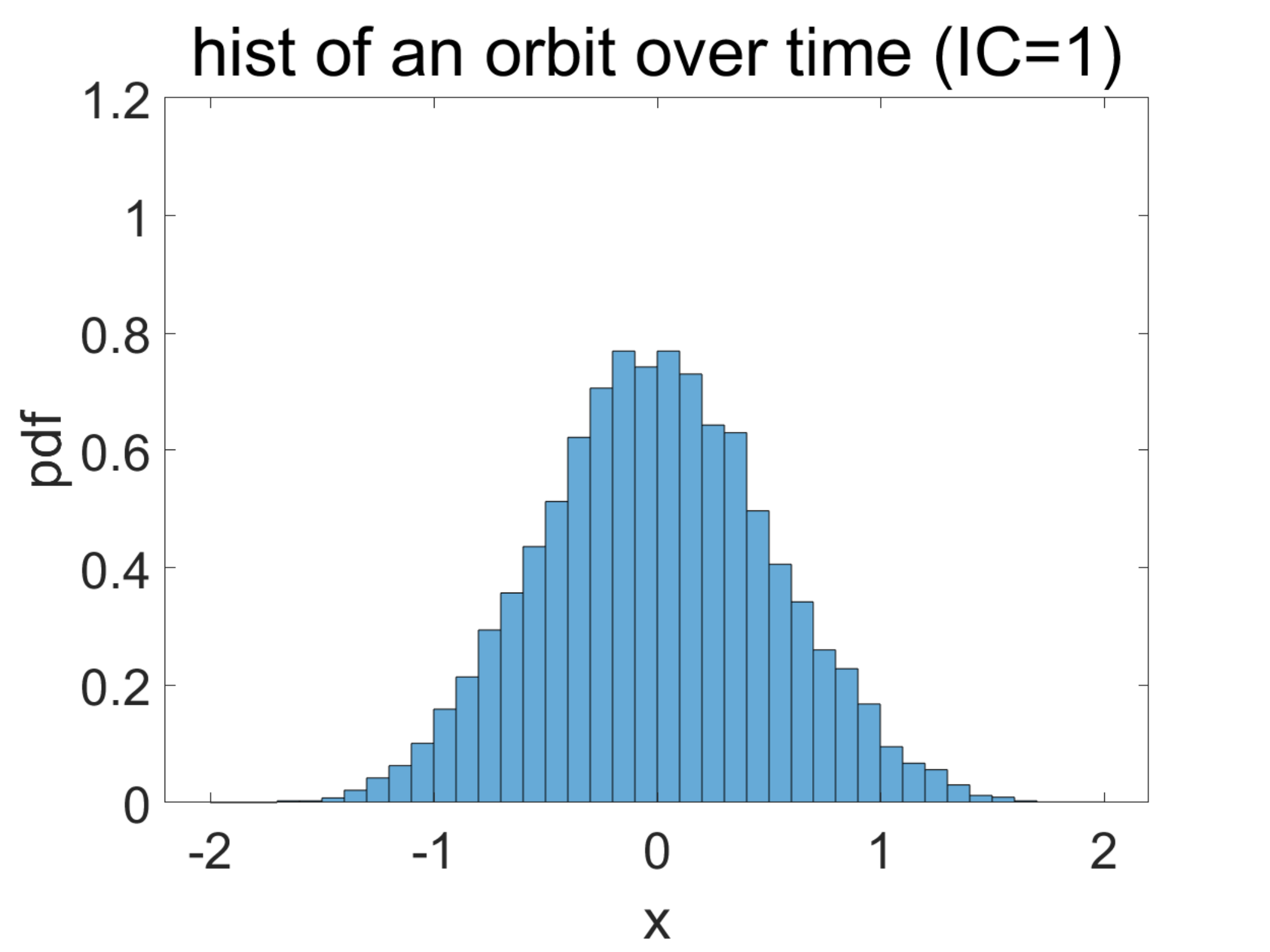}}
	\hfill
	\subfigure[\footnotesize{Iterations in an orbit}]{\includegraphics[width=0.3\linewidth]{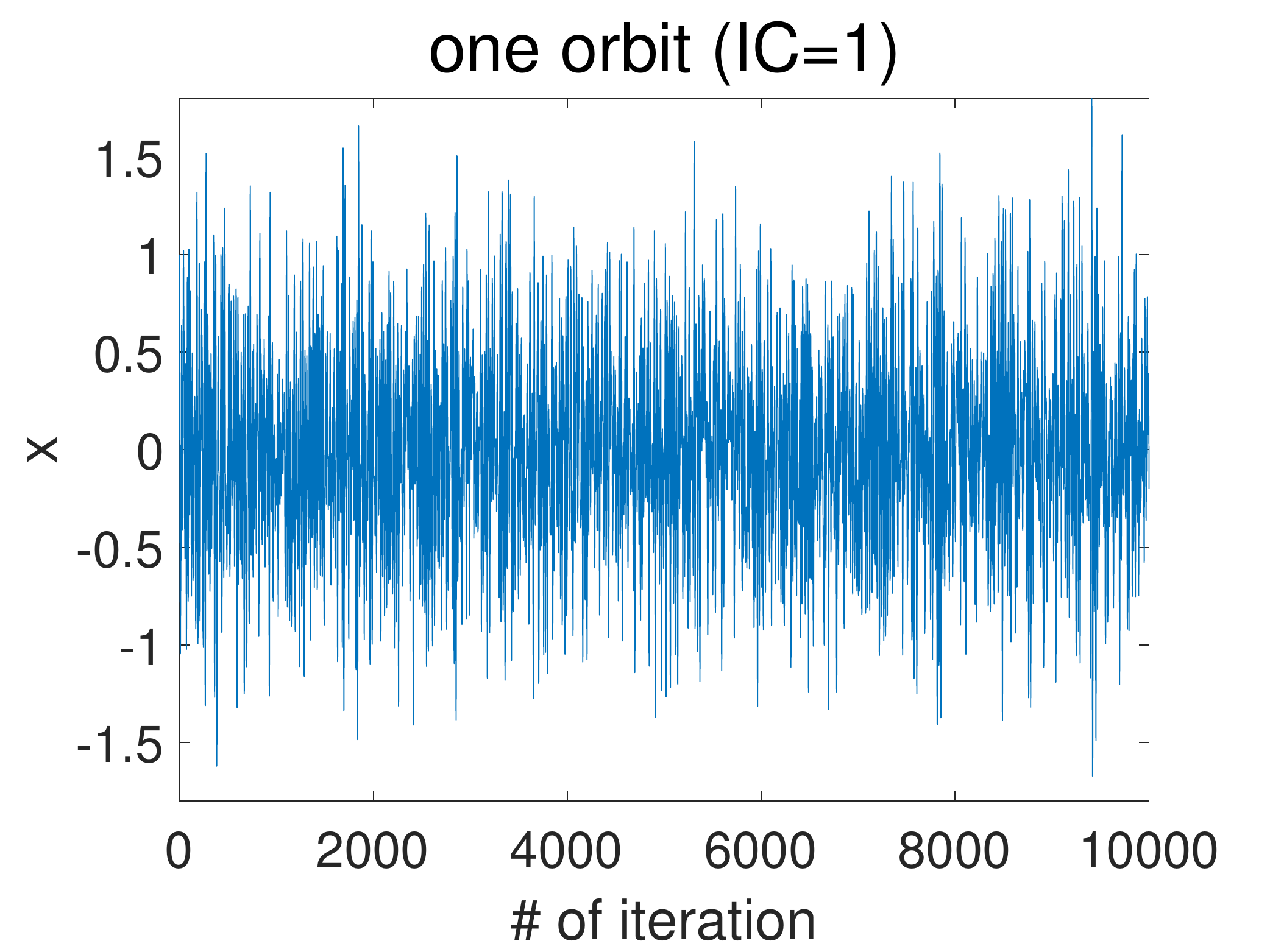}}
	\hfill
	\caption{Heavy ball experiment. $\eta=0.01$, $\epsilon=0.0001$, and $\gamma=0.9$.}
	\label{fig_heavyBall}
\end{figure}

\subsubsection{Nesterov Accelerated Gradient for strongly convex function (NAG-SC)}
The iteration is \citep{nesterov2013introductory}
$
y_{k+1}=x_k-\eta\nabla f(x_k), ~
x_{k+1}=y_{k+1}+c(y_{k+1}-y_k)$,
with $y_0=x_0$. $c=\frac{1-\sqrt{\mu \eta}}{1+\sqrt{\mu \eta}}$ where $\mu$ is supposed to be the strong convexity constant; we chose $\mu$ to be that for $f_0$, in this case $\mu=1$. See the stochasticity of $x$ in Fig. \ref{fig_NAGSC}. The smaller variance is due a different scaling for relating $\eta$ to a timestep in continuous time (see e.g., \cite{su2014differential}).
\begin{figure}[H]
	\centering
	\hfill
	\subfigure[\footnotesize{Evolution of an ensemble}]{\includegraphics[width=0.3\linewidth]{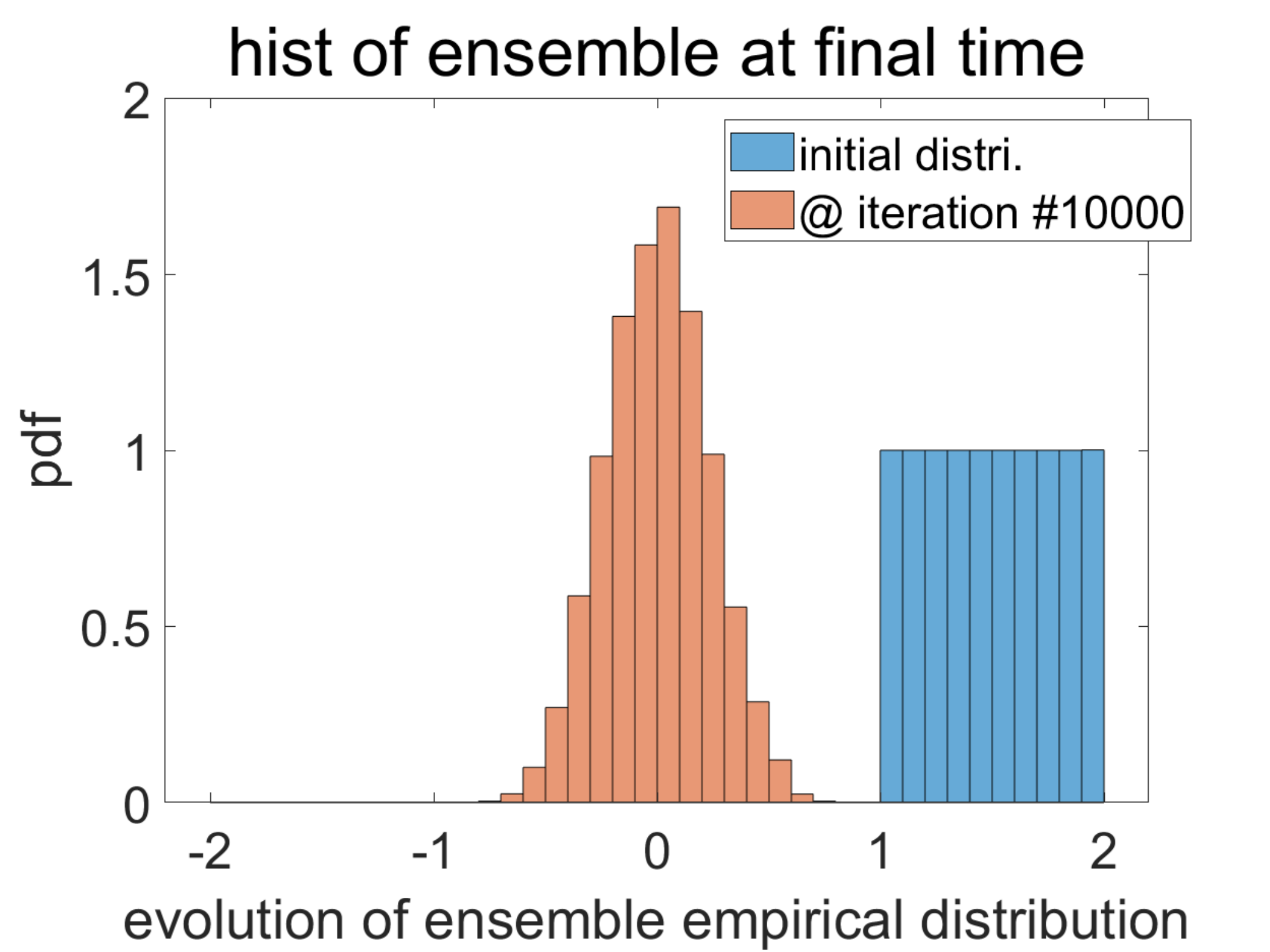}}
	\hfill
	\subfigure[\footnotesize{Empirical distrib. of an orbit}]{\includegraphics[width=0.3\linewidth]{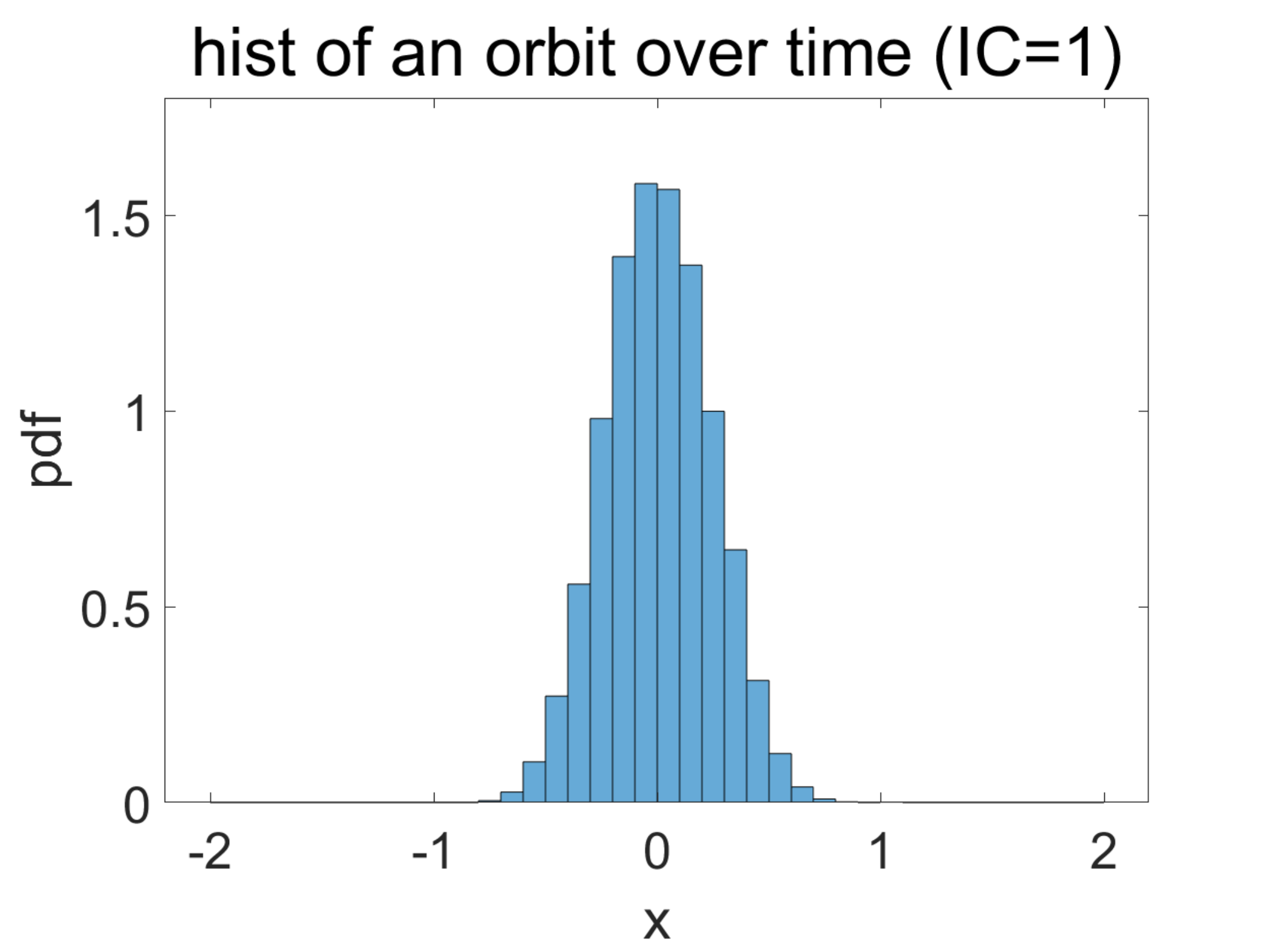}}
	\hfill
	\subfigure[\footnotesize{Iterations in an orbit}]{\includegraphics[width=0.3\linewidth]{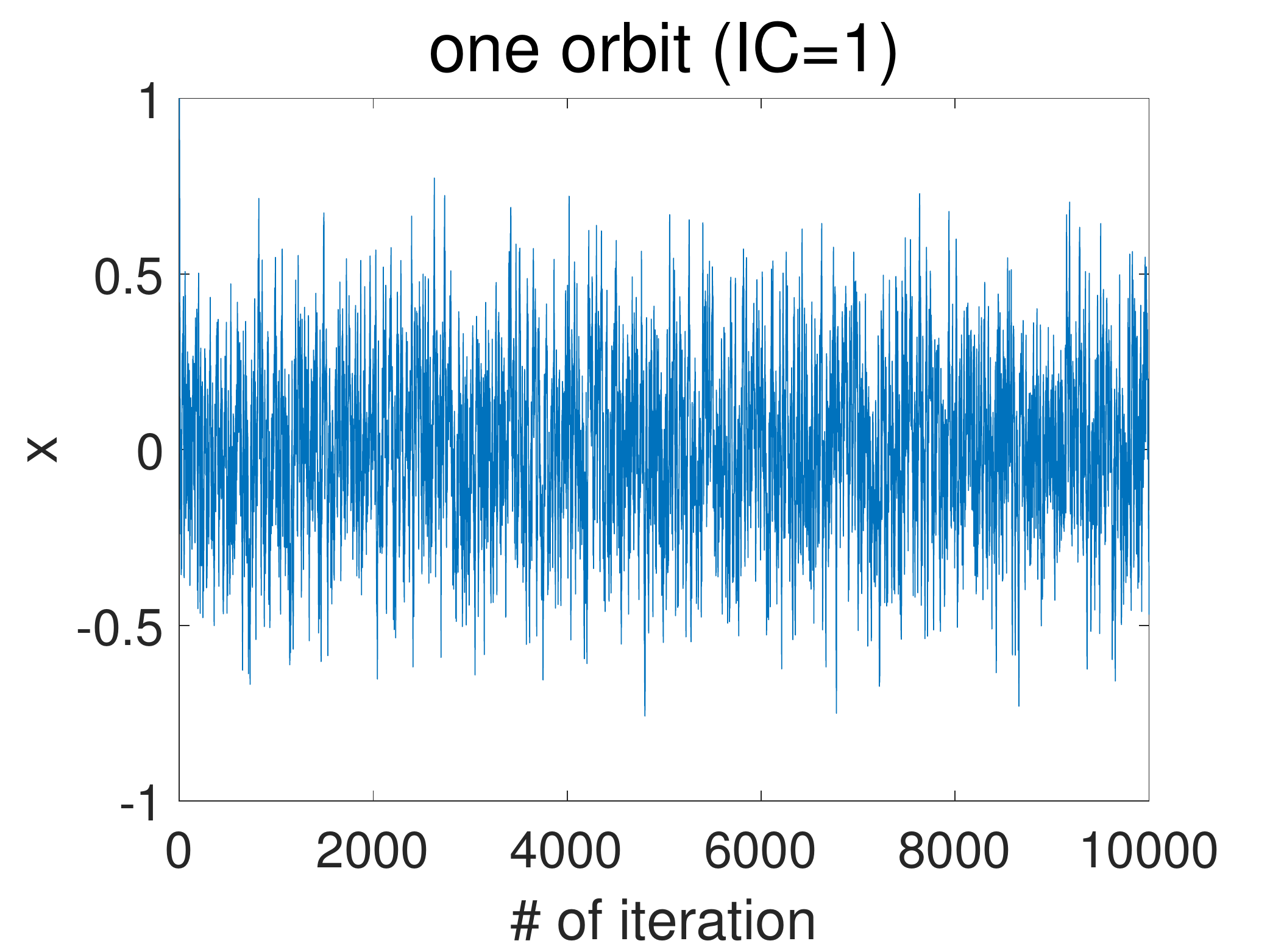}}
	\hfill
	\caption{NAG-SC experiment. $\eta=0.01$, $\epsilon=0.0001$.}
	\label{fig_NAGSC}
\end{figure}

\subsection{The nonconvex $f_0$ dichotomy: to escape or not to escape macroscopic potential well created by $f_0$?}
\label{sec_NT_nonvonvex}

What will happen when $f_0$ is nonconvex but multimodal? Both escapes from $f_0$'s local minima (and the corresponding potential wells) and nonescapes will be possible. Roughly speaking, it depends on how strong $f_{1,\epsilon}$ is when compared with $f_0$. Rmk.\ref{rmk_nonconvex} provided some discussions. To elaborate more, we first make a general remark:
\begin{remark}
	\label{rmk_nonconvex_chaos}
	As theoretically shown, especially in section \ref{sec_LYC}, \ref{sec_PD} and \ref{sec_LyapExp}, we see that chaos can be just a localized small-scale behavior, thus independent of the convexity of $f_0$. However, the limiting distribution of the deterministic map is a global property and it should depend on the global behavior of $f_0$. As explained in Rmk.\ref{rmk_nonconvex}, when $f_0$ is not convex, it can happen that an orbit cannot jump between potential wells, and then unique ergodicity is lost in the sense that multiple ergodic foliations appear and respectively localize to individual potential wells. In this case, the limiting statistics is no longer unique. However, every connected subset of the support of an invariant distribution of the stochastic map can be an ergodic foliation, so if we regard the invariant distributions of the deterministic map and the stochastic map as convex combinations of the invariant distributions in each potential well, the conclusion in Theorem \ref{thm_sameLimitStats} still stands.
\end{remark}
Then we demonstrate two possible outcomes concretely in numerical experiments. We will use the same test function, which is $f_0(x)=k(x^2-1)^2$ and $f_{1,\epsilon}(x)=\epsilon\sin(x/\epsilon)$. $x>0$ and $x<0$ are two potential wells of $f_0$.

We already obtained a bound on the relative strength between $f_0$ and $f_{1,\epsilon}$; it is $k_{critical}=\frac{3\sqrt{3}}{8}$ for whether the point can jump from one potential well to another. Fig.'s \ref{fig_nonconvexStrongF1} and \ref{fig_nonconvexWeakF1} respectively illustrates the long-time statistics of GD when $k=0.05 < k_{critical}$ and $k=5 < k_{critical}$. Results are consistent with theoretical predictions.

\begin{figure}[H]
	\centering
	\subfigure[Invariant distribution]{\includegraphics[width=0.45\linewidth]{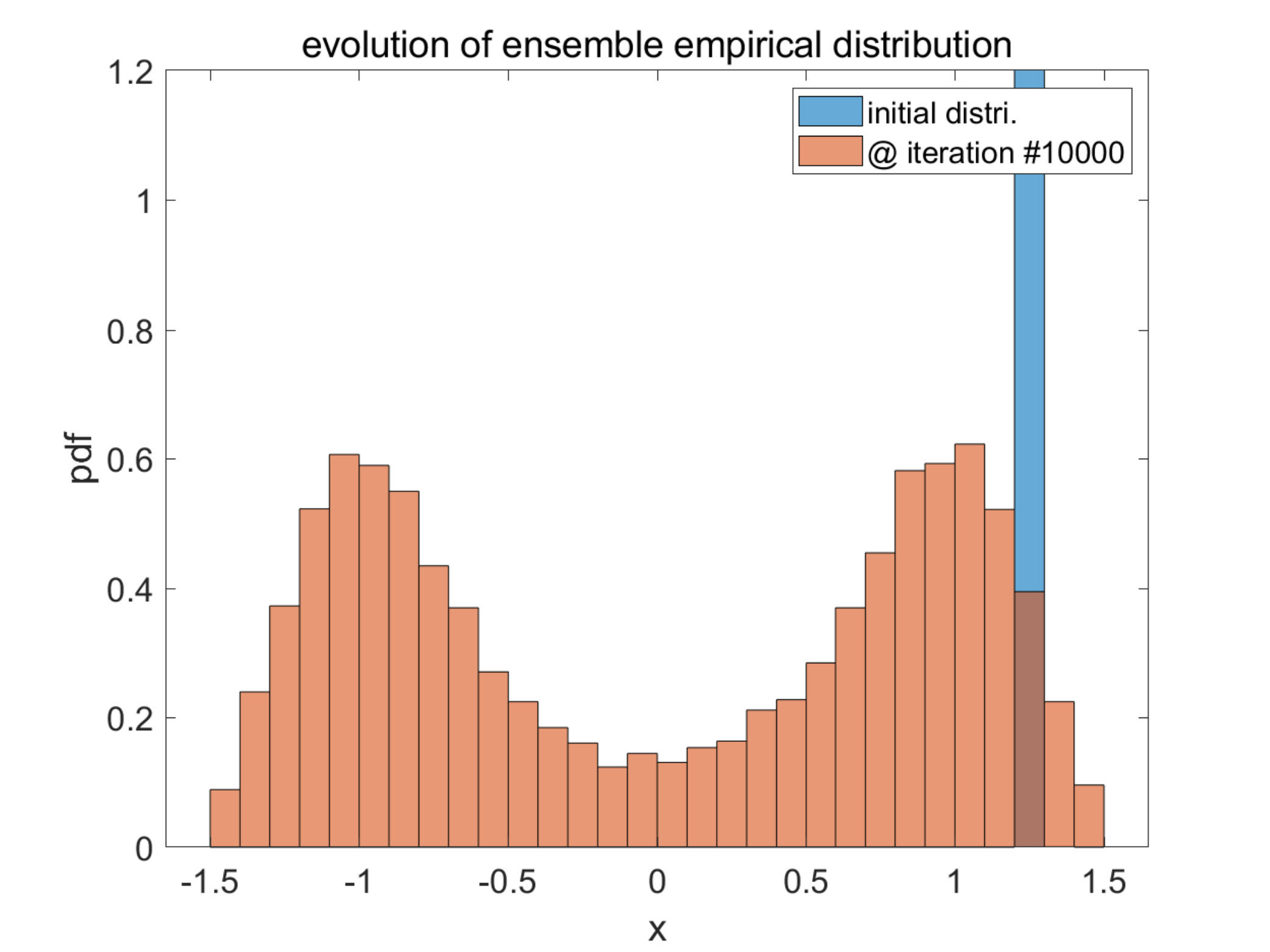}}
	\hfill
	\subfigure[Histogram of a trajectory]{\includegraphics[width=0.45\linewidth]{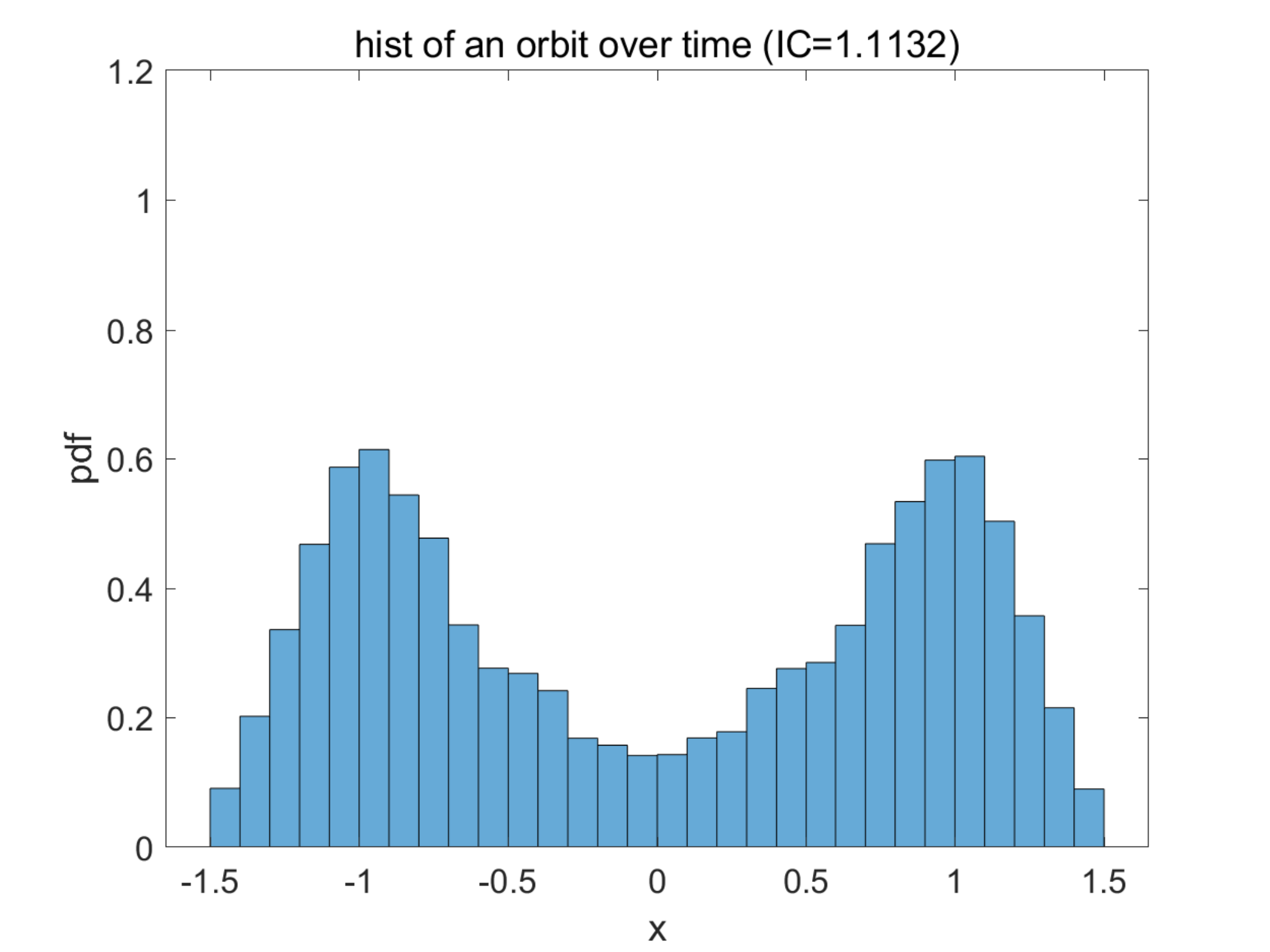}}
	\hfill
	\subfigure[Histogram of another trajectory]{\includegraphics[width=0.45\linewidth]{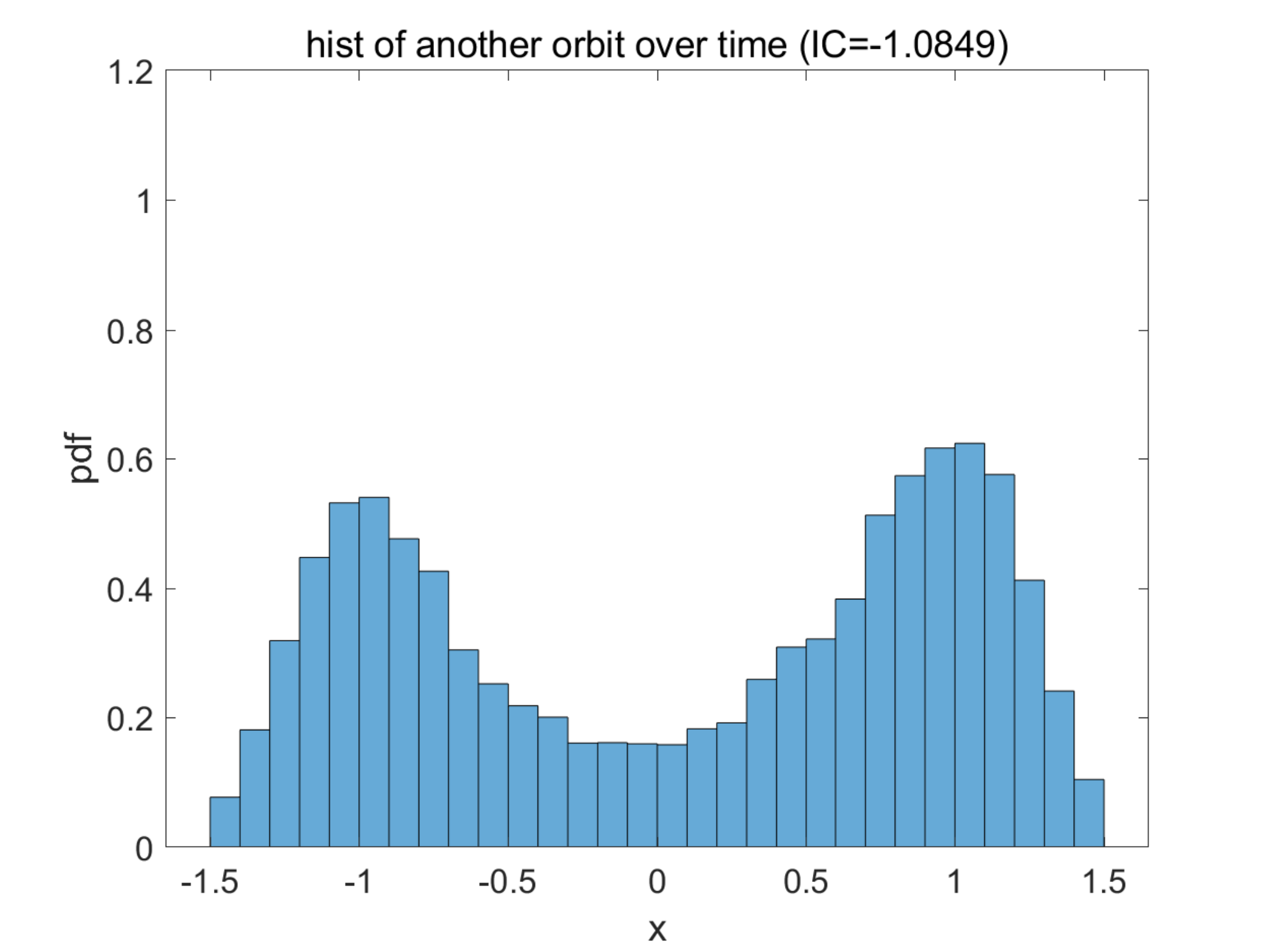}}
	\hfill
	\subfigure[One trajectory]{\includegraphics[width=0.45\linewidth]{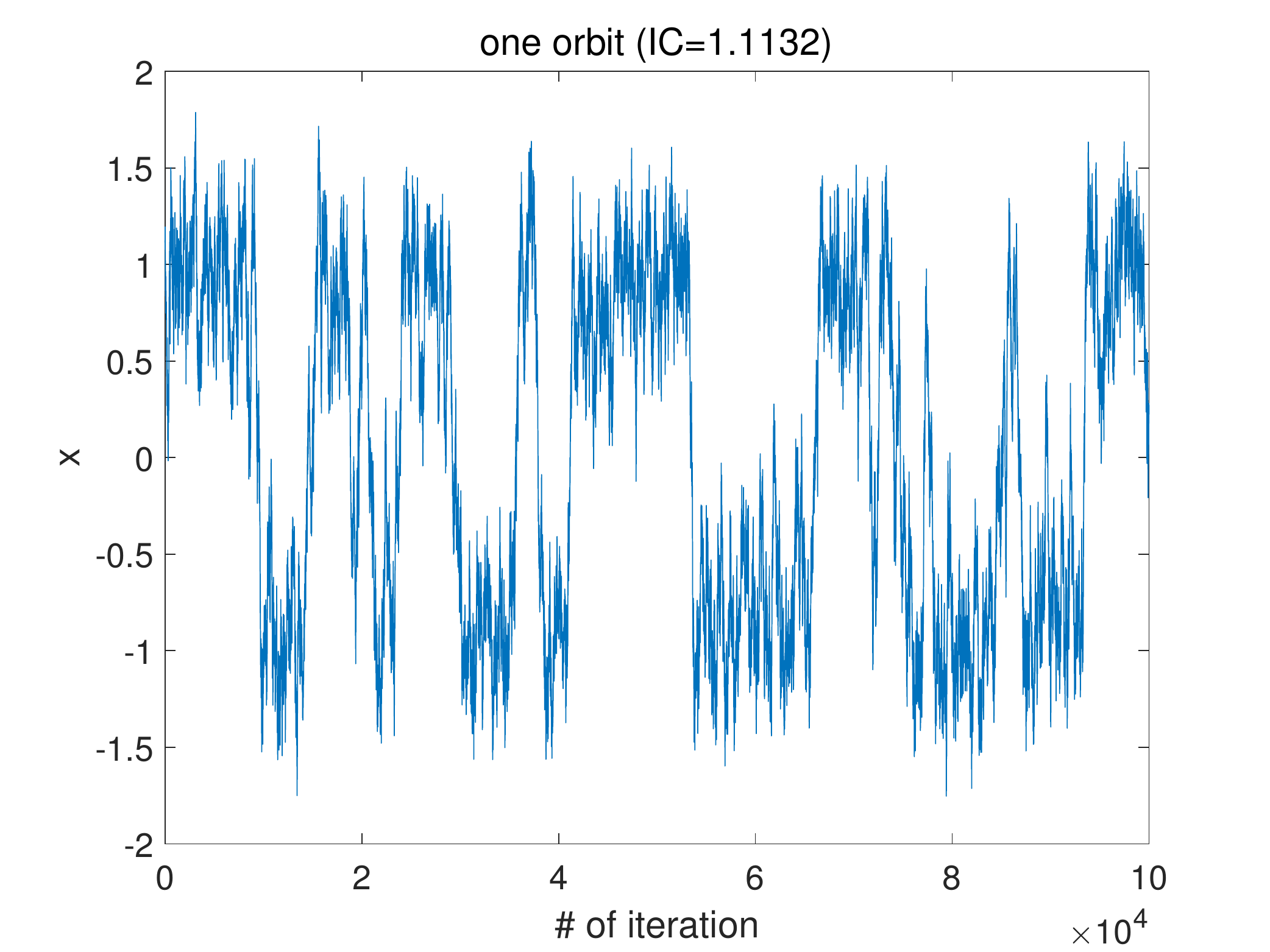}}
	
	\caption{A non-convex mixing example. The initial condition is concentrated in the right potential well but barrier crossing happens. $k=0.02$, $\eta=0.05$ and $\epsilon=0.0001$.}
	\label{fig_nonconvexStrongF1}
\end{figure}
\begin{figure}[H]
	\label{NT_nonmixing}
	\centering
	\subfigure[One of the invariant distributions]{\includegraphics[width=0.45\linewidth]{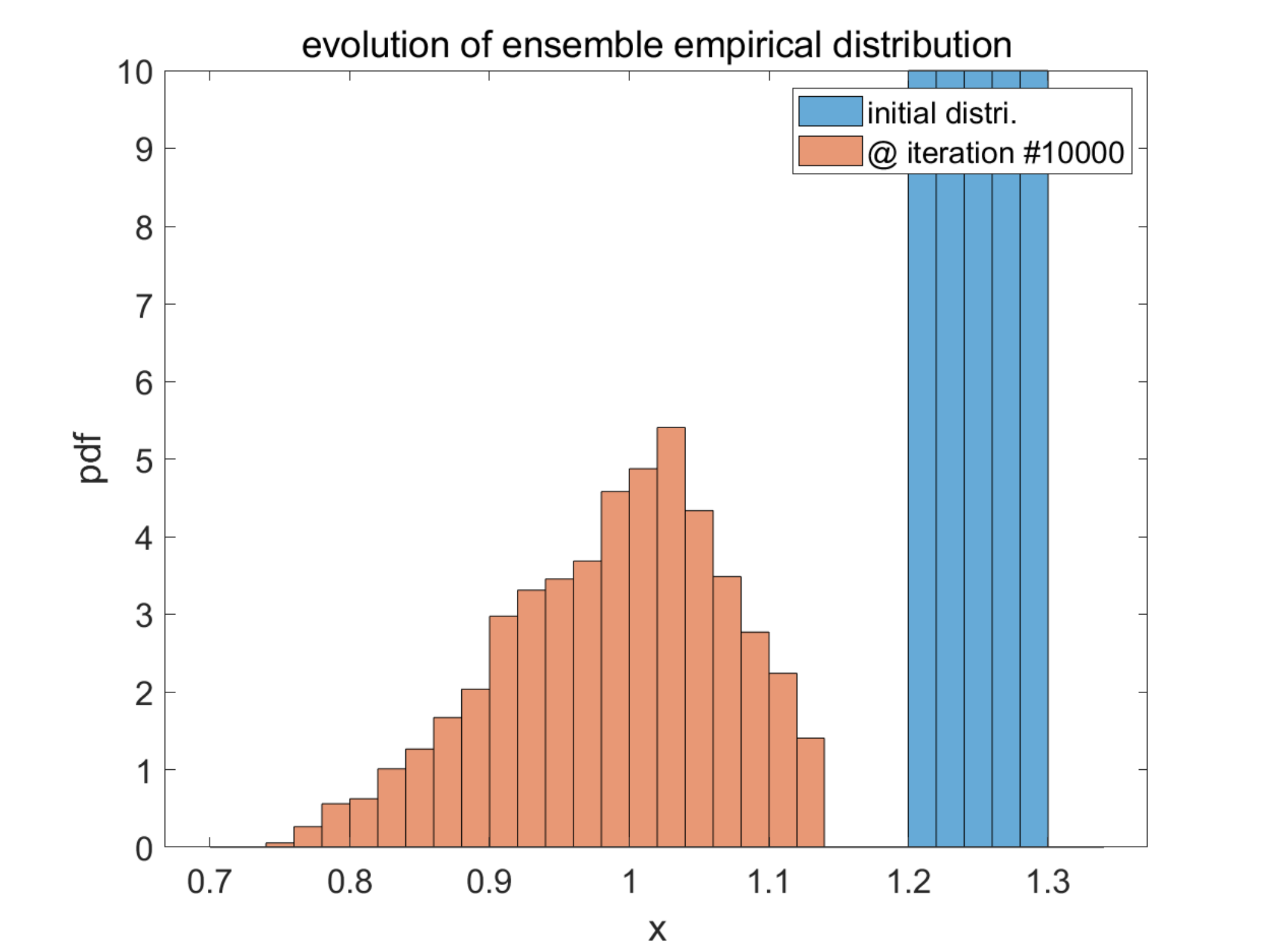}}
	\hfill
	\subfigure[Histogram of a trajectory, starting in the right well]{\includegraphics[width=0.45\linewidth]{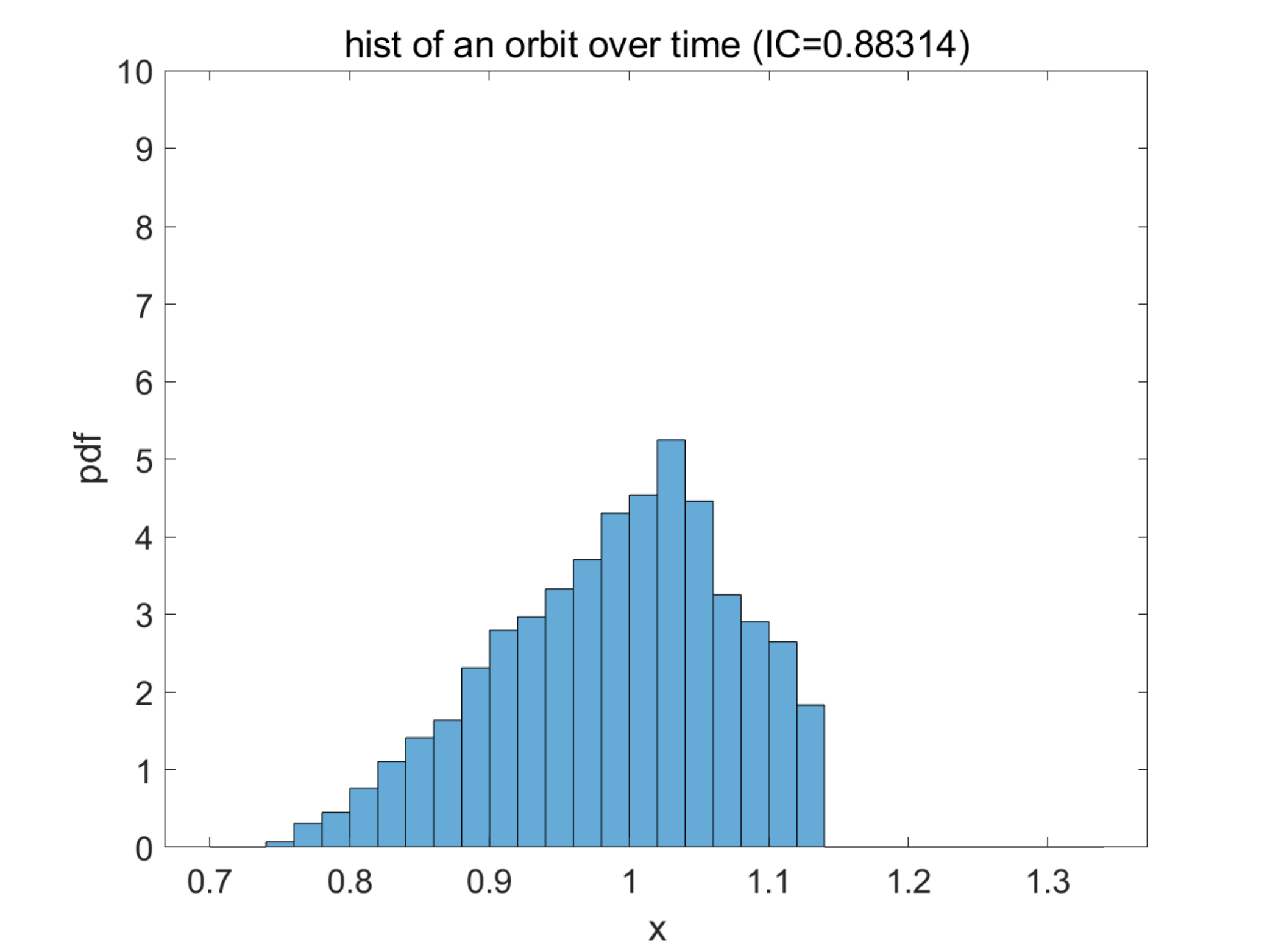}}
	\hfill
	\subfigure[Histogram of another trajectory, starting in the left well]{\includegraphics[width=0.45\linewidth]{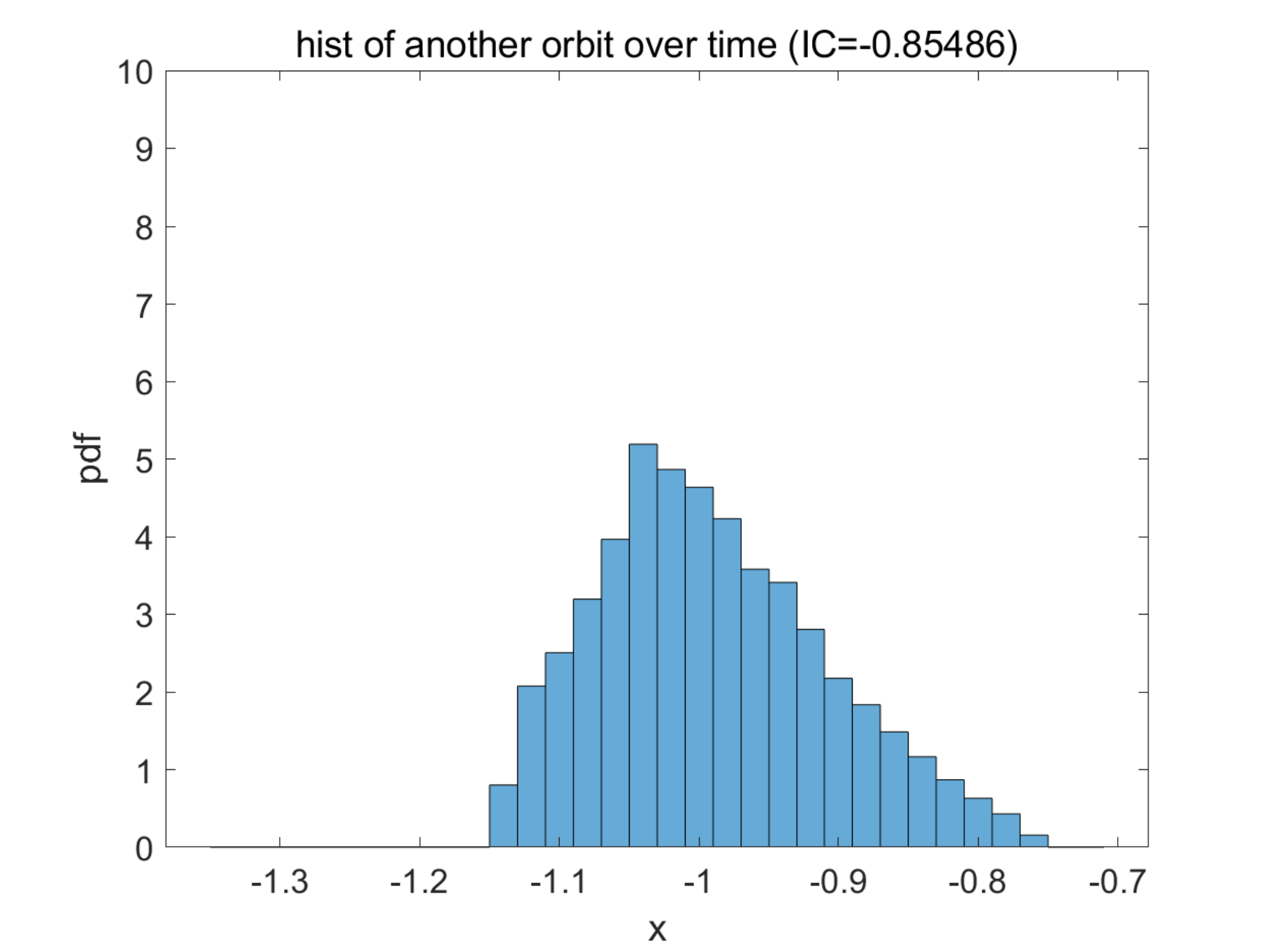}}
	\hfill
	\subfigure[Landscape of $f_0$]{\includegraphics[width=0.45\linewidth]{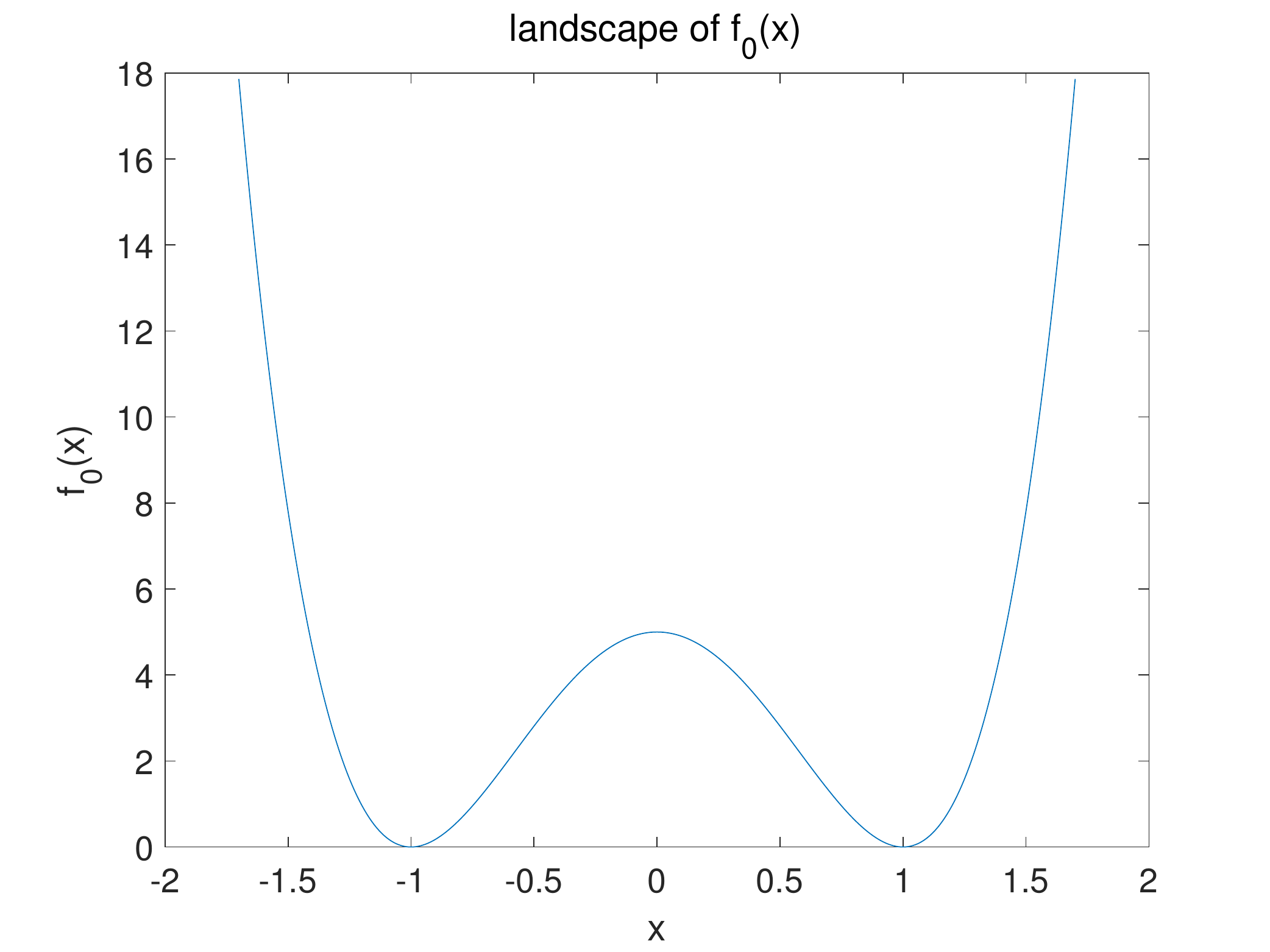}}
	\caption{A non-convex and non-mixing example. The initial condition is concentrated in the right potential well but no orbit can cross the potential barrier at $x=0$. There is at least another invariant distribution in the left potential well due to symmetry. But if one restricts to the foliation within the potential well, convergence to a statistical limit still occurs. $k=5$, $\eta=0.05$ and $\epsilon=0.0001$.}
	\label{fig_nonconvexWeakF1}
\end{figure}

Interestingly, we observe that Rmk.\ref{rmk_invDis_Taylor} still holds even though the orbit is confined in one potential well if $k$ is large. As $f''(1)>0$, the function is strongly convex in a neighborhood of $x=1$, and rescaled Gibbs can be approximated by a Gaussian density of $\exp(-16k(x-1)^2)/Z$. Fig.\ref{fig_nonconvexGaussianApprx} shows that the ensemble empirical distribution indeed converges to this prediction as $\eta\rightarrow 0$.

\begin{figure}[H]
	\label{NT_nonconvex_normal}
	\centering
	\hfill
	\subfigure[$\eta=0.05$]{\includegraphics[width=0.22\linewidth]{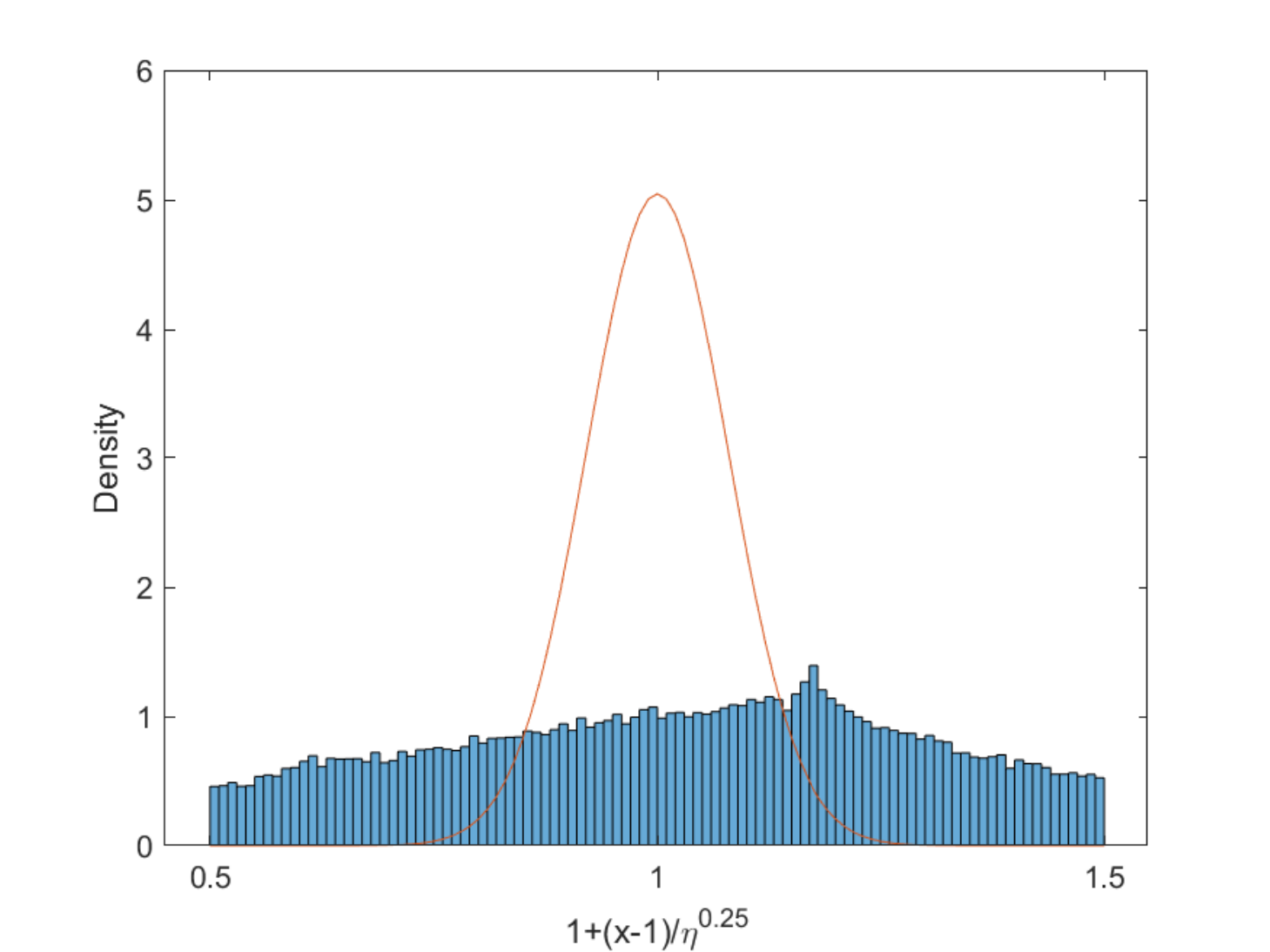}}
	\hfill
	\subfigure[$\eta=0.02$]{\includegraphics[width=0.22\linewidth]{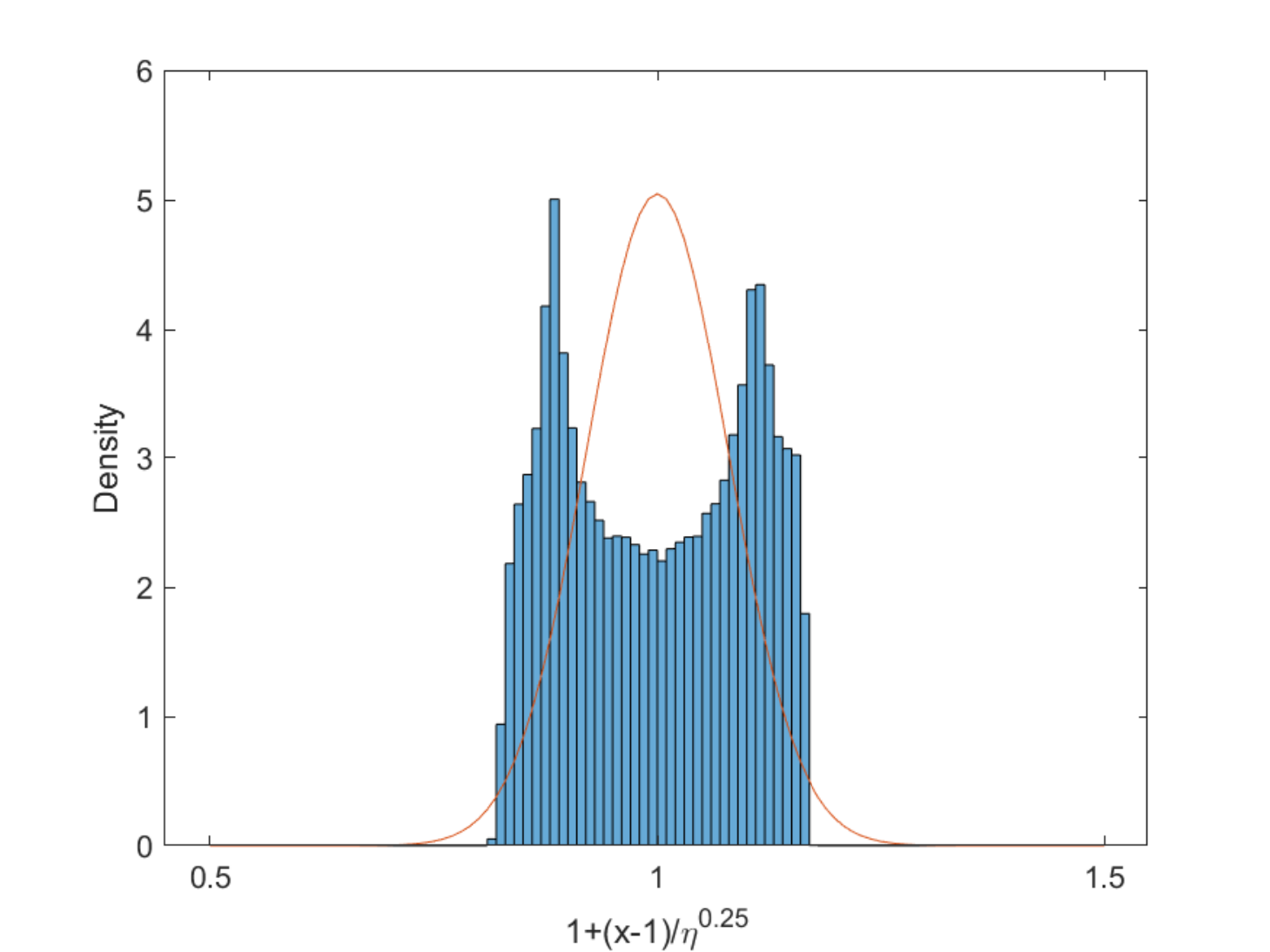}}
	\hfill
	\subfigure[$\eta=0.01$]{\includegraphics[width=0.22\linewidth]{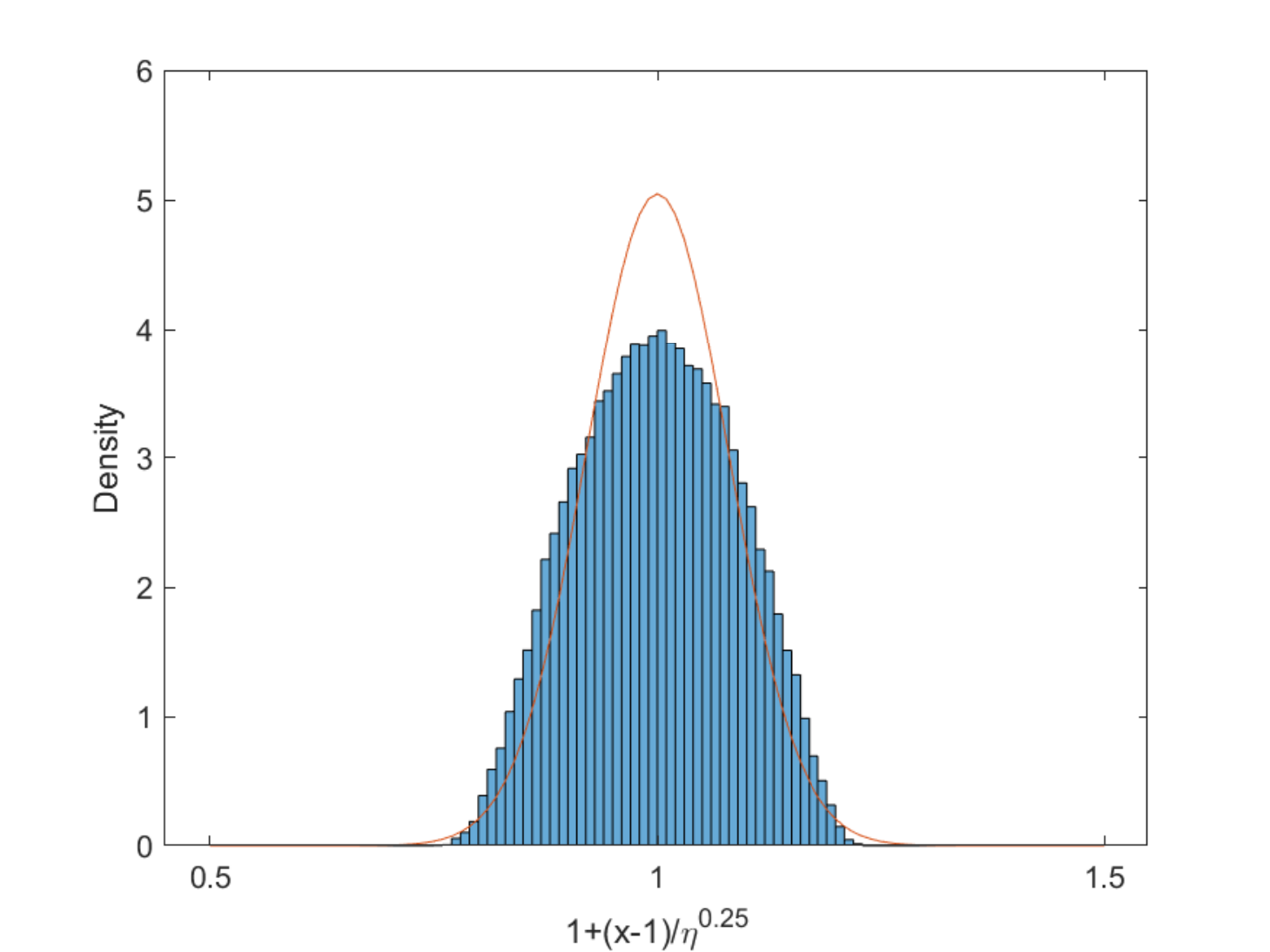}}
	\hfill
	\subfigure[$\eta=0.001$]{\includegraphics[width=0.22\linewidth]{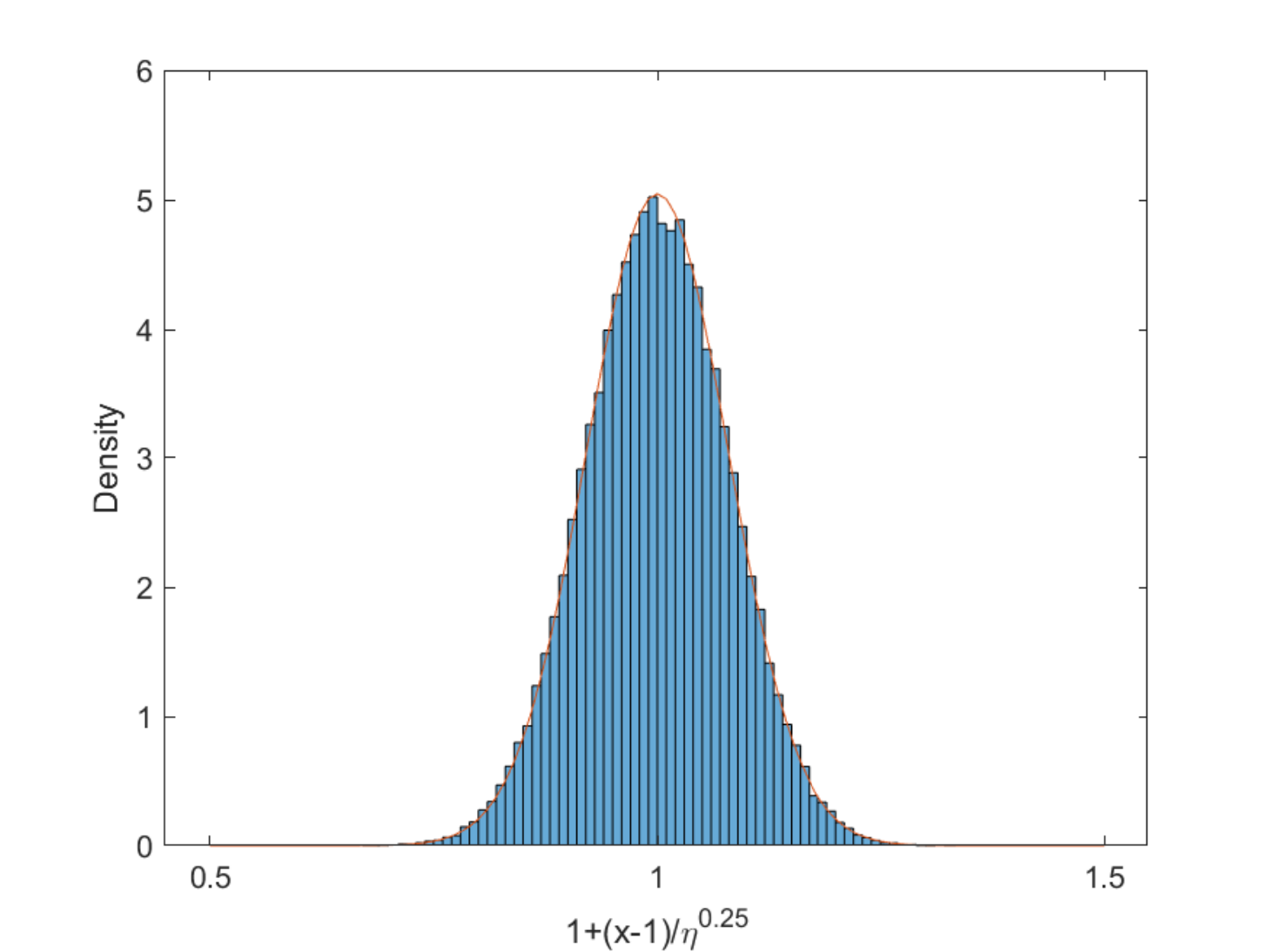}}
	\caption{Empirical distributions of a sufficiently evolved ensemble for different $\eta$ values when $k=5$. The red line is the theoretical approximation in Rmk.\ref{rmk_invDis_Taylor}. Note x-axis has been zoomed in via $x \mapsto 1+(x-1)/\sqrt\eta$ for focusing on the essential part.}
	\label{fig_nonconvexGaussianApprx}
\end{figure}

\end{document}